\title{Distributed Non-Convex Optimization with One-Bit Compressors on Heterogeneous Data:\\ 
Efficient and Resilient Algorithms}
\author{ 
        Ming Xiang\\
	Department of Electrical and Computer Engineering\\
	Northeastern University\\
	Boston, MA 02115 \\
	\texttt{xiang.mi@northeastern.edu} \\
	%% examples of more authors
	\And
        Lili Su\\
	Department of Electrical and Computer Engineering\\
	Northeastern University\\
	Boston, MA 02115  \\
	\texttt{l.su@northeastern.edu} \\
}
\begin{document}
\maketitle

\begin{abstract}
% \ls{Analyze level-up-down method: Byzantine-free, non-private, and full-batch gradient. }

% \ls{bounded gradient justification: Mentioned in the SCAFFOLD paper, \cite{li2019fair} and \cite{yu2019parallel} also assumed bounded gradients. You also provided a table summarized the so-called B-G assumptions.}
% \ls{Noise distributions: 
% \begin{itemize}
%     \item MIFA paper assumed bounded stochastic gradient noise for non-convex objectives. 
% \end{itemize}
%  } 
Federated Learning (FL) is a nascent decentralized 
% privacy-preserving 
learning framework under which a massive collection of heterogeneous clients collaboratively train a model without revealing their local data. 
%local data of participating clients is unrevealed throughout model training. 
%Scarce communication is a key bottleneck of system scalability. Besides, a FL system is often deployed in a harsh environment, resulting in the clients vulnerable to privacy leakage and Byzantine attacks. 
Scarce communication, privacy leakage, and Byzantine attacks are
%is 
the key bottlenecks of system scalability. 
In this paper, we focus on communication-efficient distributed (stochastic) gradient descent for non-convex optimization, a driving force of FL. We propose two algorithms, named {\em Adaptive Stochastic Sign SGD (Ada-StoSign)} and {\em $\beta$-Stochastic Sign SGD ($\beta$-StoSign)}, each of which compresses the local gradients into bit vectors. 

To handle unbounded gradients, Ada-StoSign uses a novel norm tracking function that adaptively adjusts a coarse estimation on the $\ell_{\infty}$ of the local gradients - a key parameter used in gradient compression. 
We show that Ada-StoSign converges in expectation with a rate $O(\log T/\sqrt{T} + 1/\sqrt{M})$, where $M$ is the number of clients. To the best of our knowledge, when $M$ is sufficiently large, Ada-StoSign outperforms the state-of-the-art sign-based method whose convergence rate is $O(T^{-1/4})$. 
Under bounded gradient assumption, $\beta$-StoSign achieves quantifiable Byzantine resilience and privacy assurances, and works with partial client participation and mini-batch gradients which could be unbounded. 
%\ls{How many Byzantine agents we can tolerate? How to describe the privacy?}
%To the best of our knowledge, we are the first to provide explicit convergence rates for sign-based methods in the presence of non-IID data with rate $o(T^{-1/4})$.   
% In this paper, we propose $\beta$-stochastic sign SGD, whose gradient compressor encodes clients' gradient messages in stochastic sign bits and achieves differentially private signed communication subject to a privacy budget $\beta>0$. Moreover, our compressor converges theoretically and empirically in the presence of mobile static and adaptive Byzantine adversaries under partial participating clients with non-IID data. Our analysis is not limited to bounded stochastic gradients and works for even unbounded ones. 
We corroborate and complement our theories by experiments on MNIST and CIFAR-10 datasets. 
\end{abstract}

\section{Introduction}
Federated Learning (FL) is a nascent learning framework that enables heterogeneous clients, under the coordination of a parameter server (PS), to collectively train a model without disclosing their raw data \cite{mcmahan2016federated,kairouz2021advances}. 
Expensive communication overhead and non-IID local data are two defining characteristics of FL. 
Moreover, a FL system is often implemented in harsh environment -- leaving the clients vulnerable to privacy leakage and Byzantine faults \cite{1996distributedlynch,chen2017distributed,blanchard2017machine,xie2018generalized,xie2019zeno,xie2020fall}.

A variety of communication-saving techniques have been introduced. %, including periodic averaging \cite{mcmahan2016federated}, large mini-batch sizes \cite{lin2018don}, and gradient compressors \cite{xu2020compressed,alistarh2017qsgd,bernstein2018signsgd,bernstein2018signsgdmajority,jin2020stochastic,Safaryan2021,2021datalens}.   
%Since its debut in 2016, FL has received a significant amount of attention from both academia and industry. 
%, as it meets the more stringent regulations on personal data, e.g., the principle of focused collection proposed by the 2012 White House report on privacy of consumer data, the California Consumer Privacy Act, the EU General Data Protection Regulation \ls{focused collection?}\mx{added focused collection, CCPA}, and the pressing need for processing the proliferated data using mobile devices\ls{?}\mx{edge computing->mobile}. 
However, challenges remain. 
% \noindent{\bf Communication Efficiency.} Communication is a scare resource in FL \cite{mcmahan2016federated,kairouz2021advances}. Numerous efforts have been made to improve the provable communication efficiency of FL. 
Specifically, FedAvg -- the most widely-adopted FL algorithm -- 
%Algorithm \ref{alg: alg 1}s 
saves communication via performing multiple local updates at the client side \cite{mcmahan2016federated,wang2019adaptive,stich2018local,li2020federated}. 
%However, most of the analysis is derived under technical convenient assumptions that hide the impacts of the number of local updates on the convergence and model accuracy \ls{[][]?} with a few exception \cite{su2021achieving}, the latter of which focuses on non-parametric regression. 
Large mini-batch size is another communication-saving technique yet its performance 
%the resulting algorithms 
turns out be often inferior to FedAvg \cite{lin2018don}. 
Due to full gradients/updates transmission, both FedAvg and large mini-batch are still communication expensive. 
%Unfortunately, these strategies may suffer from the heterogeneous data distributions \cite{khaled2020tighter,wang2019adaptive}\ls{mention how they handle the data heterogeneity and describe how or explain why their methods do not work}, and do not address the fundamental communication overhead when exchanging parameters. 
Gradient compressors \cite{xu2020compressed} take the physical layer of communication into account and are used to reduce the number of bits used in encoding local gradient information.  
Quantized SGD (QSGD) \cite{alistarh2017qsgd} is a lossy compressor with provable trade-off between the
number of bits communicated per iteration with the variance added to the process. 
% Gradient quantization compressors are intuitively sound \cite{alistarh2017qsgd,ternary2017}, recent results show that biased gradient compression schemes outperform the unbiased ones \cite{bernstein2018signsgd,zaheer2018adaptive,liu2018signsgd,safaryan2021stochastic,Safaryan2021,vogels2020}.
The performance of QSGD is shown to be inferior to simple sign-based compressor such as SignSGD \cite{bernstein2018signsgd,bernstein2018signsgdmajority} for IID data, which
%, based on the sign, 
compresses a local gradient into a bit vector based on the sign of each coordinate.  %as a prominent yet simplest gradient compressor is federated via majority vote, saving up to 31$\times$ \ls{31$\times$ or 3.1$\times$? check this!} communications per round . 
For IID data, SignSGD \cite{bernstein2018signsgdmajority} also enjoys nice resilience property -- it can tolerate up to 1/2 clients to be Byzantine. Nevertheless, signSGD fails to converge in the presence of non-IID data \cite{safaryan2021stochastic,chen2020distributed} due to its neglection of the gradient magnitude.  
\cite{jin2020stochastic} proposed a simple yet elegant method named stochastic sign SGD and analyzed its correctness on non-IID data. 
Unfortunately, no explicit convergence rate was given and their analysis (e.g.\,\cite[Theorem 6]{jin2020stochastic} ) contains major flaws, and there is no easy fix. 
Moreover, their method requires both the true gradient and the stochastic gradients to be bounded. 
It is worth noting when the local data is non-IID, even if the gradients are bounded, it is impossible for a system to tolerate up to 1/2 clients to be Byzantine \cite{mendes2015multidimensional}.  
\cite{safaryan2021stochastic} proposed a momentum version of sign-based method which converges with a rate $O\pth{T^{-1/4}}$. 
% and characterized the convergence in terms of a newly introduced $\rho$-norm. 
Both of our algorithms converge much faster when the client population is sufficiently large. %with sufficiently large $M$. 

\paragraph{Contributions.} 
In this paper, we consider non-IID data and and focus on communication-efficient distributed (stochastic) gradient descent for non-convex optimization.  
We propose two algorithms, named {\em Adaptive Stochastic Sign SGD (Ada-StoSign)} and {\em $\beta$-Stochastic Sign SGD ($\beta$-StoSign)}, each of which compresses the local gradients into bit vectors. 
To the best of our knowledge, we are the first to provide explicit convergence rates for sign-based methods in the presence of non-IID data. 
\begin{itemize}
\item To handle unbounded gradients, Ada-StoSign uses a novel norm tracking function that adaptively adjusts the a coarse estimation 
on the $\ell_\infty$ norm of the local gradients -- a key parameter used in gradient compression. We show that Ada-StoSign converges in expectation with a rate $O(\log T/\sqrt{T} + 1/\sqrt{M})$, where $M$ is the number of clients. When $T$ is fixed and $M=\Theta(T)$, the rate becomes $O(\log T/\sqrt{T})$ which matches that of the SGD for optimizing nonconvex functions \cite{ghadimi2013stochastic} up to polylog factor. 
%We propose two n Algorithm \ref{alg: alg 1}, a leveling rule is designed to let $B_t$ adaptively track the gradients' $\ell_\infty$%We propose $\beta$-stochastic sign SGD, which contains a gradient compressor that encodes a client's gradient information in sign bits subject to the privacy budget $\beta>0$, and works for unbounded and mini-batch stochastic gradients. A parameter $B>0$ is chosen carefully to clip the unbounded gradients. 
% Specifically, our gradient compressor first clips the input gradient based on algorithmic parameter $B$, and then encodes the clipped gradient into the probability in generating signs $\{\pm\}$ based on privacy budget $\beta$. 

\item Under the popular bounded gradient assumption, $\beta$-StoSign achieves quantifiable Byzantine resilience and privacy assurances, and works with partial client participation and mini-batch gradients which could be unbounded (Theorem \ref{thm: convergence subgaussian}). Both static and adaptive adversaries are considered. 
We show (in Theorem \ref{thm: sto sign dp necessity of beta}) that when $\beta=0$, the compressor is not differentially private. In sharp contrast, when $\beta>0$, the compressor is $d\cdot \log\pth{(2B+\beta)/\beta}$-differentially private, where $B$ is the bound of the true gradients.  
% \ls{We have a better characterization in terms of $d$?} \mx{With a constrained gradient space, do we need to mention the comparison table in Appendix?}
Notably, our gradient compressor with $\beta=0$ coincides with the compressor proposed in \cite{jin2020stochastic}. Although we admit the two compressors are structural alike, our contributions are significant and multifold. We reserve a point-by-point comparison in Appendix \ref{sec: comparison with jin}.
\item  Our theoretical findings are validated with experiments on the MNIST and CIFAR-10 datasets. In addition, our experiments show that Ada-StoSign works well with mini-batch gradients. 
\end{itemize}

\section{Problem Setup}
\label{sec: problelm setup}
% 
%
%\subsection{Learning goal.}
%\noindent{\em Learning goal.}
The system consists of one parameter server (PS) and $M$ clients that collaboratively minimize 
%the population risk as in Eq.(\ref{eq: FL obj}). 
\begin{align}
\label{eq: FL obj}
\min_{w\in \mathbb{R}^d} \, \, F(w) &: =\frac{1}{M} \sum_{m=1}^M F_m(w),
\end{align}
where %$w \in \mathbb{R}^d$ is the training parameter and 
$F_m(w):= \mathbb{E}_{\mathcal{D}_m}[F_m(w, x, y)]$ is the local cost function at client 
$m\in [M] :=\{1, \cdots, M\}$ with the expectation taken over heterogeneous local data $(x,y)\sim \mathcal{D}_m$.  

\vskip 0.3\baselineskip

% \subsection{Threat model.} 
% \label{subsec: threat}
% 
% \noindent1) {\em Learning with limited communication.} 
% Scarce communication resources demand efficient learning algorithms. If a float is represented with 32 binary bits, one-bit information exchanges save up to 31 times communication overhead. However, it is still an open question to learn effectively with one-bit communications, especially considering signSGD may fail to converge in the presence of non-IID data \cite{safaryan2021stochastic,chen2020distributed}. 

\noindent1) {\em Client unavailability.}
%In addition to local data, 
Clients are also heterogeneous
%Clients are heterogeneous not only in their local data (i.e., $\mathcal{D}_i$) but also 
in their computation  speeds and communication channel conditions, which result in intermittent clients unavailability. To capture this, following the literature \cite{kairouz2021advances,li2019convergence,philippenko2020bidirectional}, instead of full participation, we assume that, in each iteration, a client successfully uploads its local update with probability $p$ independently across rounds, and independently from the PS and other clients. 

\vskip 0.3\baselineskip
\noindent2) {\em Mobile Byzantine attacks.}
In each iteration $t$, up to $\tau$ clients suffer Byzantine faults. Denote by $\mathcal{B}(t)\subseteq [M]$ the set of clients %whose links 
are Byzantine in iteration $t$, which is unknown to the PS. Let $\tau(t) = |\calB(t)|$. 
%Define as $\mathcal{N}(t) = [M]\setminus \mathcal{B}(t)$ the set of normal clients. 
%The PS  does not know $\mathcal{B}(t)$. 
% For ease of exposition, 
We refer to the clients in $\calB(t)$ as Byzantine clients at iteration $t$. 
We consider both static and adaptive system adversaries. 
In the former, 
%The former assumes that 
the system adversary does not know client unavailability in each iteration; in the latter, 
%and the latter assumes that 
the system adversary adaptively chooses $\mathcal{B}(t)$ accordingly to the client unavailability in each iteration $t$. 

\vskip 0.3\baselineskip
\noindent3) {\em Differential privacy.} 
%In addition to Byzantine resilience, 
We also aim to provide quantitative privacy protection. Towards this, we use differential privacy framework as in Definition \ref{def: d.p. definition}. 
\begin{definition}[Definition 2.4 \cite{dwork2014algorithmic}]
\label{def: d.p. definition}
For any $\epsilon>0$, a randomized algorithm $\calM$ with domain $\mathbb{N}^{|\calX|}$ is $\epsilon$-differentially private if $\prob{\calM(x)\in\calS}\le\exp(\epsilon)\prob{\calM(y)\in\calS}$ holds  for all $\calS\subseteq$Range$(\calM)$ and for all $x,y\in\mathbb{N}^{|\calX|}$ such that $\|x-y\|_1\le1$. 
% \begin{align*}
%     \prob{\calM(x)\in\calS}\le\exp(\epsilon)\prob{\calM(y)\in\calS}, 
% \end{align*}
%where the probability space is over the mechanism $\calM$.
\end{definition}
\section{Algorithms}
\label{sec: alg}
We propose two sign-based methods: {\em Adaptive Stochastic Sign SGD (Ada-StoSign)}, described in Algorithm \ref{alg: alg 1} and {\em $\beta$-Stochastic Sign SGD ($\beta$-StoSign)}, described in Algorithm \ref{alg: alg 2}.  

\paragraph{Ada-StoSign.} 
Similar to {\em Stochastic Sign Descent with Momentum (SSDM) \cite{safaryan2021stochastic}}, we first consider full client participation and Byzantine-free setup, i.e., $p=1$ and $\tau=0$. 

\vskip 0.3\baselineskip
We use a special levelling rule (described in the box) to ensure $\frac{1}{2} B_1 \le \max_{m\in [M]} \|\nabla F_m(0)\|_{\infty}\le B_1.$ It is easy to see that the run-time is $\Theta(|\log_2 B_0/\max_{m\in [M]}\|\nabla F_m(w_0)\|_{\infty}|)$. 

\fbox{\begin{minipage}{35em}
{\em 
The PS sends $w_0$ and $B_0$ to each of the $M$ clients; \\
Each client $m\in [M]$ computes $\nabla F_m(w_0)$, and set $B_{\frac{1}{2}} \gets B_0$;\; \\
\For{$k=1, 2, \cdots$}
{
\For{each client $m\in [M]$}
{
If $\|\nabla F_m(w_0))\|_{\infty}>B_{\frac{1}{2}}$, sends ``level-up'' request to the PS; \\
If $\|\nabla F_m(w_0))\|_{\infty}<\frac{1}{2}B_{\frac{1}{2}}$, sends ``level-down'' request to the PS; 
 }
 At the PS:
 If at least a ``level-up'' request is received, then sends `$+$' to each client; \\
 If receives $M$ ``level-down'' requests, then sends `$-$' to each client; \\
 Otherwise,  set $B_1 \gets B_{\frac{1}{2}}$ and {\bf break}; \\
\For{each client $m\in [M]$}
{
If receives `$+$', set $B_{\frac{1}{2}} \gets 2B_{\frac{1}{2}}$\; \\
If receives `$-$', set $B_{\frac{1}{2}}\gets \frac{1}{2}B_{\frac{1}{2}}$\; \\
Otherwise, set $B_1 = B_{\frac{1}{2}}$ and {\bf break}. 
}
}
}
\end{minipage}}

\noindent
As described in line 4 of Algorithm \ref{alg: alg 1}, for  general $t\ge 1$, the PS and the $M$ clients collaboratively adjust the value $B_{t+1}$. With carefully chosen stepsize and the smoothness of global objective (Assumption \ref{ass: 2 smmothness}), we show in Theorem \ref{thm: level bound l infinty} that either $\max_{m\in [M]}\|\nabla F_m(w(t))\|_{\infty}$ is small (decaying in $t$) or
it can be closely tracked by $B_{t+1}$ up to $1/2$ multiplicative factor. In either case, $\max_{m\in [M]}\|\nabla F_m(w(t))\|_{\infty}\le B_{t+1}$. Hence, the sign compression probability in 7 is valid (i.e., with value in $[0,1]$). 
With this compression, the vector $\hat{g}_m\in \{\pm 1\}^d$. In line 11, the PS aggregates the received sign gradient vectors coordinate-wise via majority vote, breaking ties arbitrarily. 

\begin{algorithm}[htbp]
\setstretch{0.8}
\caption{Ada-StoSign}\label{alg: alg 1}
    \KwIn{$T, \eta_t, B_0$, and $w_0$} 
    \KwOut{$w(T)$} 
    \DontPrintSemicolon
    \SetNoFillComment
    \vspace{0.5em}
    
    {\bf Initialization:} The PS sends $w_0$ and $B_0$ to all clients. 
    
    \For{$t=0, \cdots, T-1$}
    {
    %\tcc{\color{olive} On each $m\in[M]$}
    %    \vspace{0.5em}
        % \tcc{Option 1: aggregate mini-batch stochastic gradient}
        % Get $n$ stochastic gradients $\bm{g}_{m}^1(t), \ldots, \bm{g}_{m}^n(t)$ and calculate  $\bar{\bm{g}}_m=\frac{1}{n}\sum_{j=1}^n\bm{g}_{m}^j(t)$;\; 
        % \tcc{Option 2: aggregate batch gradient}
        %\tcc{Aggregate batch gradient}
        Each client $m$ computes $\nabla F_m(w(t))$;\; 
        %Get batch gradient $\bm{g}_{m}$;\;  %and set $\bar{\bm{g}}_m=\bm{g}_{m}$;\; 
    
    \tcc{\color{olive} The PS and the $M$ clients collaboratively runs }
    \uIf{$t=0$}{run leveling rule in the box.}
    \Else{\SetKwFunction{FMain}{Norm Tracking} $B_{t+1}\gets$ \FMain{$t, B_t,\{\nabla F_m(w(t))\}_{m\in [M]}$}}
    
    \tcc{\color{olive}On each client $m\in [M]$}
    \For{each client $m\in [M]$}
    {\For{$i=1, \cdots, d$}{
    $\hat{g}_{mi}(t)\gets 1$ with probability $\frac{B_{t+1}+ \nabla F_m(w(t))}{2B_{t+1}}$; $\hat{g}_{mi}(t)\gets-1$ otherwise. 
    }
    Report $\hat{g}_{m}(t)$ to the PS;\;  
    }

    %\vspace{0.5em}
    \tcc{\color{olive} On the PS}
    The PS, upon receipt of $\hat{g}_{m}(t)$s,  updates  
  $\tilde{\bm{g}}(t) ~\gets ~\sign\pth{\agg_{\text{maj}}\{\hat{g}_{m}(t): m \in [M]\}}$
    
    Broadcast $\tilde{\bm{g}}(t)$ to all clients;\;
    
    \tcc{\color{olive}On each client $m\in [M]$}
    \For{each client $m\in [M]$}
    {$w(t+1)\gets w(t)-\eta\tilde{\bm{g}}(t)$;\; }
    }
    \rule{\linewidth}{0.5pt}
     \SetKwFunction{FMain}{Norm Tracking}
    \SetKwProg{Fn}{Function}{:}{}
    \Fn{\FMain{$t, B_t, \{\nabla F_m(w(t))\}_{m\in [M]}$}}{{\em Initialization:} $\bm{u}^-\gets \mathbf{0}_M,$ $\bm{u}^\uparrow\gets \Indc_M,$ $\bm{u}^\downarrow\gets \Indc_M,$ $s^-\gets 0,$ $s^\uparrow\gets 0,$ $s^\downarrow\gets 0.$

\vskip \baselineskip
%    \While{$\langle \bm{u}^\uparrow,\bm{u}^\downarrow\rangle\neq 0$}
%    {
    \begin{multicols}{2}
    \tcc{\color{olive} On each $m\in[M]$}
\For{each client $m\in [M]$}
{
 $\bm{u}^-_m=  \bm{1}\sth{\linf{\nabla F_m(w(t))}< \frac{5c}{\sqrt{t+1}} } $\;
 % $\bm{u}^-_m=  \indc{\linf{{\bm{g}}_m}< \frac{5c}{\sqrt{t+1}}}$\;

 $\bm{u}^\uparrow_m  = \bm{1}\sth{\linf{\nabla F_m(w(t))}  >B_t}$,
 $\bm{u}^\downarrow_m =  \bm{1}\sth{\linf{\nabla F_m(w(t))} < \frac{1}{2}B_{t}}$
    %$\bm{u}^\uparrow_m = \indc{\linf{{\bm{g}}_m}>B_{t+\frac{1}{2}}}$, $\bm{u}^\downarrow_m = \indc{B_{t+\frac{1}{2}}>2\linf{{\bm{g}}_m}}$

    Report $\bm{u}^-_m,\bm{u}^\uparrow_m,\bm{u}^\downarrow_m$ to the PS;\;  
}
     \tcc{\color{olive} On the PS}

    %Wait to receive the messages $\bm{u}^-_m,$ $\bm{u}^\uparrow_m,$\\ $\bm{u}^\downarrow_m$ for $m\in[M]$\; 
    
    The PS, upon receiving $\bm{u}^-_m,\bm{u}^\uparrow_m,\bm{u}^\downarrow_m$, do    
    
    \uIf{$\bm{u}^-=\Indc_M$}
    {
    {$s^-\gets 1$\;}

    }
    \uElseIf{$\linf{\bm{u}^\uparrow}=1$}{
    $s^\uparrow\gets 1$\;
    }
    \uElseIf{$\bm{u}^\downarrow=\Indc_M$}{
    $s^\downarrow\gets 1$\;
    }
    \tcc{\color{olive} On each $m\in[M]$}
    \For{each client $m\in [M]$}
    {
     %Wait to receive messages $s^-,s^\uparrow,s^\downarrow$\\ from the PS\; 
    Upon receipt of $s^-,s^\uparrow,s^\downarrow$, do 
    
    \uIf{$s^-=1$}{
    $B_{t+1}=\frac{5c}{\sqrt{t+1}}$\;
    
    {\bf Break}
    }
    \uElseIf{$s^\uparrow=1$}{
    $B_{t+1}=2 B_{t}$\;
    }
    \uElseIf{$s^\downarrow=1$}{
    $B_{t+1}=\frac{1}{2} B_{t}$\;
    }
    }
    \end{multicols}
%}
%\vskip \baselineskip
    % \tcc{\color{olive} On each $m\in[M]$}
    
    % \Return{$B_{t+1}=B_{t}$}
    }
    \text{\bf End Function}
\end{algorithm}

\vskip 0.5\baselineskip
\noindent{\em \underline{Norm Tracking Function.}}
We provide its high level idea as follows. 
When the function is called, first via exchanging bits, the clients and PS collectively determine if 
$\max\limits_{m\in[M]}\linf{\nabla F_m(t)}<\frac{5c}{\sqrt{t+1}}$
holds. If so, then set $B_{t+1}=\frac{5c}{\sqrt{t+1}}$. 
Otherwise, we perform one-round in the {\bf for-loop} of the levelling rule at the first iteration (the code in the displayed box). 
Specifically, if the PS receives a ``level up'' request, then the PS will inform the clients to increase $B$; consequently,  
$B_{t+1} = 2B_t$. If the PS receives $M$ ``level down'' requests, then the PS will inform the clients to decrease $B$; 
consequently, $B_{t+1} = \frac{1}{2}B_t$. Otherwise, $B(t+1) = B(t)$. 
The function is formally described in lines 21 - 45 of Algorithm \ref{alg: alg 1}. 
% our algorithm -- Algorithm \ref{alg: alg 1} and its norm tracking function as described in Algorithm \ref{alg: alg 1 pre}. Lines 3-4 are executed at all the clients. In each iteration $t$ of Algorithm \ref{alg: alg 1}, client $m\in[M]$ obtains obtains $n$ stochastic gradients $\bm{g}_{m}^1(t), \ldots, \bm{g}_{m}^n(t).$ Then it set $\bar{\bm{g}}_m=\frac{1}{n}\sum_{j=1}^n\bm{g}_{m}^j(t).$ Alternatively, the client obtains batch gradient $\bm{g}_m$ and set $\bar{\bm{g}}=\bm{g}_m.$ 

The vector $\bm{u}^-$ is used to determine whether $\max\limits_{m\in[M]}\linf{\nabla F_m(t)}<\frac{5c}{\sqrt{t+1}}$ is true. 
The vectors $\bm{u}^\uparrow$ and $\bm{u}^\downarrow$ are used to collect the ``level up'' and ``level down'' requests from the clients. 
The three variables $s^-, s^{\uparrow}, s^{\downarrow}$ are used to encode the leveling decision aggregated by the PS.

\begin{remark}[mini-batch]
In Algorithm \ref{alg: alg 1}, each client uses true local gradients. 
Our experimental results Fig.\,\ref{fig: adaptive B} in Section \ref{sec: numerical experiments} shows that, despite the randomness in the mini-batch causes the $B$ to update more frequently than using true gradients, both the training errors and test accuracy of mini-bath are comparable to true gradients with minimal performance degradation. We would like to explore the theoretical analysis of the mini-batch convergence in a follow-up work. 
\end{remark}

\paragraph{$\beta$-StoSign.} 

% 
% 
% Unfortunately, due to the randomness in heterogeneous data $(x,y)$, ${\nabla F_m}(w, x, y)$ varies significantly across different realizations. It involves back and forth bit exchanges between the PS and clients to track the gradients' $l_\infty$ norm and thus invites extra communication overhead. 
The vote rule in the norm tracking function is vulnerable to Byzantine adversary.
This motivates $\beta-$StoSign (formally described in Algorithm \ref{alg: alg 2}), where we uses a clipping function to ensure the validity of the probability in generating the bit vector. 
\begin{definition}
\label{def: clip function}
The clipping function with parameter $B$, denoted by $\clip\sth{\cdot,B}$, projects $g\in \reals$ onto $[-B, B]$ as $\clip\sth{g,B}=\max\sth{-B,\min\sth{B,g}}.$
\end{definition}

Compared with Ada-StoSign, Algorithm \ref{alg: alg 2} takes in two additional parameters: {$\beta\ge0$} and $n$; the former is the privacy budget and the latter is the mini-batch size. 
Depart from  Ada-StoSign, $\beta$-StoSign can handle both Byzantine attacks (i.e., $\tau>0$) and partial clients (i.e., $p<1$).  
In each iteration $t$, a client $m$ is selected by the PS with probability $p$. Let $\calS(t)$ be the set of selected clients at time $t$. Since Byzantine clients can deviate from Algorithm \ref{alg: alg 2} arbitrarily, lines 2-8 are executed at %selected and non-Byzantine clients of round $t$ only, i.e., 
 clients in $\calS(t)\setminus \calB(t)$ only. In each iteration $t$, client $m\in \calS(t)\setminus \calB(t)$ first obtains $n$ stochastic gradients $\bm{g}_{m}^1(t), \ldots, \bm{g}_{m}^n(t)$. 
 % $\nabla f_m\pth{w(t), x^{(t,1)}, y^{(t,1)}}, \cdots, {\blue \nabla_i F}\pth{w(t), x^{(t,n)}, y^{(t,n)}}$; for ease of exposition, henceforth, we denote them as $\bm{g}_{m}^1(t), \ldots, \bm{g}_{m}^n(t)$ for short. 
 %If $\frac{B+\beta+ \frac{1}{n}\sum_{j=1}^n\bm{g}_{mi}^j(t)}{2B+2\beta}\notin [0,1]$, client $m$ does not send any messages to the PS in iteration $t$. Otherwise, client $m$ compresses the stochastic gradients into a binary vector in $\{\pm 1\}^d$ as follows: \ls{Notably, we need to modify the compressor definition here. Check this!}
 Then it passes $\frac{1}{n}\sum_{j=1}^n\bm{g}_{m}^j(t)$ to 
 %the clipping function 
 $\clip\sth{\cdot,B}$ coordinate-wise, 
% For any given $B>0$, the coordinate-wise gradient clipping function, denoted by $\clip\sth{\cdot,B}: \reals \to [-B, B]$, is defined as
    % \begin{align}
    %     \clip\sth{\frac{1}{n}\sum_{j=1}^n\bm{g}_{mi}^j(t),B}=\max\sth{-B,\min\sth{B,\frac{1}{n}\sum_{j=1}^n\bm{g}_{mi}^j(t)}} ~~~ \forall ~ i\in [d].
    % \end{align}
%Client $i\in \calS(t)\setminus \calB(t)$ 
and compresses the clipped gradient into $\{\pm 1\}$.  
For convenience of exposition, if no message is received from a selected client $m$ (which only occurs when $m\in \calB(t)$), then the PS treats $\hat{u}_m$ as $\bm{0}$. %, i.e., $\hat{u}_m \gets \bm{0}$.
Notably, if $m\in \calB(t)$, the received $\hat{u}_m(t)$
% among all the received messages, if $m\in \calS\setminus\calB(t)$, then $\hat{u}_m = \hat{g}_m$; otherwise, $\hat{u}_m$ 
could take arbitrary value. 
% Although Byzantine clients can send arbitrary information to any machines in a distributed system by the definition of Byzantine attacks \cite{1996distributedlynch}, the messages can be admitted to the parameter server are rather limited. We here elaborate the reasoning: if $m\in \calB(t)$, then $\hat{g}_{mi}(t) =\ast$, where $\ast$ denotes an arbitrary value.
%Nevertheless, 
Since $\hat{g}_m\in \{\pm 1\}^d$ for $m\notin \calB(t)$,  if $\hat{u}_{mi}(t)\notin \{-1,1\}$, then it must be true that client $m\in \calB(t)$. Thus, $\hat{u}_{mi}(t)$ will be removed from aggregation by the PS. In other words, it is always a better strategy for a Byzantine client to restrict $\hat{u}\in \{\pm 1\}^d$. Henceforth, without loss of generality, we assume that $\hat{u}_m(t)\in \{\pm 1\}^d$ for all received compressed gradients.

\begin{algorithm}[hbtp]
\setstretch{0.2}
    \caption{Distributed Non-Convex Optimization with $\beta$-Stochastic Sign SGD}\label{alg: alg 2}
    \KwIn{$T, \eta, \beta, n, B$, and $\nu$} 
    \KwOut{$w(T)$} 
    \DontPrintSemicolon
    \vspace{0.5em}
    
    {\bf Initialization:} $w(0)\gets \nu$ for each $m\in [M]$, and 
    the PS samples each client $m\in [M]$ with probability $p$ to form $\calS(0)$; 
    
    \For{$t=0, \cdots, T-1$}
    {
    \tcc{\color{olive} On each $m\in \calS(t)\setminus \calB(t)$}
    %    \vspace{0.5em}
    
    Get $n$ stochastic gradients $\bm{g}_{m}^1(t), \ldots, \bm{g}_{m}^n(t)$;\; 
    
    \For{$i=1, \cdots, d$}{
    $\hat{g}_{mi}(t)\gets 1$ with probability $\frac{B+\beta+ \clip\sth{\frac{1}{n}\sum_{j=1}^n\bm{g}_{mi}^j(t),B}}{2B+2\beta}$; $\hat{g}_{mi}(t)\gets-1$ otherwise. 
    }
    
    Report $\hat{g}_{m}(t)$ to the PS;\;  
    
    %\vspace{0.5em}
    \tcc{\color{olive} On the PS}
    
    Wait to receive messages $\hat{u}_{m}(t)\in \reals^d$ from the sampled clients $\calS(t)$\; 
    
  $\tilde{\bm{g}}(t) ~\gets ~\sign\pth{\agg_{\text{maj}}\{\hat{u}_{m}(t): m \in \calS(t)\}}$
    
    Sample each client $m\in [M]$ with probability $p$ to obtain $\calS(t+1)$;\; 
    Broadcast $\tilde{\bm{g}}(t)$ to all clients;\;

    \tcc{\color{olive}On each client $m\in \calS(t+1)\setminus \calB(t)$}
    
    Upon receiving $\tilde{\bm{g}}(t)$: $w(t+1)\gets w(t)-\eta\tilde{\bm{g}}(t)$;\;    
    }
\end{algorithm}
\vskip -\baselineskip

\section{Main Results}
Our analysis is derived under the following technical assumptions that are standard in non-convex optimization  \cite{shalev2014understanding}. %We only assume smoothness instead of convexity as many neural network loss functions are not convex.

\begin{assumption}[Lower bound]
\label{ass: 1 lower bound}
%The objective $F$ is bounded from below. That is, 
There exists $F^*$ such that 
$F(w)\geq F^*$ for all $w$. 
%For all $w$ and some constant $F^*$, we have objective value $F(w)\geq F^*$.
\end{assumption}

\begin{assumption}[Smoothness]
\label{ass: 2 smmothness}
There exists some non-negative constant $L$ such that \newline $F(w_1)\leq F(w_2)+\langle\nabla F(w_2),w_1-w_2\rangle+\frac{L}{2}{ \|w_1-w_2\|_2^2}$ for all $w_1, w_2$.
\end{assumption}

\begin{assumption}[Bounded dissimilarity]
\label{ass: BG condition}
There exists $\tilde{B}>0$ and $\tilde{G}>0$ such that 
$\|\nabla F_m(w) - \nabla F(w)\|_2 \le \tilde{B} \|\nabla F(w)\|_2 + \tilde{G} ~ \forall ~ w$ for each $m\in [M]$. 
\end{assumption}

\begin{assumption}[Bounded true gradient]
\label{ass: 4 Bounded gradient}
%The $\ell_{\infty}$ norm of the true gradient of the local cost function is upper bounded, i.e., 
For any coordinate $i\in [d]$, there exists $B_i>0$ such that $\abth{\nabla f_{mi}(w)}\le B_i$ for all $m\in[M]$. 
% \begin{align*}
%     \abth{\nabla f_{mi}(w)}\le B_i,~~~\forall ~ i\in[d].
% \end{align*}
Let $B_0: = \max_{i\in [d]}B_i$.
\end{assumption}
\begin{assumption}[Sub-Gaussianity]
\label{ass: sub gaussianity1}
For a given client $m\in [M]$, at any query $w\in\mathbb{R}^d$, the stochastic gradient $\bm{g}_m(w)$ 
is an independent unbiased estimate of $\nabla F_{m}(w)$ that is coordinate-wise related to the gradient $\nabla F_m(w)$ as $\bm{g}_{mi}(w)=\nabla F_{mi}(w)+\bm{\xi}_{mi} ~ \forall\, i\in [d],$ 
where $\bm{\xi}_{mi}$ is zero-mean $\sigma_{mi}$-sub-Gaussian, i.e,
$\expect{\bm{\xi}_{mi}}=0, $ and the two deviation inequalities $\prob{\bm{\xi}_{mi}\ge t}\le\exp\pth{-\frac{t^2}{2\sigma_{mi}^2}}$ and $\prob{\bm{\xi}_{mi}\le { -t} }\ge\exp\pth{-\frac{t^2}{2\sigma_{mi}^2}}$
 hold. Let $\sigma^2 : = \max_{m\in [M], i\in [d]} \sigma^2_{mi}$. 
\end{assumption}
{
\begin{assumption}[Heavy-tailed noise]
\label{ass: heavy tailed}
Let $\bm{\xi}_{mi}$ defined in Assumption \ref{ass: sub gaussianity1} be a zero-mean random variable, $\expect{\bm{\xi}_{mi}^{2}}\le \sigma^2$, and $\expect{\abth{\bm{\xi}_{mi}}^{p^{\prime}}}\le M_{p^\prime}<\infty$ for $p^\prime\ge 4$.
\end{assumption}
}

Before we start our analysis, we first define the random time $R$.
\begin{definition}
    Let $R$ be the random time with a probability mass function
\begin{align*}
    \prob{R=k} = \frac{\eta_k}{\sum_{t=0}^{T-1}\eta_t},\quad k=0,\ldots,T-1.
\end{align*}
\end{definition}
Following the road-map used in  \cite{bernstein2018signsgdmajority,jin2020stochastic,safaryan2021stochastic}, we first establish an upper bound for the probability of gradient sign errors $\prob{\tilde{\bm{g}}_i(t)}\not=\sign\pth{{ \nabla F_i}(w(t))}$, and then bound $\sum_{t=0}^{T-1}\eta_t\expect{\|\nabla F(w(t))\|_1}$ to conclude convergence. 
Recall that $\bm{g}_m (t):=\nabla F_m(w(t))$ denotes the true local gradient at client $m$. We will use the two notations interchangeably to denote the true local gradient afterwards.

\subsection{Ada-StoSign}
%Algorithm \ref{alg: alg 1} gets rid of the bounded gradient assumption \ref{ass: 4 Bounded gradient} by allowing $B_{t}$ to change over time and track gradients'$\ell_\infty.$ No presumption on $B_0$ is needed.
\begin{proposition}%[$\ell_{\infty}$ norm version]
\label{prop: iteration perturbation: l infinity}
Choose $\eta_t = \frac{c}{L\sqrt{d(t+1)}}$. For $t\ge 0$, if $\max_{m\in [M]} \|\nabla F_m(w(t))\|_{\infty} >  \frac{2c}{\sqrt{t+1}}$, then 
\[
\frac{1}{2} \max_{m\in [M]} \|\nabla F_m(w(t))\|_{\infty} < \max_{m\in [M]}\|\nabla F_m(w(t+1))\|_{\infty} < \frac{3}{2} \max_{m\in [M]} \|\nabla F_m(w(t))\|_{\infty}. 
\]
%Otherwise, $\max_{m\in [M]}\|\nabla F_m(w(t+1)) \|_{\infty} \le \frac{3c}{\sqrt{t+1}}. $
\end{proposition}
\begin{theorem}
\label{thm: level bound l infinty}
Suppose that Assumption \ref{ass: 2 smmothness} holds. Then %, $B_t$ is updated at most once in each iteration $t$, and it holds that
\begin{align*}
\frac{1}{2}B_{t+1} \indc{\max_{m\in [M]}\|\nabla F_m (w(t))\|_{\infty} \ge \frac{5 c}{\sqrt{t+1}}} 
\le \max_{m\in [M]}\|\nabla F_m(w(t))\|_{\infty} \le B_{t+1}. 
\end{align*}
% \begin{align*}
% &\frac{1}{2}B(t+1) \indc{\max_{m\in [M]}\nabla F_m (t) \ge \frac{3c}{\sqrt{t}}} \\
% & \qquad \qquad \qquad \qquad \qquad \le \max_{m\in [M]}\norm{\nabla F_m(t)}\\
% & \qquad \qquad \qquad \qquad \qquad \qquad \qquad  \le B(t+1) \indc{\max_{m\in [M]}\nabla F_m (t)\ge \frac{3c}{\sqrt{t}}} + \frac{3c}{\sqrt{t}} \indc{\max_{m\in [M]}\nabla F_m (t) < \frac{3c}{\sqrt{t}}}. 
% \end{align*}
\end{theorem}
% \mx{Need a remark At each $t\ge 1$, $B_t$ is updated at most once. {\red (more accurate wording?)}
% }
\begin{theorem}
\label{thm: prob of sign disagreement}
Suppose Assumption \ref{ass: 2 smmothness} holds.  For any given $c_0\in (0, 1)$, choose $c_1:=\sqrt{\log\pth{\frac{2}{1-c_0}}}$. When $\abth{\nabla F_i (t)}\ge c_1B_{t+1}\sqrt{\frac{2}{M}}$ for $i\in[d]$, % where $c_2:=\sqrt{\ln\pth{\frac{2}{1-c_1}}}$, 
we have
    $$\prob{\frac{1}{M}\sum_{m=1}^M \sign\pth{\hat{g}_{mi}(t)}\neq  \sign\pth{\frac{1}{M}\sum_{m=1}^M\nabla F_{mi}(t)} \mid \calF_t}\le \frac{1-c_0}{2},$$
    where $\calF_t$ is the natural filteration. 
\end{theorem}
\begin{theorem}
\label{thm: convergence alg 1}
Suppose Assumption \ref{ass: 1 lower bound}, \ref{ass: 2 smmothness} and \ref{ass: BG condition} hold. Let $\Delta=F(w(0))-F^*$ and a decaying learning rate $\eta_t=\frac{1}{L\sqrt{d(t+1)}}.$ Recall that $R$ is the random time. For any  $c_0\in (\frac{1}{2}, 1)$, let $M$ be sufficiently large such that $c_0-4d\sqrt{\log\pth{\frac{2}{1-c_0}}\frac{2}{M}}\pth{\tilde{B}+1}\ge\frac{1}{2},$ then 
\begin{small}
    \begin{flalign}
\label{eq: alg 1 convergence}
   & \expect{\norm{\nabla F(R)}} \le O \pth{\frac{{2{\Delta}\sqrt{d}}}{L\sqrt{T}} + 8d \sqrt{\frac{2}{M}\log\pth{\frac{2}{1-c_0}}}\tilde{G} + \left(1 + 12\sqrt{\frac{2d}{M}\log\pth{\frac{2}{1-c_0}}}\right)\frac{\sqrt{d}\log T}{\sqrt{T}}}.&
\end{flalign}
\end{small}

\end{theorem}
\begin{remark}
The condition given in Theorem \ref{thm: convergence alg 1} is rather mild. Recall that our distributed optimization is deployed over a huge system at scale. As $M\rightarrow \infty,$ the second term approaches 0. 

When $M\ge T$, the convergence rate is $O(\log T/\sqrt{T})$, which is significantly better than $O(\frac{1}{T^{1/4}})$ -- the rate of SSDM and matches that of the SGD for optimizing non-convex function polylog factors \cite{ghadimi2013stochastic}. 
Furthermore, in the iid case when $\tilde{G}=0$, we recover the standard SGD convergence rate $O\pth{\frac{1}{\sqrt{T}}}$ up to polylog factors. 
\end{remark}

\subsection{$\beta$-StoSign}
%\noindent{\bf Privacy Preservation}\newline
% 
% 
We characterize the DP of our gradient compressor $\beta$-StoSign. 
%It is easy to see that SignSGD \cite{bernstein2018signsgd} achieves some extent of differential privacy as it is insensitive to gradient magnitudes. 
Over the entire training time horizon, the quantification of the differential privacy preserved for 
% of 
any given client  can be obtained by applying the composition theorem of $\epsilon$-differentially private algorithms \cite[Corollary 3.15]{dwork2014algorithmic}. { We first show  that $\beta$ is an enablor of DP for our compressor. }

\begin{theorem}
\label{thm: sto sign dp necessity of beta}
$0$-StoSign is not differentially private. 
%for gradients in $\calG = \{\bm{g}\in \reals^d: ~ \exists i ~ s.t. \min\{|g_i-B|,  |g_i+B|\}\le 1 \}$. 
That is, there does not exist a finite $\epsilon>0$ for which Definition \ref{def: d.p. definition} holds. 
When $\beta >0$, $\beta$-StoSign 
%Eq.\,\eqref{eq: gradient compression} 
is $d\cdot\log\pth{\frac{2B+\beta}{\beta}}$-{ DP} for all gradients. 
\end{theorem}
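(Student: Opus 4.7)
The proof naturally splits into the two claims. For the negative direction, the plan is to exhibit a single pair of gradient inputs for which no finite $\epsilon$ can work. Take any two gradients $\bm{g}, \bm{g}'\in \reals^d$ that agree in coordinates $2,\dots,d$ but satisfy $g_1 \ge B$ and $g_1' \le -B$. Coordinate-wise clipping yields $\clip\{g_1,B\}=B$ and $\clip\{g_1',B\}=-B$, so under $\calM_{B,0}$ the first sign bit equals $+1$ with probability $(B+B)/(2B)=1$ for $\bm{g}$ and with probability $(B-B)/(2B)=0$ for $\bm{g}'$. Taking $\calS=\{s\in\{\pm 1\}^d:\ s_1=1\}$ gives $\prob{\calM_{B,0}(\bm{g})\in\calS}=1$ and $\prob{\calM_{B,0}(\bm{g}')\in\calS}=0$, so the defining inequality of Definition~\ref{def: d.p. definition} fails for every finite $\epsilon$.

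For the positive direction, the plan is to work coordinate-by-coordinate and then combine. Since the $d$ entries of $\calM_{B,\beta}(\bm{g})$ are drawn independently conditional on $\bm{g}$, for any $s\in\{\pm 1\}^d$
\begin{equation*}
\prob{\calM_{B,\beta}(\bm{g})=s} \;=\; \prod_{i=1}^d \prob{[\calM_{B,\beta}]_i(\bm{g})=s_i}.
\end{equation*}
Because $\clip\{g_i,B\}\in[-B,B]$, each per-coordinate probability lies in the interval $\bigl[\tfrac{\beta}{2B+2\beta},\,\tfrac{2B+\beta}{2B+2\beta}\bigr]$, so for any two inputs $\bm{g},\bm{g}'$ and any $s_i\in\{\pm 1\}$, the ratio $\prob{[\calM_{B,\beta}]_i(\bm{g})=s_i}/\prob{[\calM_{B,\beta}]_i(\bm{g}')=s_i}$ is at most $(2B+\beta)/\beta$. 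Multiplying across coordinates gives $\prob{\calM_{B,\beta}(\bm{g})=s}\le\bigl((2B+\beta)/\beta\bigr)^d\prob{\calM_{B,\beta}(\bm{g}')=s}$, and summing this pointwise bound over any $\calS\subseteq\{\pm 1\}^d$ (the range is finite) yields the claimed $\epsilon=d\log((2B+\beta)/\beta)$.

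The main obstacle is essentially conceptual rather than technical: one must be careful that the statement is quantifying over \emph{all} pairs of gradient inputs (a worst-case privacy guarantee), as reflected by the phrasing ``for all gradients,'' rather than over a dataset-level neighboring relation in the sense of Definition~\ref{def: d.p. definition}. Once the worst-case reading is fixed, the rest is elementary arithmetic on the per-coordinate Bernoulli parameters and an appeal to coordinate-wise independence to aggregate. A minor subtlety worth verifying is that working directly with the product bound over singletons is tighter than invoking a generic composition theorem; this direct route gives exactly $d\log((2B+\beta)/\beta)$ rather than a looser sum of per-coordinate $\epsilon_i$'s.
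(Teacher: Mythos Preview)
Your positive direction ($\beta>0$) is correct and essentially matches the paper: bound each coordinate's likelihood ratio by $(2B+\beta)/\beta$ using $\clip\{g_i,B\}\in[-B,B]$, then combine across coordinates. The paper phrases the combination via the composition theorem (Theorem~\ref{thm: composition}) rather than a direct product over singletons, but these yield the same bound; your remark that the direct route is ``tighter'' is off, since composition already gives $\sum_i \log\bigl((2B+\beta)/\beta\bigr)=d\log\bigl((2B+\beta)/\beta\bigr)$.

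The negative direction has a genuine gap. The theorem explicitly invokes Definition~\ref{def: d.p. definition}, which restricts attention to neighboring inputs with $\|\bm{g}-\bm{g}'\|_1\le 1$, and the paper's proof honors this constraint throughout. Your counterexample takes $g_1\ge B$ and $g_1'\le -B$, which forces $\|\bm{g}-\bm{g}'\|_1\ge 2B$ and hence is not a valid neighboring pair once $B>1/2$. Your commentary that the statement ``is quantifying over all pairs of gradient inputs \dots rather than over a dataset-level neighboring relation'' is precisely the opposite of what the paper intends and proves. The repair is immediate and recovers the paper's argument: keep $g_1\ge B$ so that $\prob{[\calM_{B,0}]_1(\bm{g})=-1}=0$, but choose $g_1'\in(-B,B)$ with $|g_1-g_1'|\le 1$ (for instance $g_1=B$ and $g_1'=B-\min\{1,B\}$); then $\prob{[\calM_{B,0}]_1(\bm{g}')=-1}>0$ and the likelihood ratio is still infinite, now on a legitimate neighboring pair.
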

Theorem \ref{thm: sto sign dp necessity of beta} also implies that as long as { $\beta >0$}, $\beta$-StoSign ensures $\epsilon$-differential privacy for $\epsilon = O(d)$ in any iteration $t$. 
We defer more refined DP characterizations to Appendix \ref{app: privacy}.

Henceforward, we present the convergence results under a unified framework, where $\Xi(n)=\Xi_1(n)=2(B+\beta)\exp\pth{-\frac{n}{2}}$ in the case of sub-Gaussian noise, and $\Xi(n)=\Xi_2(n)=\frac{4(B+\beta)}{n^{\frac{p^\prime}{2}}}$ in the case of heavy-tailed noise. 
For the analysis on static and adaptive Byzantine adversaries, we note that we have two cases. Specifically, when the system adversary is static but with $\tau(t)\le \frac{2}{p^2}\log\frac{6}{c}$ or when is adaptive, let $\Upsilon(M,t)=\Upsilon_1(M,t)=\frac{2(B+\beta)\tau(t)}{pM}.$ On the other hand, when the system adversary is static with $\tau(t) >\frac{2}{p^2}\log\frac{6}{c}$, let $\Upsilon(M,t)=\Upsilon_2(M,t)=\frac{3(B+\beta)\tau(t)}{M}.$ We further define $\delta(M)=\frac{2(B+\beta)}{p}\sqrt{\frac{2}{M}\log\frac{6}{3-5c}}+2\sigma\sqrt{\frac{2\log\frac{6}{c}}{{Mn}}}$ and let $c\in(0,\frac{3}{5}).$
{
\begin{theorem}
\label{thm: sub-gaussian} 
Fix $t\ge 1$ and $i\in [d]$ and choose $B=\pth{1+\epsilon_0}B_0$. It holds that
% 
 % Let $c>0$ be any given constant such that $c< \frac{3}{5}$. Define $\delta_1(t)=\frac{2(B+\beta)\tau(t)}{pM}+$ and $\delta_2(t)=3(B+\beta)\frac{\tau(t)}{M}+ \frac{2(B+\beta)}{p}\sqrt{\frac{2}{M}\log\frac{6}{3-5c}}+2\sigma\sqrt{\frac{2\log\frac{6}{c}}{{Mn}}}$. 
\begin{align}
\label{eq: error bound}
\prob{\tilde{\bm{g}}_i(t)\neq\sign\pth{{ \nabla F}(w(t))_i}\mid w(t)}\le \frac{1-c}{2}.
\end{align}

\noindent{\bf (Sub-Gaussian noise):} Suppose Assumption \ref{ass: 4 Bounded gradient} and \ref{ass: sub gaussianity1} hold, and $\epsilon_0>\frac{\sigma}{B_0}$. Eq.\,\eqref{eq: error bound} holds if $\abth{{ \nabla F}(w(t))_i} \ge \delta(M)+ \Xi_1(n) +\Upsilon(M,t).$

\noindent{\bf (Heavy-tailed noise):} Suppose Assumption \ref{ass: 4 Bounded gradient} and \ref{ass: heavy tailed} hold, and $\epsilon_0> \frac{\sqrt[p^\prime]{M_{p^\prime}}}{B_0}$. Eq.\,\eqref{eq: error bound} holds for $p^\prime\ge 4$ if $\abth{{ \nabla F}(w(t))_i} \ge \delta(M)+ \Xi_2(n) + \Upsilon(M,t).$%\frac{4(B+\beta)}{n^{\frac{p^\prime}{2}}}$ when the system adversary is adaptive, or when is static but with $\tau(t)\le \frac{2}{p^2}\log\frac{6}{c}$, and if $\abth{{\nabla F_i}(w(t))} \ge \delta_2(t)+ \frac{4(B+\beta)}{n^{\frac{p^\prime}{2}}}$ when is static with $\tau(t) >\frac{2}{p^2}\log\frac{6}{c}$. 

% \begin{align}
% \label{eq: error bound}
% \prob{\sign\pth{\frac{1}{M}\sum_{m=1}^M\hat{\bm{g}}_{mi}}\neq\sign\pth{\frac{1}{M}\sum_{m=1}^M\bm{g}_{mi}}\mid w(t)}\le \frac{1-c}{2}.
% \end{align}
% 
% When the system adversary is static with $\tau(t) >\frac{2}{p^2}\log\frac{6}{c}$, if $\abth{{ \nabla_i F}(w(t))} \ge 3(B+\beta)\frac{\tau(t)}{M}+ 2(B+\beta)\exp\pth{-\frac{n}{2}}+ \frac{c_0}{\sqrt{M}}$, then Eq.\,\eqref{eq: error bound} holds. 
%Otherwise, $\prob{\sign\pth{\frac{1}{M}\sum_{m=1}^M\hat{\bm{g}}_{mi}}\neq\sign\pth{\frac{1}{M}\sum_{m=1}^M\bm{g}_{mi}}\mid w(t)}\le 1$. 
\end{theorem}
}

Theorem \ref{thm: sub-gaussian} says that when $\abth{{ \nabla F}(w(t))_i}$ is large enough, the sign estimation at the PS in each iteration is more likely to be correct. This is crucial in ensuring the convergence because Theorem \ref{thm: sub-gaussian} implies that when $\abth{{ \nabla F}(w(t))_i}$  is large enough, in expectation, Algorithm \ref{alg: alg 2} pushes $w(t)$ towards a stationary point of the global objective $F$. {In addition, the distributed algorithm reduces to centralized signSGD given a correct population sign.} After all, small $\abth{{ \nabla F}(w(t))_i}$  implies that $w(t)$ is already near the neighborhood of a stationary point. Different from \cite{safaryan2021stochastic} and \cite{jin2020stochastic}, we neither assume the sign error distributions across clients be identical,  nor require the average probability of sign error to be less than $1/2$. Instead, we show that it is enough to { let} the probability of population sign errors be small when the magnitude of the gradients is large. %The results show the mini-batch convergence of $\beta-$stochastic sign SGD in the presence of static Byzantine and adaptive adversary.

%, and \ref{ass: sub gaussianity1} 
\begin{theorem}
\label{thm: convergence subgaussian}
 % Define 
% $\delta=\frac{2(B+\beta)}{p}\sqrt{\frac{2}{M}\log\frac{6}{3-5c}}+2\sigma\sqrt{\frac{2\log\frac{6}{c}}{{Mn}}}$, 
%$\delta_1(t)=\frac{2(B+\beta)\tau(t)}{pM}+\frac{2(B+\beta)}{p}\sqrt{\frac{2}{M}\log\frac{6}{3-5c}}+2\sigma\sqrt{\frac{2\log\frac{6}{c}}{{Mn}}}$ 
%$\delta_2(t)=3(B+\beta)\frac{\tau(t)}{M}+ \frac{2(B+\beta)}{p}\sqrt{\frac{2}{M}\log\frac{6}{3-5c}}+2\sigma\sqrt{\frac{2\log\frac{6}{c}}{{Mn}}}$, and $c$ such that $0<c<\frac{3}{5}.$
\noindent Suppose Assumptions \ref{ass: 1 lower bound}, \ref{ass: 2 smmothness}, \ref{ass: 4 Bounded gradient} hold. For any given $t$, we choose $B=(1+\epsilon_0)B_0.$ With Assumption \ref{ass: sub gaussianity1} for $\epsilon_0>\frac{\sigma}{B_0}$ or \ref{ass: heavy tailed} for $\epsilon_0> \frac{\sqrt[p^\prime]{M_{p^\prime}}}{B_0}~(p^\prime\ge4)$, we have
\begin{align*}
    \expect{\|\nabla F(w(R))\|_1}\le \frac{1}{c}\left[
    \frac{F(w(0)) - F^*}{\sum_{t=0}^{T-1}\eta_t} + \frac{Ld\sum_{t=0}^{T-1}\eta_t^2}{2\sum_{t=0}^{T-1}\eta_t} + \delta(M)+2 d \Xi(n)+\frac{2d\sum_{t=0}^{T-1}\eta_t\Upsilon(M,t)}{\sum_{t=0}^{T-1}\eta_t}\right]&
\end{align*}

\end{theorem}

\begin{corollary}
\label{corollary: main text learning rate}
    Let $\Delta=F(w(0))-F^*.$ We have the following convergence rate for Algorithm \ref{alg: alg 2} under specific choices of the learning rates. Recall that $R$ is the random time. 
    \vskip 0.2\baselineskip
    \noindent{Suppose $\eta_t=\frac{1}{\sqrt{dT}}.$} Then, we have
    \begin{align}
    \label{eq: constant step size}
        \expect{\|\nabla F(w(R))\|_1}\le \frac{1}{c}\left[
        \frac{\Delta\sqrt{d}}{\sqrt{T}} + \frac{L\sqrt{d}}{2\sqrt{T}} + \delta(M)+2 d \Xi(n)+\frac{2d}{T}\sum_{t=0}^{T-1}\Upsilon(M,t)\right]&
    \end{align}
     {Suppose $\eta_t=\frac{1}{\sqrt{d(t+1)}}.$} Then, we have
    \begin{flalign}
    \label{eq: decay step size}
        &\expect{\|\nabla F(w(R))\|_1}\le O \pth{
        \frac{\Delta\sqrt{d}}{\sqrt{T}} + \frac{L\sqrt{d}\log T}{2\sqrt{T}} + \delta(M)+2 d \Xi(n)+2d\max_{t\in[T-1]}\Upsilon(M,t)}&%\frac{2(B+\beta)\max_{t\in[T-1]}\tau(t)}{pM}}&
    \end{flalign}
\end{corollary}

\begin{remark}
\label{rmk: convergence results}
(1) The convergence rates in the two cases of Byzantine adversaries $\Upsilon_1$ and $\Upsilon_2$ differ only by a multiplicative factor of $\frac{3p}{2}.$ As long as $\tau(t)$ is sufficiently large, the impacts of $p,$ on the convergence rate upper bound is limited. The lower bound requirement on $\tau(t)$ might be an artifact of our analysis on static adversaries in simplifying the boundary case derivation. The residual term  for adaptive adversaries is $\Upsilon_1(M,t)$ only.%the degree of partial client participation 

(2) Now consider the asymptotic in terms of  $T$ and the client number $M$ only. If $\tau(t)=\tau$ for each $t$, then the Byzantine terms in  Eq.\,\eqref{eq: constant step size} and Eq.\,\eqref{eq: decay step size} become $2d\Upsilon(M,0)$, where $2d\Upsilon_1(M,0)=\frac{4(B+\beta)\tau d}{pM}$ and $2d\Upsilon_2(M,0)=\frac{6(B+\beta)\tau d}{M}$.
If $\tau=O\pth{\sqrt{M}}$, then both terms scale in $M$ with order $O(\frac{1}{\sqrt{M}})$, which is of the same order as the term $\delta(M)$, which is the consequence of weak signal strength of the compressed gradients near a stationary point of the global objective $F$. %We note that the impacts are limited as the contribution of one client is masked as a one-bit sign instead of an arbitrary value, which is also verified in the following experiments.
On the other hand, if $\sum_{t=0}^{T-1}\tau(t)=O\pth{\sqrt{T}}$ in Eq.\,\eqref{eq: constant step size}, the Byzantine terms scale as $O(\frac{1}{\sqrt{T}})$, which is of the same order as the first two terms.  
In either case, due to the mobility of the Byzantine faults, it is possible that $\cup_{t=0}^T \calB(t) = [M]$, i.e., every client is { corrupted} at least once.

(3) The residual term $\Xi(n)$ is an immediate consequence of using mini-batch stochastic gradients instead of true gradient as in \cite{jin2020stochastic}. It turns out that this term have minimal impact on the final convergence. In fact, as long as $n=\Omega\pth{\log M}$ (sub-Gaussian noise) or $n=\Omega\pth{M^{\frac{1}{p^\prime}}}$ for $p^\prime\ge 4$ (heavy-tailed noise), this term becomes non-dominating.

(4) In Theorem \ref{thm: sto sign dp necessity of beta}, we know that $\beta$-StoSign is $\epsilon = d\cdot\log\pth{\frac{2B+\beta}{\beta}}$-DP. Simple algebra leads to $\beta=\frac{2B}{e^{{\epsilon}/{d}}-1}.$ As $\beta$ is in the numerator, $\epsilon$-DP protection worsens the bound. This is observed also in the experiments (Section \ref{sec: numerical experiments}.)

(5) When $\tau(t)=\tau = O\pth{\sqrt{M}}$ and $n$ of the same order as in (3), the convergence rates 
%in Eq.\,\eqref{eq: thm: convergence adaptive} and Eq.\,\eqref{eq: thm: convergence static} 
become $O(\frac{1}{\sqrt{T}} + \frac{1}{\sqrt{M}})$, approaching the convergence rate of the standard (centralized, non-private, and adversary-free) SGD $O\pth{\frac{1}{\sqrt{T}}}$ as $M\rightarrow \infty$.

\vskip -2\baselineskip
% {Luckily}, $M$ in FL is often large 
%
%
%
\end{remark}
The bounds in Theorem \ref{thm: convergence subgaussian} can be tightened with more structured gradient noises. We defer our results on Gaussian-tailed and bounded stochastic gradients to Appendix \ref{sec: additional convergence rates}. All of the results have a similar form and differ only in the noisy residual terms. 
% In detail, the residual term $O\pth{\exp(-n/2)}$ in the convergence results of the Gaussian-tailed stochastic gradients is scaled by a constant $1/4\sqrt{2\pi}$, while no noisy tail term appears in the case of bounded stochastic gradients.
%
%
\vskip -0.5\baselineskip
\section{Numerical Experiments}
\label{sec: numerical experiments}
In this section, we evaluate our analysis on Algorithm \ref{alg: alg 1} and \ref{alg: alg 2}.
The implementation details and testing accuracies are deferred to Appendix. \ref{app:experiments}. We list the key elements of our experimental setup for benchmark datasets below. All the experiment results are reported after $3$ repeated trials under different random seeds, unless otherwise noted.
% in terms of the impacts of client sampling $p$, the differential privacy protection $\beta$, and Byzantine attack resilience.  
% \vskip 0.5\baselineskip
\newline
\noindent{\bf Datasets:} MNIST \cite{lecun2009mnist}, and CIFAR-10 \cite{krizhevsky2009learning}.
    
\noindent{\bf Models:} Multinomial logistic classification, a CNN with two 5x5 convolution layers (the first with 64 channels, the second with 64 channels), and multi-layer perceptron (MLP) \cite{mcmahan2016federated}.
    
\noindent{\bf Clients Data:} 100 balanced workers with non-IID distributions to be specified.

\noindent{\bf Baseline Algorithms:} signSGD \cite{bernstein2018signsgd} and FedSGD \cite{mcmahan2016federated}.
% \vskip -0.5\baselineskip
\subsection{Algorithm \ref{alg: alg 1}}
\label{sec: experiments alg 1}
We use multinomial logistic classification with a decaying learning rate $\eta_t=\frac{\eta_0}{\sqrt{t+1}}$ and normalize the data so that the loss function is $1$-smooth. Each client holds images from two classes to make a highly non-identical data distribution. Although our theory presents the convergence for full-batch gradients only, the numerical results suggest Algorithm \ref{alg: alg 1} may work for mini-batch stochastic gradient, which we leave as a future direction.
\begin{figure}[!htb]
    \centering
    % \vskip -0.5\baselineskip
    \includegraphics[width=\textwidth,trim=1cm 0.2cm 0.5cm 0cm,clip]{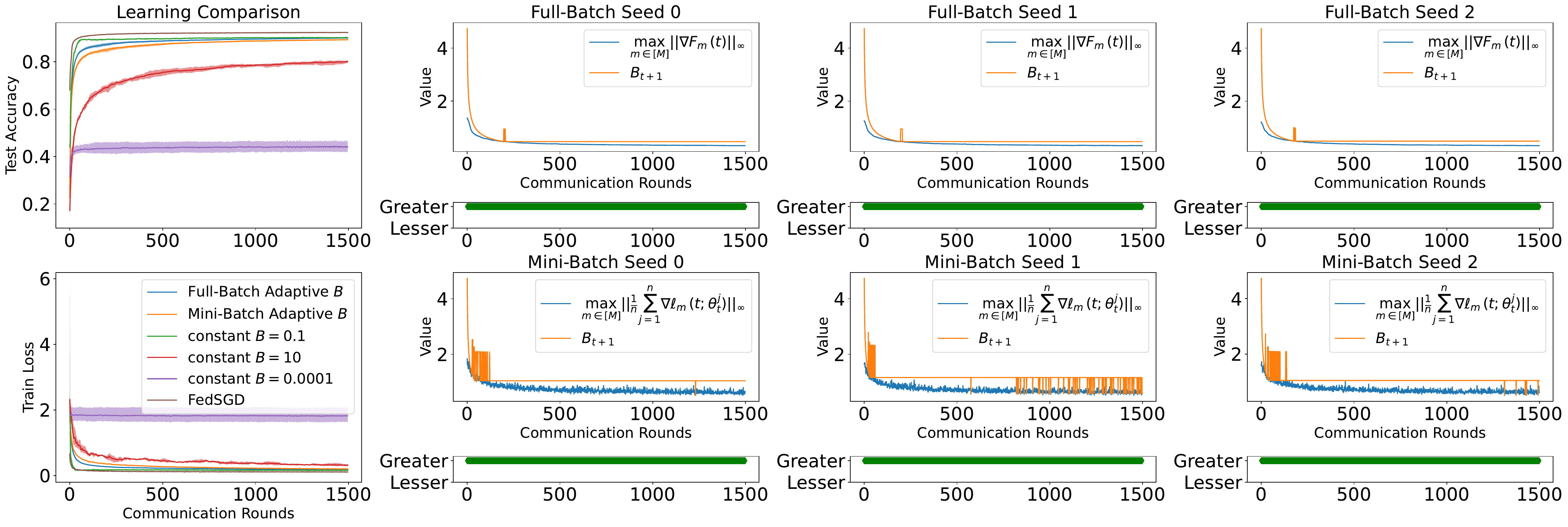}
    \caption{Learning Performance, $B_{t+1}$'s trajectory, and comparisons with gradient norms.}
    \label{fig: adaptive B}
\end{figure}
\vskip 0.5\baselineskip
Algorithm \ref{alg: alg 1} does not require a concrete value of $B_0$, offering users freedom to initialize their own $B_0$, i.e., $B$ in iteration $0.$
For a fair of evaluation and aligning with Theorem \ref{thm: level bound l infinty}, we randomly generate $B_1$ based on the gradients' $\ell_\infty$ norm. In reality, this involves back and forth collaborations between PS and clients and may sacrifice some communication efficiency; however, it is worth it. The benefits are two-folds: 1) Algorithm \ref{alg: alg 1} consistently achieves high test accuracy and smooth loss with no need to tune $B$. In contrast, Fig.\ \,\ref{fig: adaptive B} shows that a constant $B$ of different values may affect the learning performance. 2) the leveling rules ensure a vaild probability measure as observed from Fig.\ \,\ref{fig: adaptive B}.
This meets our analysis such that each client updates $B_{t+1}$ no more than once for $t\ge 1$. 

\subsection{Algorithm \ref{alg: alg 2}}
\subsubsection{Learning Performance}
\label{sec: alg 2 learning}
We evaluate Algorithm \ref{alg: alg 2} with a decaying learning rate $\eta_t=\frac{\eta_0}{\sqrt{t+1}}$ on MNIST via MLP and CIFAR-10 data sets via CNN. 
The most notable change in Algorithm \ref{alg: alg 2} from Algorithm \ref{alg: alg 1} is $B$ being a hyper parameter to be fine-tuned from extensive experiments, instead of an time-varying value. 
The additional details are deferred to Appendix \ref{app: experiments general}.

First, signSGD is consistently inferior to the other two algorithms. This might be due to its inability to deliver magnitude information.
Next, when $\beta=0$, i.e, when with no privacy protection, Algorithm \ref{alg: alg 2} converges faster than FedSGD and has a comparable train loss. Nevertheless, we do not anticipate a better performance of Algorithm \ref{alg: alg 2} than FedSGD since the latter transmits the uncompressed gradients to PS, while our gradient information is encoded in expectation.
As $\beta$ increases, we observe a performance drop in $\beta$-Stochastic SGD. This is intuitively and theoretically correct because $\beta$ will introduce additional noise and thus deviate the convergence bound. 
\begin{figure}[!htb]
    \centering
    \begin{subfigure}[b]{\textwidth}
    \includegraphics[width=\linewidth]{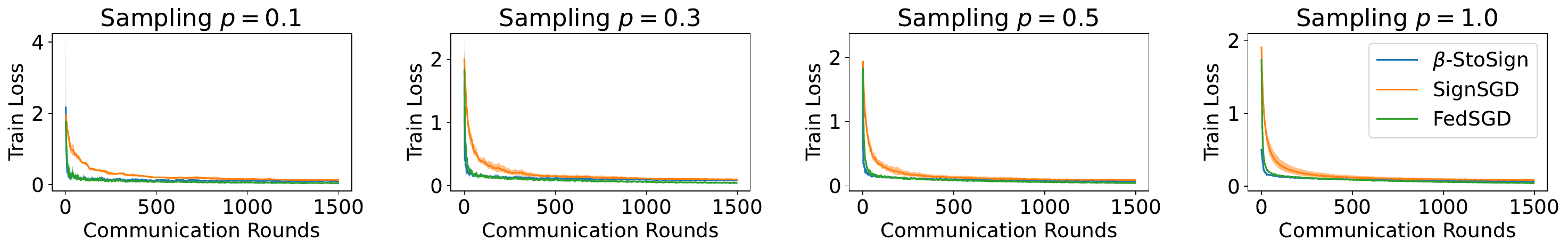}
    \caption{$\beta=0$}
    \end{subfigure}
    \begin{subfigure}[b]{\textwidth}
        \includegraphics[width=\linewidth]{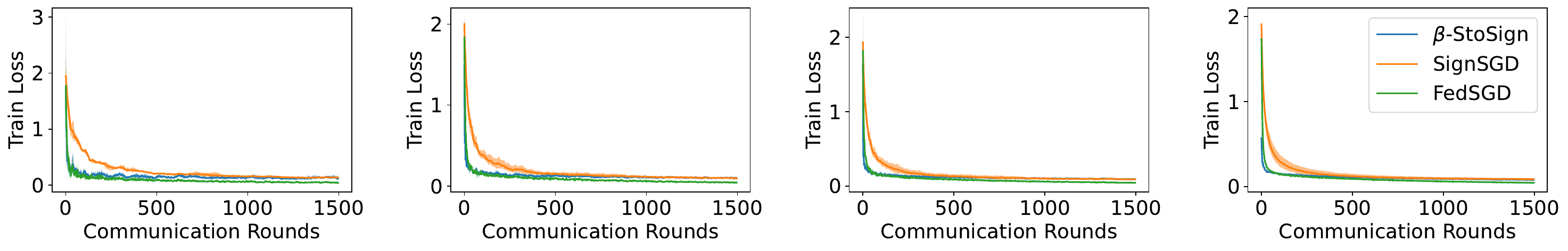}
        \caption{$\beta=B$}
    \end{subfigure}
    \begin{subfigure}[b]{\textwidth}
        \includegraphics[width=\linewidth]{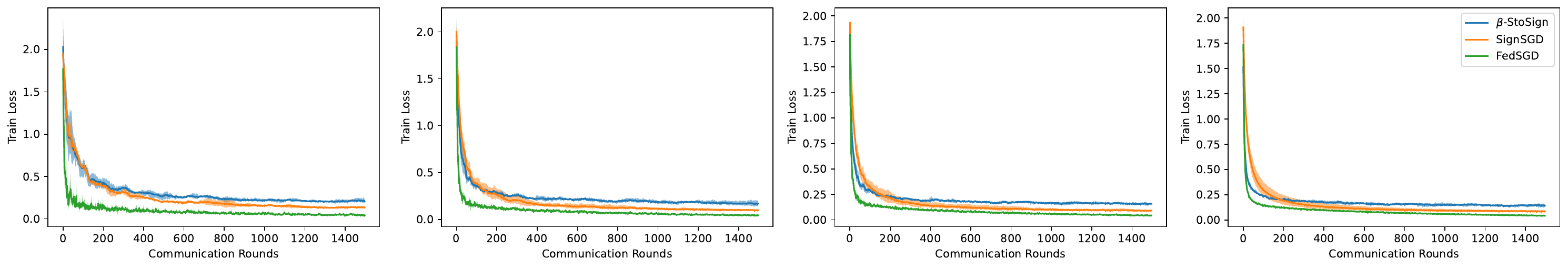}
        \caption{$\beta=10B$}
    \end{subfigure}
    \caption{Train loss comparisons on MNIST data set under non-IID data}
    \label{fig: MNIST learning mainbody}
    % \captionsetup{belowskip=0pt}
    % \vspace{3pt}
\end{figure}

\begin{figure}
    \centering
    \begin{subfigure}[b]{\textwidth}
        \includegraphics[width=\textwidth]{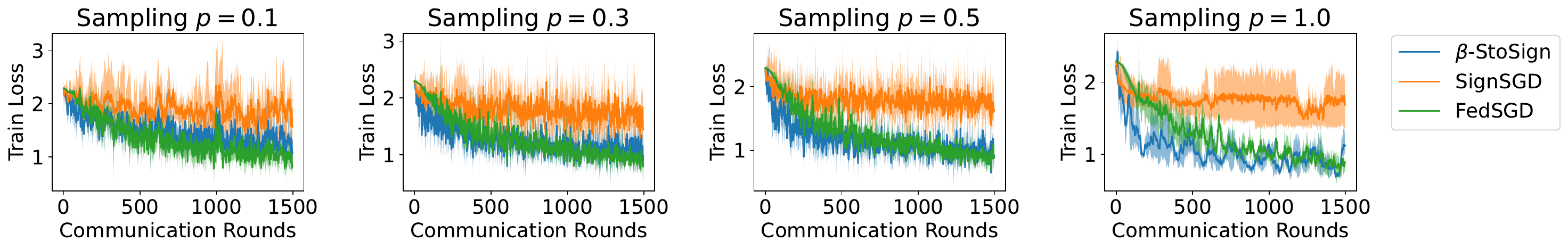}
        \caption{$\beta=0$}
    \end{subfigure}
    \begin{subfigure}[b]{\textwidth}
        \includegraphics[width=\textwidth]{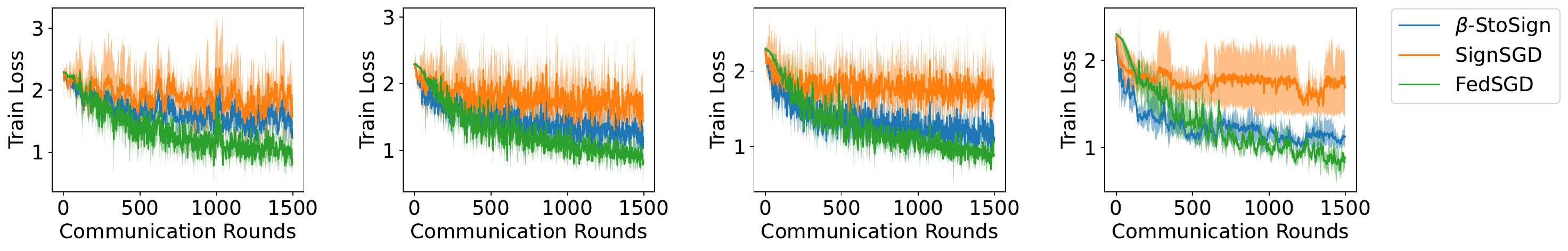}
        \caption{$\beta=B$}
    \end{subfigure}
    \begin{subfigure}[b]{\textwidth}
        \includegraphics[width=\textwidth]{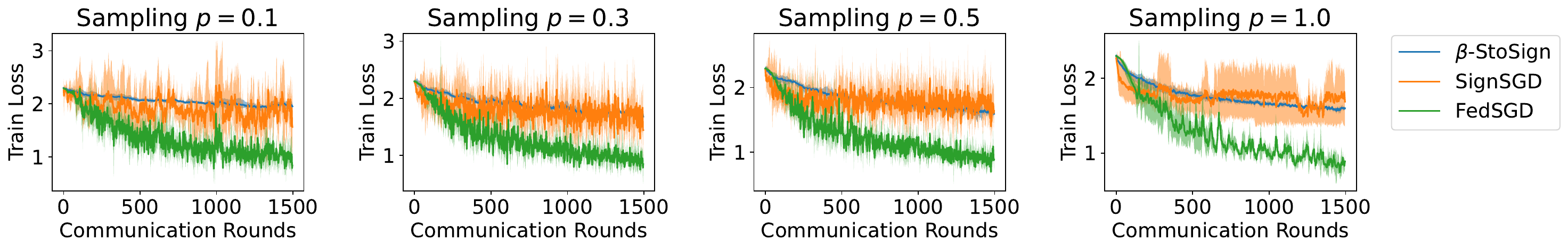}
        \caption{$\beta=10B$}
    \end{subfigure}
    \caption{Train loss comparisons on CIFAR-10 data set under non-IID data}
    \label{fig:CIFAR10 learning mainbody}
\end{figure}

\vskip -\baselineskip
\subsubsection{Byzantine adversaries}
\label{sec: alg 2 Byzantine}
\begin{figure}[!htb]
    \centering
        \includegraphics[trim={0cm 0cm 0.5 -1.5cm},clip,width=\textwidth]{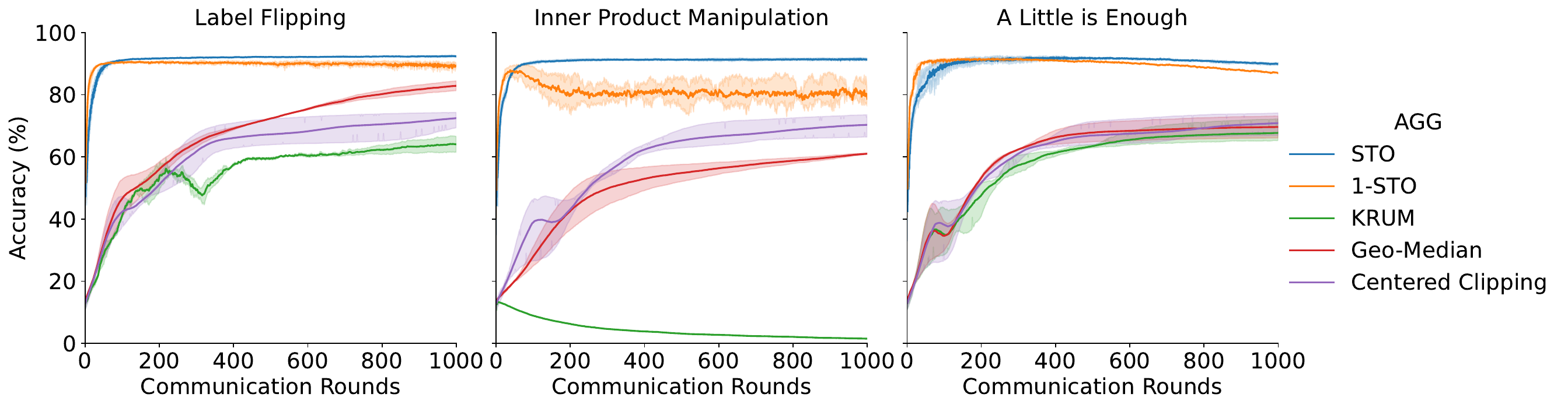}
        \caption{Comparisons with the baselines, where $1$-STO refers to $\beta$ stochastic sign compressor with $\beta=B$.}
    \label{fig:MNIST Byzantine comparisons}
\end{figure}

We compare Algorithm \ref{alg: alg 2} on MLP via MNIST and Dirichlet distribution with some renowned Byzantine resilient algorithms from literature, specifically, with Krum \cite{blanchard2017machine}, geometric median \cite{chen2017distributed}, centered clipping \cite{karimireddy2021learning} under three adversary models, including label flipping, inner product manipulation \cite{xie2020fall}, the "A little is enough" \cite{baruch2019little}. Following \cite{karimireddy2021learning}, $\tau$ is set to be 10 in centered clipping since momentum is switched off. The adversaries are described in Appendix \ref{app: byzantine} with details. We consider $20$ mobile Byzantine clients and allow adversaries to manipulate the mini-batch stochastic gradient but assume an honest compressor that will send out the correctly compressed corrupted messages to PS.

Throughout the experiments, it is observed that our $\beta$ stochastic sign compressor beats all the other baseline algorithms with or without privacy protection, i.e., when $\beta=0$ or $\beta=B$. Notably, our compressor transmits merely one-bit message per gradient coordinate.

% Acknowledgments---Will not appear in anonymized version

\bibliography{ms}
\bibliographystyle{alpha}
\newpage
\appendix
\section{Comparisons with \cite{jin2020stochastic}.}
\label{sec: comparison with jin}
% While we admit that the gradient compressors in this paper and  stochastic sign compressor in \cite{jin2020stochastic} are structural alike , our contributions are significant and multifold.
We list some of the key elements in Table \ref{tab:comparison with jin}. 
\begin{table}[!htb]
\caption{Point-by-point comparisons with \cite{jin2020stochastic}}
\label{tab:comparison with jin}
\centering
%\resizebox{\textwidth}{!}{
\begin{tblr}{width=\linewidth,colspec={X[0.5,l,m] X[0.5,l,m] X[1,l,m] X[1,l,m]},hspan=even,}
\hline
                                & \SetCell[c=1]{c}{} & \SetCell[c=1]{c}{\cite{jin2020stochastic}}                                                 & 
                                \SetCell[c=1]{c}{Algorithm \ref{alg: alg 2}} \\ \hline
\SetCell[r=2]{c}{Gradient}       & Batch-size           & Full-batch (True Gradient) {[}Eq. (33){]}                                                                     &\SetCell[c=1]{c}{Mini-batch Stochastic Gradient}\\
                                & Distribution         & Bounded {[}Theorem 3{]}                                                                                       &\SetCell[c=1]{c}{ A variety of distributions from bounded to unbounded in Assumptions  \ref{ass: BG condition}, \ref{ass: 4 Bounded gradient}, \ref{ass: sub gaussianity1}, \ref{ass: heavy tailed}.} \\ \hline
\SetCell[r=3]{c}{\hfill Residual} & Weak Signal Strength &$\Delta(M)$ as a function of client number $M$ without explicit form {[}Theorem 2 and Remark 3{]}. &$\delta(M):=\frac{2(B+\beta)}{p}\sqrt{\frac{2}{M}\log\frac{6}{3-5c}}+2\sigma\sqrt{\frac{2\log\frac{6}{c}}{{Mn}}}$ for $c\in(0,\frac{3}{5}).$                                                                                                                                                                            \\
                                & Byzantine            & Implicit forms without quantitative results {[}Theorem 7 and Remark 6{]}.               & $\Upsilon$ terms in Theorem \ref{thm: sub-gaussian}, depending on the type of adversaries.                                                                                                                                              \\ & Noisy Gradient Tail      & No, since only bounded gradients are considered.                                                          & $\Xi$ terms in Theorem \ref{thm: sub-gaussian}, depending on the tail distribution.\\ \hline
\SetCell[c=2]{c}{Differential Privacy} & & No for sto-sign compressor; flawed arguments for DP-sign compressor.&  Theorem \ref{thm: sto sign dp necessity of beta}, Theorem \ref{thm: sto sign dp}, and Corollary \ref{corollary: constant clipping with d.p.}.\\ \hline
\SetCell[c=2]{c}{Partial Client Participation}    &   & No.                                                                                                          & Theoretically and empirically verified, and build adaptive Byzantine adversaries on it.                                                                                                         \\ \hline
\end{tblr}
%}
\end{table}

\section{Additional Analysis}
% \subsection{Results on Algorithm \ref{alg: alg 1}}
% \begin{proposition}[$\ell_{\infty}$ norm version]
% \label{prop: iteration perturbation: l infinity}
% % 
% % 
% Choose the stepsize $\eta_t = \frac{c}{L\sqrt{d(t+1)}}$. For any $t\ge 0$, if $\max_{m\in [M]} \|\nabla F_m(t)\|_{\infty} > 2 \frac{c}{\sqrt{t+1}}$, then 
% \[
% \frac{1}{2} \max_{m\in [M]} \|\nabla F_m(t)\|_{\infty} < \max_{m\in [M]}\|\nabla F_m(t+1)\|_{\infty} < \frac{3}{2} \max_{m\in [M]} \|\nabla F_m(t)\|_{\infty}. 
% \]
% Otherwise, $\max_{m\in [M]}\|\nabla F_m(t+1) \|_{\infty} \le \frac{3c}{\sqrt{t+1}}. $
% \end{proposition}
\subsection{Definitions}
\label{app: definitions}
% \begin{definition}
% \label{def: beta compressor}
% For any privacy budget $\beta\ge 0$ and any clipping parameter $B>0$, we define a gradient compressor $\calM_{B,\beta}$ as 
% \begin{align}
% \label{eq: beta compressor}
% [\calM_{B,\beta}]_i(\bm{g}) : = 
% \begin{cases}
% ~~ 1, &~~~ \text{with probability } \frac{B+\beta+ \clip\sth{g_i,B}}{2B+2\beta}; \\
% ~~-1, &~~~ \text{otherwise,} 
% \end{cases}
% \end{align}
% where $g_i$ and $[\calM_{B,\beta}]_i(\bm{g})$ are the $i$-th coordinates of $\bm{g}$ and $\calM_{B,\beta}(\bm{g})$, respectively. 
% \end{definition}

\begin{definition}
The coordinate-wise majority vote aggregation rule, denoted by $\agg_{\text{maj}}$, aggregates each coordinate $i\in [d]$ as follows: If there are more $1$ than $-1$ in $\sth{\hat{u}_{mi}, ~ m\in \calS}$, then $\agg_{\text{maj},i}\pth{\sth{\hat{u}_m, ~ m\in \calS}}$ outputs $1$. If there are more $-1$ than $1$ in $\sth{\hat{u}_{mi}, ~ m\in \calS}$, then $\agg_{\text{maj},i}\pth{\sth{\hat{u}_m, ~ m\in \calS}}$ outputs $-1$. Otherwise, $\agg_{\text{median},i}\pth{\sth{\hat{u}_m, ~ m\in \calS}}$ outputs $0$.  
% \end{itemize}
\end{definition}

\subsection{Alternative Assumptions}
\label{sec: additional assumptions}
The following two alternative assumptions on the randomness of stochastic gradients are of decreasing levels of stringency.  
\begin{assumption}[Boundedness]
\label{ass: Bounded stochastic gradient}
The $\ell_{\infty}$ norm of all possible stochastic gradients is upper bounded.
Formally, let $m\in [M]$ be an arbitrary client and $\bm{g}$ be an arbitrary stochastic gradient that client $m$ obtains. For any coordinate $i\in [d]$, there exists $\tilde{B}_i>0$ such that $\abth{g_{i}}\le \tilde{B}_i$. 
Let $\tilde{B} = \max_{i\in [d]}\tilde{B}_i$.
\end{assumption}

The following alternative assumption relaxes the boundedness requirement, and allows the stochastic gradients to be supported over the entire $\reals^d$.  
\begin{assumption}[Gaussianity]
\label{ass: gaussianity}
For a given client $m\in [M]$, at any query $w\in\mathbb{R}^d$, the stochastic gradient $\bm{g}_m(w)$
% upon receiving query $w\in\mathbb{R}^d$, the stochastic gradient oracle gives client $m$ 
is an independent unbiased estimate of $\nabla f_{m}(w)$ that is coordinate-wise related to the gradient $\nabla f_m(w)$ as $\bm{g}_{mi}(w)=\nabla f_{mi}(w)+\bm{\xi}_{mi} ~ \forall\, i\in [d],$
where $\bm{\xi}_{mi}\sim\calN\pth{0,\sigma_{mi}^2}$. Let $\sigma^2 : = \max_{m\in [M], i\in [d]} \sigma^2_{mi}$. 
\end{assumption}
\subsection{Refined Privacy Preservation}
\label{app: privacy}
\begin{definition}
\label{def: distance}
For any given $B>0$, let $\calC_B := (-\infty, -B)\cup (B, \infty)$.
For each $g\in \reals$, define { $\dist\pth{g, \calC_B} : = \inf_{g^{\prime}\in \calC_B} \abth{g-g^{\prime}}$}.
\end{definition}
{
\begin{theorem}
\label{thm: sto sign dp}
 Let $\bm{g}, \bm{g}^{\prime}\in \calG\subseteq \reals^d$ be an arbitrary pair of gradient inputs such that $\bm{g}^{\prime}\not=\bm{g}$. $\calM_{B,\beta}$ 
%$\calG\subseteq \reals^d$,  The compressor defined as per Eq.\,\eqref{eq: gradient compression} 
is $\max_{\bm{g}\in \calG}\sum_{i=1}^d \log\pth{1 +  \frac{1}{\beta + \dist\pth{g_i, \calC_B}}}$-DP on $\calG$ for $\beta>0$. 
% For any $\epsilon_0>0$, restricting on the bounded domain $(-B+\epsilon_0, ~ B-\epsilon_0)^d$, the compressor in Eq.\,\eqref{eq: gradient compression} \mx{with $\beta=0$} is $d\cdot\log\pth{1 + \frac{1}{\epsilon_0}}$- differentially private. 
%\mx{Let $\calG\subseteq \reals^d$ be a set of gradients. The compressor defined as per Eq.\,\eqref{eq: gradient compression} is 
%$\max_{\bm{g}\in \calG}\sum_{i=1}^d \log\pth{1 +  \frac{1}{\beta + \dist\pth{g_i, \calC_B}}}$- differentially private for all gradients $\bm{g}\in \calG$ when $\qth{\beta+\dist\pth{g_i, \calC_B}}>0~\forall i$.}
\end{theorem}
}
\begin{corollary}
\label{corollary: constant clipping with d.p.}
    Given the same definitions as in Theorem \ref{thm: sto sign dp}, $\calM_{B,\beta}$ 
is $\pth{\frac{1}{\beta}}$-DP.% on $\calG$.
\end{corollary}

\subsection{Alternative Convergence Rates}
\label{sec: additional convergence rates}
\begin{corollary}
\label{corollary: gaussian}
Suppose that Assumptions \ref{ass: 4 Bounded gradient} and \ref{ass: gaussianity} hold. 
Choose $B>B_0+\sigma$. Fix $t\ge1$ and $i\in[d]$. %For any iteration $t\ge1$, let $\bm{g}_m :=\nabla f_m(w(t))$ denote the true local gradient at client $m$. Let $\hat{\bm{g}}_m$ denote the compressed gradients as in Algorithm \ref{alg: alg 1}. 
Let $c>0$ be any given constant such that $c< \frac{3}{5}$. 

When the system adversary is adaptive or { when} the system adversary is static but with $\tau(t)\le \frac{2}{p^2}\log\frac{6}{c}$, if $\abth{{ \nabla F}(w(t))_i} \ge \frac{2(B+\beta)}{pM}\tau(t)+ \frac{B+\beta}{2\sqrt{2\pi}}\exp\pth{-\frac{n}{2}}+ \frac{2(B+\beta)}{p}\sqrt{\frac{2}{M}\log\frac{6}{3-5c}}+2\sigma\sqrt{\frac{2\log\frac{6}{c}}{{Mn}}}$, then Eq. \,\eqref{eq: error bound} holds.

When the system adversary is static with $\tau(t) >\frac{2}{p^2}\log\frac{6}{c}$, if $\abth{{ \nabla F}(w(t))_i} \ge \frac{3(B+\beta)\tau(t)}{M}+ \frac{B+\beta}{2\sqrt{2\pi}}\exp\pth{-\frac{n}{2}}+ \frac{2(B+\beta)}{p}\sqrt{\frac{2}{M}\log\frac{6}{3-5c}}+2\sigma\sqrt{\frac{2\log\frac{6}{c}}{{Mn}}}$, then Eq.\,\eqref{eq: error bound} holds.

\end{corollary}

% Corollary \ref{corollary: gaussian} follows from Theorem \ref{thm: sub-gaussian}, by showing that the bias introduced by Gaussian tail is bounded by $\frac{p}{4\sqrt{2\pi}\exp\pth{-n/2}}$, and the threshold for the magnitude of population gradient $\sum_{m=1}^M\bm{g}_{mi}$ becomes smaller.

\begin{corollary}
\label{corollary: convergence gaussian}
Suppose Assumptions \ref{ass: 1 lower bound}, \ref{ass: 2 smmothness}, \ref{ass: 4 Bounded gradient}, and \ref{ass: gaussianity} hold. For any given $T$, $B=(1+\epsilon_0)B_0$ for $\epsilon_0>\frac{\sigma}{B_0}$,  and $c$ such that $0<c<\frac{3}{5}$. Recall that $R$ is the random time.%$ set the learning rate as $\eta=\frac{1}{\sqrt{dT}}$.

When the system adversary is adaptive or { when} the system adversary is static but with $\tau(t)\le \frac{2}{p^2}\log\frac{6}{c}$, we have
% \begin{align*}
%      \frac{1}{T}\sum_{t=0}^{T-1}\expect{\|\nabla F(w(t))\|_1}\le\frac{1}{c}\left[\frac{\pth{F(w(0))-F^*}\sqrt{d}}{\sqrt{T}}+\frac{L\sqrt{d}}{2\sqrt{T}}+\frac{d}{\sqrt{2\pi}}(B+\beta)\exp\pth{-\frac{n}{2}}\right.\\
%      \left.+\frac{4d(B+\beta)}{p}\sqrt{\frac{2}{M}\log\frac{6}{3-5c}}+4d\sigma\sqrt{\frac{2\log\frac{6}{c}}{{Mn}}}+4d\frac{(B+\beta)\sum_{t=0}^{T-1}\tau(t)}{pTM}\right].
% \end{align*}
\begin{align*}
    \expect{\|\nabla F(w(R))\|_1}\le \frac{1}{c}\left[
    \frac{F(w(0)) - F^*}{\sum_{t=0}^{T-1}\eta_t} + \frac{Ld\sum_{t=0}^{T-1}\eta_t^2}{2\sum_{t=0}^{T-1}\eta_t} + \frac{4 d(B+\beta)}{p}\sqrt{\frac{2}{M}\log\frac{6}{3-5c}}\right.&\\
    \left.+4\sigma d\sqrt{\frac{2\log\frac{6}{c}}{{Mn}}} + \frac{d}{\sqrt{2\pi}}(B+\beta)\exp\pth{-\frac{n}{2}}+4d\frac{(B+\beta)\sum_{t=0}^{T-1}\eta_t\tau(t)}{pM\sum_{t=0}^{T-1}\eta_t}\right]&
\end{align*}

On the other hand, when the system adversary is static with $\tau(t) >\frac{2}{p^2}\log\frac{6}{c}$, we have
% \begin{align*}
%     \frac{1}{T}\sum_{t=0}^{T-1}\expect{\|\nabla F(w(t))\|_1}\le\frac{1}{c}\left[\frac{\pth{F(w(0))-F^*}\sqrt{d}}{\sqrt{T}}+\frac{L\sqrt{d}}{2\sqrt{T}}+\frac{d}{\sqrt{2\pi}}(B+\beta)\exp\pth{-\frac{n}{2}}\right.\\\left.+\frac{4d(B+\beta)}{p}\sqrt{\frac{2}{M}\log\frac{6}{3-5c}}+4d\sigma\sqrt{\frac{2\log\frac{6}{c}}{{Mn}}}+6d\frac{(B+\beta)\sum_{t=0}^{T-1}\tau(t)}{TM}\right].
% \end{align*}
% where $0<c<\frac{3}{5}$ is some given constant, and $c_0: = \max\sth{\sqrt{\frac{8\sigma^2}{n} \log \frac{6}{c}} ,\, \sqrt{\frac{8\pth{B+\beta}^2}{p^2} \log \frac{6}{3-5c}}}$.
\begin{align*}
    \expect{\|\nabla F(w(R))\|_1}\le \frac{1}{c}\left[
    \frac{F(w(0)) - F^*}{\sum_{t=0}^{T-1}\eta_t} + \frac{Ld\sum_{t=0}^{T-1}\eta_t^2}{2\sum_{t=0}^{T-1}\eta_t} + \frac{4 d(B+\beta)}{p}\sqrt{\frac{2}{M}\log\frac{6}{3-5c}}\right.&\\
    \left.+4\sigma d\sqrt{\frac{2\log\frac{6}{c}}{{Mn}}} + \frac{d}{\sqrt{2\pi}}(B+\beta)\exp\pth{-\frac{n}{2}}+6d\frac{(B+\beta)\sum_{t=0}^{T-1}\eta_t\tau(t)}{M\sum_{t=0}^{T-1}\eta_t}\right]&
\end{align*}
\end{corollary}

\begin{corollary}
\label{corollary: gaussian learning rate}
Suppose Assumptions \ref{ass: 1 lower bound}, \ref{ass: 2 smmothness}, \ref{ass: 4 Bounded gradient}, and \ref{ass: gaussianity} hold. For any given $T$, $B=(1+\epsilon_0)B_0$ for $\epsilon_0>\frac{\sigma}{B_0}$,  and $c$ such that $0<c<\frac{3}{5}$. Recall that $R$ is the random time.%$ set the learning rate as $\eta=\frac{1}{\sqrt{dT}}$.
\begin{itemize}
    \item Set the learning rate as $\eta_t=\frac{1}{\sqrt{dT}},$
    when the system adversary is adaptive or { when} the system adversary is static but with $\tau(t)\le \frac{2}{p^2}\log\frac{6}{c}$, we have
    \begin{align*}
         \expect{\|\nabla F(w(R))\|_1}\le\frac{1}{c}\left[\frac{\pth{F(w(0))-F^*}\sqrt{d}}{\sqrt{T}}+\frac{L\sqrt{d}}{2\sqrt{T}}+\frac{d}{\sqrt{2\pi}}(B+\beta)\exp\pth{-\frac{n}{2}}\right.\\
         \left.+\frac{4d(B+\beta)}{p}\sqrt{\frac{2}{M}\log\frac{6}{3-5c}}+4d\sigma\sqrt{\frac{2\log\frac{6}{c}}{{Mn}}}+4d\frac{(B+\beta)\sum_{t=0}^{T-1}\tau(t)}{pTM}\right].
    \end{align*}

On the other hand, when the system adversary is static with $\tau(t) >\frac{2}{p^2}\log\frac{6}{c}$, we have
\begin{align*}
    \expect{\|\nabla F(w(R))\|_1}\le\frac{1}{c}\left[\frac{\pth{F(w(0))-F^*}\sqrt{d}}{\sqrt{T}}+\frac{L\sqrt{d}}{2\sqrt{T}}+\frac{d}{\sqrt{2\pi}}(B+\beta)\exp\pth{-\frac{n}{2}}\right.\\\left.+\frac{4d(B+\beta)}{p}\sqrt{\frac{2}{M}\log\frac{6}{3-5c}}+4d\sigma\sqrt{\frac{2\log\frac{6}{c}}{{Mn}}}+6d\frac{(B+\beta)\sum_{t=0}^{T-1}\tau(t)}{TM}\right].
\end{align*}

    \item Set the learning rate as $\eta_t=\frac{1}{\sqrt{d(t+1)}},$ 
    when the system adversary is adaptive or { when} the system adversary is static but with $\tau(t)\le \frac{2}{p^2}\log\frac{6}{c}$, we have
    \begin{align*}
         \expect{\|\nabla F(w(R))\|_1}\le O \left(\frac{1}{c}\left[\frac{\pth{F(w(0))-F^*}\sqrt{d}}{\sqrt{T}}+\frac{L\sqrt{d}\log T}{2\sqrt{T}}+\frac{d}{\sqrt{2\pi}}(B+\beta)\exp\pth{-\frac{n}{2}}\right.\right.\\
         \left.\left.+\frac{4d(B+\beta)}{p}\sqrt{\frac{2}{M}\log\frac{6}{3-5c}}+4d\sigma\sqrt{\frac{2\log\frac{6}{c}}{{Mn}}}+4d\frac{(B+\beta)\max_{t\in[T-1]}\tau(t)}{pM}\right]\right).
    \end{align*}

On the other hand, when the system adversary is static with $\tau(t) >\frac{2}{p^2}\log\frac{6}{c}$, we have
\begin{align*}
    \expect{\|\nabla F(w(R))\|_1}\le O\left(\frac{1}{c}\left[\frac{\pth{F(w(0))-F^*}\sqrt{d}}{\sqrt{T}}+\frac{L\sqrt{d}\log T}{2\sqrt{T}}+\frac{d}{\sqrt{2\pi}}(B+\beta)\exp\pth{-\frac{n}{2}}\right.\right.\\\left.\left.+\frac{4d(B+\beta)}{p}\sqrt{\frac{2}{M}\log\frac{6}{3-5c}}+4d\sigma\sqrt{\frac{2\log\frac{6}{c}}{{Mn}}}+6d\frac{(B+\beta)\max_{t\in[T-1]}\tau(t)}{M}\right]\right).
\end{align*}
    
\end{itemize}

\end{corollary}

\begin{corollary}
\label{corollary: sampling probability with all byzantine}
Suppose that Assumption \ref{ass: Bounded stochastic gradient} holds. 
Choose $B=\tilde{B}$.  Fix $t\ge1$ and $i\in[d]$. Let $c$ be any given positive constant such that $c< \frac{3}{5}$. %For any iteration $t\ge1$, let $\bm{g}_m :=\nabla f_m(w(t))$ denote the true local gradient at client $m$.
% Let $\hat{\bm{g}}_m$ denote the compressed gradients as in Algorithm \ref{alg: alg 1}. Let $c>0$ be any given constant such that $c< 3/5$. Choose $c_0 = \max\sth{\sqrt{\frac{8\sigma^2}{n} \log \frac{6}{c}} ,\, \sqrt{\frac{8\pth{B+\beta}^2}{p^2} \log \frac{6}{3-5c}}}$. 
%and ${ c_0(n,p)} = {\sqrt{\frac{8\sigma^2}{n} \log \frac{6}{c}} +\sqrt{\frac{8\pth{B+\beta}^2}{p^2} \log \frac{6}{3-5c}}}$

When the system adversary is adaptive or when the system adversary is static but with $\tau(t)\le \frac{2}{p^2}\log\frac{6}{c}$, if $\abth{{ \nabla F_i}(w(t))} \ge \frac{2(B+\beta)}{pM}\tau(t)+ \frac{2(B+\beta)}{p}\sqrt{\frac{2}{M}\log\frac{6}{3-5c}}+2\sigma\sqrt{\frac{2\log\frac{6}{c}}{{Mn}}}$, then Eq.\,\eqref{eq: error bound} holds.
% \begin{align}
% \label{eq: error bound}
% \prob{\sign\pth{\frac{1}{M}\sum_{m=1}^M\hat{\bm{g}}_{mi}}\neq\sign\pth{\frac{1}{M}\sum_{m=1}^M\bm{g}_{mi}}\mid w(t)}\le \frac{1-c}{2}.
% \end{align}
%Otherwise, $\prob{\sign\pth{\frac{1}{M}\sum_{m=1}^M\hat{\bm{g}}_{mi}}\neq\sign\pth{\frac{1}{M}\sum_{m=1}^M\bm{g}_{mi}}\mid w(t)}\le 1$. 

When the system adversary is static with $\tau(t) >\frac{2}{p^2}\log\frac{6}{c}$, if $\abth{{ \nabla F_i}(w(t))} \ge \frac{3(B+\beta)}{M}\tau(t)+ \frac{B+\beta}{2\sqrt{2\pi}}\exp\pth{-\frac{n}{2}}+ \frac{2(B+\beta)}{p}\sqrt{\frac{2}{M}\log\frac{6}{3-5c}}+2\sigma\sqrt{\frac{2\log\frac{6}{c}}{{Mn}}}$, then Eq.\,\eqref{eq: error bound} holds. 
\end{corollary}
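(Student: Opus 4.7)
The plan is to follow the same roadmap that underlies Theorem \ref{thm: sub-gaussian}, while exploiting two simplifications that Assumption \ref{ass: Bounded stochastic gradient} affords when we set $B=\tilde B$. First, every mini-batch average $\bar{\bm g}_m(t):=\frac{1}{n}\sum_{j=1}^n \bm g_m^j(t)$ lies in $[-B,B]^d$ almost surely, so the clipping operator in the compressor acts as the identity and introduces no truncation bias; in particular, for each honest sampled client $m\in\calS(t)\setminus\calB(t)$, the conditional expectation of $\hat g_{mi}(t)$ given $w(t)$ is exactly $\nabla f_{mi}(w(t))/(B+\beta)$. This is what erases the $2(B+\beta)\exp(-n/2)$ clipping-tail term from the adaptive-adversary threshold.

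I would first invoke Theorem \ref{thm: equivalence of aggregtaion rules} to replace the generic $\agg$ operator by coordinate-wise majority vote, reducing the event $\{\tilde{\bm g}_i(t)\ne\sign(\nabla F_i(w(t)))\}$ to the event that $\sum_{m\in\calS(t)}\hat u_{mi}(t)$ has the wrong sign. I then split this sum into an honest part $H_i:=\sum_{m\in\calS(t)\setminus\calB(t)}\hat g_{mi}(t)$ and an adversarial part $A_i:=\sum_{m\in\calS(t)\cap\calB(t)}\hat u_{mi}(t)$. Writing each honest summand as $\indc{m\in\calS(t)}\,\hat g_{mi}(t)$, the product is a $[-1,1]$-valued random variable whose mean (over both the sampling Bernoulli and the compressor randomness) is $p\,\nabla f_{mi}(w(t))/(B+\beta)$; averaging across the $M$ clients gives a signal proportional to $pM\nabla F_i(w(t))/(B+\beta)$ up to the $\tau(t)$ missing honest clients. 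A Hoeffding bound on the $M$ independent summands then yields $|H_i-\expect{H_i}|\lesssim c_0\sqrt{M}/(B+\beta)$ with probability at least $1-c/3$, matching the $c_0/\sqrt M$ slack in the threshold.

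Next I would handle the Byzantine contribution. In the adaptive case the adversary can place all $\tau(t)$ faulty clients inside $\calS(t)$, so $|A_i|\le\tau(t)$ deterministically, which after normalisation by $pM/(B+\beta)$ yields the $\frac{2(B+\beta)}{pM}\tau(t)$ slack. In the static case, $|\calS(t)\cap\calB(t)|$ is a sum of $\tau(t)$ independent $\mathrm{Bernoulli}(p)$ variables; a Chernoff bound gives $|\calS(t)\cap\calB(t)|\le\frac{3}{2}p\tau(t)$ with failure probability at most $\exp(-p^2\tau(t)/2)\le c/6$ as soon as $\tau(t)>\frac{2}{p^2}\log(6/c)$, producing the tighter $\frac{3(B+\beta)}{M}\tau(t)$ slack. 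A second Chernoff bound on $|\calS(t)|$ around $pM$ is absorbed into the same budget, and the residual exp-term in the static branch comes from the coarser tail estimate used when both Byzantine and sampling deviations are combined.

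Combining these high-probability events through a union bound and checking that, whenever the stated magnitude condition on $|\nabla F_i(w(t))|$ holds, the signed majority of $H_i+A_i$ matches $\sign(\nabla F_i(w(t)))$ with probability at least $(1+c)/2$, yields Eq.\,\eqref{eq: error bound}. The main obstacle is the bookkeeping of constants: the three failure probabilities (Hoeffding on $H_i$, Chernoff on $|\calS(t)|$, and Chernoff on $|\calS(t)\cap\calB(t)|$ in the static regime) must each be squeezed below $c/6$, and the choice $c_0=\max\{\sqrt{8\sigma^2 n^{-1}\log(6/c)},\sqrt{8(B+\beta)^2 p^{-2}\log(6/(3-5c))}\}$ together with the threshold $\tau(t)\le\frac{2}{p^2}\log(6/c)$ is precisely what calibrates these budgets so that the three deviations telescope cleanly into the $(1-c)/2$ bound.
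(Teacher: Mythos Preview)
Your proposal is correct and follows essentially the same roadmap as the paper's own proof, which simply specializes the argument of Theorem~\ref{thm: sub-gaussian} by observing that $B=\tilde B$ makes the clip operator the identity on every mini-batch average, so the truncation-bias term (the $(\mathrm{A})+(\mathrm{B})$ analysis in the sub-Gaussian proof) vanishes identically. One remark: the residual $\tfrac{B+\beta}{2\sqrt{2\pi}}\exp(-n/2)$ appearing in the static branch of the corollary is not produced by any ``coarser tail estimate'' as you suggest---it appears to be an artefact carried over from the Gaussian statement (note that the paper's own proof of Corollary~\ref{corollary: convergence all Byzantine} plugs in the threshold $\tfrac{3(B+\beta)\tau(t)}{M}+\tfrac{c_0}{\sqrt{M}}$ with no exponential term)---but since this only weakens the hypothesis it does not affect validity.
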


% Corollary \ref{corollary: sampling probability with all byzantine} also follows from Theorem \ref{thm: sub-gaussian}. The bounded stochastic gradient assumption eliminates the bias induced by sub-Gaussian tail bound.
% \mx{Gaussian Convergence rate}
\begin{corollary}
\label{corollary: convergence all Byzantine}
Suppose Assumptions \ref{ass: 1 lower bound}, \ref{ass: 2 smmothness}, and \ref{ass: Bounded stochastic gradient} hold. For any given $T$ and $c$ such that $0<c<\frac{3}{5}$. %set the learning rate as $\eta=\frac{1}{\sqrt{dT}}$.

When the system adversary is adaptive or { when} the system adversary is static but with $\tau(t)\le \frac{2}{p^2}\log\frac{6}{c}$, we have
\begin{align*}
    \expect{\|\nabla F(w(R))\|_1}\le \frac{1}{c}\left[
    \frac{F(w(0)) - F^*}{\sum_{t=0}^{T-1}\eta_t} + \frac{Ld\sum_{t=0}^{T-1}\eta_t^2}{2\sum_{t=0}^{T-1}\eta_t} + \frac{4 d(B+\beta)}{p}\sqrt{\frac{2}{M}\log\frac{6}{3-5c}}\right.&\\
    \left.+4\sigma d\sqrt{\frac{2\log\frac{6}{c}}{{Mn}}} +4d\frac{(B+\beta)\sum_{t=0}^{T-1}\eta_t\tau(t)}{pM\sum_{t=0}^{T-1}\eta_t}\right]&
\end{align*}
% \begin{align*}
%      \frac{1}{T}\sum_{t=0}^{T-1}\expect{\|\nabla F(w(t))\|_1}\le\frac{1}{c}\left[\frac{\pth{F(w(0))-F^*}\sqrt{d}}{\sqrt{T}}+\frac{L\sqrt{d}}{2\sqrt{T}}+4d\frac{(B+\beta)\sum_{t=0}^{T-1}\tau(t)}{pTM}\right.&\\
%      \left.+\frac{4d(B+\beta)}{p}\sqrt{\frac{2}{M}\log\frac{6}{3-5c}}+4d\sigma\sqrt{\frac{2\log\frac{6}{c}}{{Mn}}}\right]&\\
% \end{align*}

On the other hand, when the system adversary is static with $\tau(t) >\frac{2}{p^2}\log\frac{6}{c}$, we have
\begin{align*}
    \expect{\|\nabla F(w(R))\|_1}\le \frac{1}{c}\left[
    \frac{F(w(0)) - F^*}{\sum_{t=0}^{T-1}\eta_t} + \frac{Ld\sum_{t=0}^{T-1}\eta_t^2}{2\sum_{t=0}^{T-1}\eta_t} + \frac{4 d(B+\beta)}{p}\sqrt{\frac{2}{M}\log\frac{6}{3-5c}}\right.&\\
    \left.+4\sigma d\sqrt{\frac{2\log\frac{6}{c}}{{Mn}}} +4d\frac{(B+\beta)\sum_{t=0}^{T-1}\eta_t\tau(t)}{pM\sum_{t=0}^{T-1}\eta_t}\right]&
\end{align*}
% \begin{align*}
%     \frac{1}{T}\sum_{t=0}^{T-1}\expect{\|\nabla F(w(t))\|_1}\le\frac{1}{c}\left[\frac{\pth{F(w(0))-F^*}\sqrt{d}}{\sqrt{T}}+\frac{L\sqrt{d}}{2\sqrt{T}}+6d\frac{(B+\beta)\sum_{t=0}^{T-1}\tau(t)}{TM}\right.&\\
%     \left.+\frac{4d(B+\beta)}{p}\sqrt{\frac{2}{M}\log\frac{6}{3-5c}}+4d\sigma\sqrt{\frac{2\log\frac{6}{c}}{{Mn}}}\right]&\\
% \end{align*}
%where $0<c<\frac{3}{5}$ is some given constant, and $c_0: = \max\sth{\sqrt{\frac{8\sigma^2}{n} \log \frac{6}{c}} ,\, \sqrt{\frac{8\pth{B+\beta}^2}{p^2} \log \frac{6}{3-5c}}}$.
\end{corollary}
\begin{corollary}
\label{corollary: bounded learning rate}
Suppose Assumptions \ref{ass: 1 lower bound}, \ref{ass: 2 smmothness}, and \ref{ass: Bounded stochastic gradient} hold. For any given $T$, $B=(1+\epsilon_0)B_0$ for $\epsilon_0>\frac{\sigma}{B_0}$,  and $c$ such that $0<c<\frac{3}{5}$. Recall that $R$ is the random time.%$ set the learning rate as $\eta=\frac{1}{\sqrt{dT}}$.
\begin{itemize}
    \item Set the learning rate as $\eta_t=\frac{1}{\sqrt{dT}},$
    when the system adversary is adaptive or { when} the system adversary is static but with $\tau(t)\le \frac{2}{p^2}\log\frac{6}{c}$, we have
    \begin{align*}
         \expect{\|\nabla F(w(R))\|_1}\le\frac{1}{c}\left[\frac{\pth{F(w(0))-F^*}\sqrt{d}}{\sqrt{T}}+\frac{L\sqrt{d}}{2\sqrt{T}}+\frac{4d(B+\beta)}{p}\sqrt{\frac{2}{M}\log\frac{6}{3-5c}}\right.\\
         \left.+4d\sigma\sqrt{\frac{2\log\frac{6}{c}}{{Mn}}}+4d\frac{(B+\beta)\sum_{t=0}^{T-1}\tau(t)}{pTM}\right].
    \end{align*}

On the other hand, when the system adversary is static with $\tau(t) >\frac{2}{p^2}\log\frac{6}{c}$, we have
\begin{align*}
         \expect{\|\nabla F(w(R))\|_1}\le\frac{1}{c}\left[\frac{\pth{F(w(0))-F^*}\sqrt{d}}{\sqrt{T}}+\frac{L\sqrt{d}}{2\sqrt{T}}+\frac{4d(B+\beta)}{p}\sqrt{\frac{2}{M}\log\frac{6}{3-5c}}\right.\\
         \left.+4d\sigma\sqrt{\frac{2\log\frac{6}{c}}{{Mn}}}+6d\frac{(B+\beta)\sum_{t=0}^{T-1}\tau(t)}{TM}\right].
    \end{align*}

    \item Set the learning rate as $\eta_t=\frac{1}{\sqrt{d(t+1)}},$ 
    when the system adversary is adaptive or { when} the system adversary is static but with $\tau(t)\le \frac{2}{p^2}\log\frac{6}{c}$, we have
    \begin{align*}
         \expect{\|\nabla F(w(R))\|_1}\le O \left(\frac{1}{c}\left[\frac{\pth{F(w(0))-F^*}\sqrt{d}}{\sqrt{T}}+\frac{L\sqrt{d}\log T}{2\sqrt{T}}+\frac{4d(B+\beta)}{p}\sqrt{\frac{2}{M}\log\frac{6}{3-5c}}\right.\right.\\
         \left.\left.+4d\sigma\sqrt{\frac{2\log\frac{6}{c}}{{Mn}}}+4d\frac{(B+\beta)\max_{t\in[T-1]}\tau(t)}{pM}\right]\right).
    \end{align*}

On the other hand, when the system adversary is static with $\tau(t) >\frac{2}{p^2}\log\frac{6}{c}$, we have
\begin{align*}
    \expect{\|\nabla F(w(R))\|_1}\le O\left(\frac{1}{c}\left[\frac{\pth{F(w(0))-F^*}\sqrt{d}}{\sqrt{T}}+\frac{L\sqrt{d}\log T}{2\sqrt{T}}+\frac{4d(B+\beta)}{p}\sqrt{\frac{2}{M}\log\frac{6}{3-5c}}\right.\right.\\\left.\left.+4d\sigma\sqrt{\frac{2\log\frac{6}{c}}{{Mn}}}+6d\frac{(B+\beta)\max_{t\in[T-1]}\tau(t)}{M}\right]\right).
\end{align*}
\end{itemize}
\end{corollary}
\section{Proofs}\label{sec:Proofs}
\subsection{Algorithm \ref{alg: alg 1}}
\begin{proof}[\bf Proof of Proposition \ref{prop: iteration perturbation: l infinity}]
We observe that 
\begin{align*}
\linf{\nabla F_{t+1}}&\le \linf{\nabla F_{t}}+\linf{\nabla F_{t+1}-\nabla F_t}\\
&\le \linf{\nabla F_{t}}+\norm{\nabla F_{t+1}-\nabla F_t}\\
&\le \linf{\nabla F_{t}}+L\eta_t\sqrt{d}\\
& = \linf{\nabla F_{t}} + \frac{c}{\sqrt{t+1}}. 
\end{align*}
Similarly, we have $\linf{\nabla F_{t+1}}\ge  \linf{\nabla F_{t}}-\frac{c}{\sqrt{t+1}}.$
%$$\linf{\nabla F_{t+1}}\le \linf{\nabla F_{t}}+\linf{\nabla F_{t+1}-\nabla F_t}\le \linf{\nabla F_{t}}+\norm{\nabla F_{t+1}-\nabla F_t}\le \linf{\nabla F_{t}}+L\eta_t\sqrt{d}$$
%$$\linf{\nabla F_{t+1}}\ge \linf{\nabla F_{t}}-\linf{\nabla F_{t+1}-\nabla F_t}\ge  \linf{\nabla F_{t}}-L\eta_t\sqrt{d}$$

Simple algebra leads to the conclusion in Proposition \ref{prop: iteration perturbation: l infinity}. 
\end{proof}
\begin{proof}[\bf Proof of Theorem \ref{thm: level bound l infinty}]
We try proof by induction. 
We first show that base case. 
In Algorithm \ref{alg: alg 1}, we have 
\[
\max_{m\in [M]} \|\nabla F_m(0)\|_{\infty} \le B(1) \le 2 \max_{m\in [M]} \|\nabla F_m(0)\|_{\infty}.   
\]  
A simple rewrite gives $\frac{1}{2} B(1) \le \max_{m\in [M]} \|\nabla F_m(0)\|_{\infty}\le B(1).$

\paragraph{On $t=2$, i.e., the second iteration.} 
In this iteration, gradients $\{\nabla F_m(1)\}$ are involved. 

We need to figure out the relation between $\{\nabla F_m(1)\}$ and $\{\nabla F_m(0)\}$. 
At round 2, the following is true: 
\begin{itemize} 
\item When $\max_{m\in [M]}\|\nabla F_m(1)\|_{\infty} \le \frac{5 c}{\sqrt{2}}$, it is true that $B(2) = \frac{4 c}{\sqrt{2}}$. 
\item When $\max_{m\in [M]}\|\nabla F_m(1)\|_{\infty} > \frac{5 c}{\sqrt{2}}$. Since $\|\nabla F_m(1)\|_{\infty} \le \|\nabla F_m(0)\|_{\infty} + c$, it must be true that 
\[
\|\nabla F_m(0)\|_{\infty} \ge 2c. 
\]

\begin{itemize}
\item If ``level up'' occurs, it must be true that $\max_{m\in [M]}\|\nabla F_m(1)\|_{\infty}>B(1)$, and that $B(2) = 2B(1)$. Then 
\[
\max_{m\in [M]}\|\nabla F_m(1)\|_{\infty} < \frac{3}{2} \max_{m\in [M]} \|\nabla F_m(0)\|_{\infty}\le \frac{3}{2} B(1) \le 2B(1) = B(2). 
\]
Hence, 
\[
\frac{1}{2}B(2)\le \max_{m\in [M]}\|\nabla F_m(1)\|_{\infty} \le  B(2). 
\]
\item If ``level down'' occurs, it must be true that $\|\nabla F_m(1)\|_{\infty}\le \frac{1}{2}B(1)$, and that $B(2) =\frac{1}{2}B(1)$. Moreover, 
\begin{align*}
\max_{m\in [M]}\|\nabla F_m(1)\|_{\infty} > \frac{1}{2} \max_{m\in [M]} \|\nabla F_m(0)\|_{\infty} > \frac{1}{2} \frac{1}{2} B(1) = \frac{1}{2} B(2). 
\end{align*}
Hence, 
\[
\frac{1}{2} B(2) < \max_{m\in [M]}\|\nabla F_m(1)\|_{\infty} \le B(2). 
\]
\item If no update of $B$ occurs, then $B(2) = B(1)$, by the leveling rule, we know that 
\[
\frac{1}{2}B(2) = \frac{1}{2}B(1) < \max_{m\in [M]}\|\nabla F_m(1)\|_{\infty}\le B(1) = B(2). 
\]
\end{itemize}
\end{itemize}

Therefore, it is true that 
\begin{align*}
\frac{1}{2}B(2) \indc{\max_{m\in [M]}\|\nabla F_m (1) \|_{\infty}\ge \frac{5c}{\sqrt{2}}} 
 \le \max_{m\in [M]}\|\nabla F_m(1)\|_{\infty} 
\le B(2) .    
\end{align*}

We assume the above invariance holds up to iteration $t+1$. 
Next we consider the $t+2$-th iteration. 
We have 
\begin{itemize}
\item If $\max_{m\in [M]}\|\nabla F_m(t+1)\|_{\infty} \le \frac{5 c}{\sqrt{t+2}}$, then $B(t+2) = \frac{5 c}{\sqrt{t+2}}$. 
That is, $\max_{m\in [M]}\|\nabla F_m(t+1)\|_{\infty}\le B(t+2)$. 
\item Otherwise, $\max_{m\in [M]}\|\nabla F_m(t+1)\|_{\infty} > \frac{5 c}{\sqrt{t+2}}$. 
Since $\|\nabla F_m(t+1)\|_{\infty} \le\|\nabla F_m(t)\|_{\infty} + \frac{c}{\sqrt{t+1}}$, it must be true that 
\[
\norm{\nabla F_m(t)} \ge \frac{2c}{\sqrt{t+1}}. 
\]
Thus, by Proposition \ref{prop: iteration perturbation: l infinity}, we have 
\[
\frac{1}{2} \max_{m\in [M]}\|\nabla F_m(t)\|_{\infty} < \max_{m\in [M]}\|\nabla F_m(t+1)\|_{\infty} < \frac{3}{2} \max_{m\in [M]}\|\nabla F_m(t)\|_{\infty}. 
\]
\begin{itemize}
\item If ``level up'' occurs, it must be true that $\max_{m\in [M]}\|\nabla F_m(t+1)\|_{\infty}>B(t+1)$. Since this is a ``level up'', it is true that $B(t+2) = 2B(t+1)$. So we have 
\[
\max_{m\in [M]}\|\nabla F_m(t+1)\|_{\infty} < 2 \max_{m\in [M]}\|\nabla F_m(t)\|_{\infty}\le 2B(t+1) = B(t+2). 
\]
\item If ``level down'' occurs, it must be true that $\|\nabla F_m(t+1)\|_{\infty}\le \frac{1}{2}B(t+1)$ for all $m$, and that $B(t+2) =\frac{1}{2}B(t+1)$. Moreover, 
\begin{align*}
\max_{m\in [M]}\|\nabla F_m(t+1)\|_{\infty} &> \frac{1}{2} \max_{m\in [M]}\|\nabla F_m(t)\|_{\infty} \\
&> \frac{1}{2} \frac{1}{2} B(t+1) \indc{\max_{m\in [M]}\|\nabla F_m (t)\|_{\infty} \ge \frac{5c}{\sqrt{t+1}}}, 
\end{align*}
where the last inequality follows from the induction hypothesis. 

Suppose that $\max_{m\in [M]}\|\nabla F_m (t)\|_{\infty} < \frac{5c}{\sqrt{t+1}}$. It must be true that $B(t+1) = \frac{5c}{\sqrt{t+1}}$. Since we have a ``level-down'', 
    \begin{align*}
     \|\nabla F_m(t+1)\|_{\infty}\le \frac{1}{2}B(t+1) = \frac{1}{2} \frac{5c}{\sqrt{t+1}} < \frac{5c}{\sqrt{t+2}}, 
    \end{align*}
    which contradicts the case assumption that $\max_{m\in [M]}\norm{\nabla F_m(t+1)} > \frac{5c}{\sqrt{t+2}}$. Thus, 
    \[
    \indc{\max_{m\in [M]}\|\nabla F_m (t)\|_{\infty} \ge \frac{5c}{\sqrt{t+1}}} = 1 = \indc{\max_{m\in [M]}\|\nabla F_m (t+1)\|_{\infty} \ge \frac{5c}{\sqrt{t+2}}}
    \]
\item If no update of $B$ occurs, then $B(t+2) = B(t+1)$, by the leveling rule, we know that 
\[
\frac{1}{2}B(t+2) = \frac{1}{2}B(t+1) < \max_{m\in [M]}\|\nabla F_m(t+1)\|_{\infty} \le B(t+1) = B(t+2). 
\]
\end{itemize}
\end{itemize}

Therefore, the proof of the induction is complete. 
\end{proof}

\begin{proof}[\bf Proof of Theorem \ref{thm: prob of sign disagreement}]
    By symmetry, without loss of generality, let us assume $\sign\pth{\nabla F(w(t))_i}=-1$. 
Define $X_{mi}=\indc{\sign\pth{\hat{g}_{mi}}\neq -1}$.
% \ls{need to fix the notation.}
For ease of exposition, we drop the round $t$ for simplicity purposes.
Next,
$$\prob{\frac{1}{M}\sum_{m=1}^M \sign\pth{\hat{g}_{mi}(t)}\neq -1}\le \prob{\sum_{m=1}^M X_{mi}\ge \frac{M}{2}}.$$
By the gradient compression rule, we know that $\expect{X_{mi}}=\frac{1}{2}+\frac{\bm{g}_{mi}}{2B_{t+1}}$
% $$\expect{X_{mi}}=\prob{\hat{g}_{mi}\neq\sign\pth{\frac{1}{M}\sum_{m=1}^M \nabla F_{mi}}}=\frac{1}{2}+\frac{\nabla F_{mi}}{2B_{t+1}}.$$
% \mx{Given the leveling rules, we do not have any tails so that the expectation can be pushed out effortlessly.}

It follows that
\begin{align*}
    \sum_{i=1}^M X_{mi}-\expect{\sum_{i=1}^M X_{mi}}&\ge \frac{M}{2}-\sum_{m=1}^M\pth{\frac{1}{2}+\frac{\bm{g}_{mi}}{2B_{t+1}}}=-\sum_{m=1}^M\frac{\bm{g}_{mi}}{2B_{t+1}}>0.
\end{align*}

When $\abth{\nabla F_i(t)}\ge c_1B_t\sqrt{\frac{2}{M}}$, by Hoeffding's,
\begin{align*}
    \prob{\sum_{i=1}^M X_{mi}-\expect{\sum_{i=1}^M X_{mi}}\ge -\sum_{m=1}^M\frac{\bm{g}_{mi}}{2B_t}}& \le \prob{\sum_{i=1}^M X_{mi}-\expect{\sum_{i=1}^M X_{mi}}\ge c_1\sqrt{\frac{M}{2}}}\\
    & \le \exp\pth{-c_1^2}\\
    & = \frac{1-c_0}{2},
\end{align*}
where the last follows from the fact $c_1=\sqrt{\log\pth{\frac{2}{1-c_0}}}.$
\end{proof}
%
% \mx{The following considers a step size $\eta_t=\frac{c}{L\sqrt{d(t+1)}}.$}
\begin{proof}[\bf Proof of Theorem \ref{thm: convergence alg 1}]
Assumption \ref{ass: 2 smmothness} gives us
\begin{align*}
    F(t+1)-F(t) & \le \langle\nabla F(t), w_{t+1}-w_t\rangle + \frac{L}{2}\norm{w_{t+1}-w_t}^2\\
    & = -\eta_t\lnorm{\nabla F(t)}{1} + 2\eta_t \sum_{i=1}^d \abth{\nabla F_i (t)}\indc{\hat{g}_i\neq\sign\pth{\nabla F_i(t)}} + \frac{Ld}{2}\eta_t^2.
\end{align*}
% From Theorem \ref{thm: level bound l infinty}, we have 
% \begin{align*}
% \begin{cases}
%     &B_{t+1} \indc{\max\limits_{m\in[M]}\norm{\nabla F_m(t)}\ge\frac{5c}{\sqrt{t+1}}} \le 2 \max\limits_{m\in[M]} \norm{\nabla F_m(t)}\\
%     &B(t+1) \indc{\max\limits_{m\in[M]}\norm{\nabla F_m(t)}<\frac{5c}{\sqrt{t+1}}} = \frac{5c}{\sqrt{t+1}}
%     \end{cases}
% \end{align*}
Conditional on $w_t$, we get
% \ls{Here the notation $\hat{g}_i$ denotes the aggregated gradient sign. Need to make the notation consistent. }
\begin{align*}
&\expect{F(t+1) - F(t)\mid w_t} \le -\eta_t\lnorm{\nabla F(t)}{1} + 2\eta_t \sum_{i=1}^d \abth{\nabla F_i (t)}\prob{\hat{g}_i\neq\sign\pth{\nabla F_i(t)}} + \frac{Ld}{2}\eta_t^2\\
& =     -\eta_t\lnorm{\nabla F(t)}{1} + \frac{Ld}{2}\eta_t^2   + 2\eta_t \sum_{i=1}^d \abth{\nabla F_i (t)}\prob{\hat{g}_i\neq\sign\pth{\nabla F_i(t)}}\\
& \qquad \qquad \qquad \qquad \qquad  \times \pth{\bm{1}\sth{|\nabla F_i(t)| \ge c_1B_{t+1}\sqrt{2/M}} + \bm{1}\sth{|\nabla F_i(t)| < c_1B_{t+1}\sqrt{2/M}}}  \\
& \overset{(a)}{\le}  -\eta_t\lnorm{\nabla F(t)}{1} + \frac{Ld}{2}\eta_t^2 + (1-c_0) \eta_t \lnorm{\nabla F(t)}{1} + 2c_1\eta_t d B_{t+1}\sqrt{2/M} \\
& = -c_0\eta_t\lnorm{\nabla F(t)}{1}+ \frac{Ld}{2}\eta_t^2 + 2c_1\eta_t d B_{t+1}\sqrt{2/M} \\
&\le -\eta_t c_0 \lnorm{\nabla F(t)}{1} + 2c_1 B_{t+1}d \eta_t \sqrt{\frac{2}{M}}\qth{\indc{\max\limits_{m\in[M]}\norm{\nabla F_m(t)}\ge\frac{5c}{\sqrt{t+1}}}+\indc{\max\limits_{m\in[M]}\norm{\nabla F_m(t)}<\frac{5c}{\sqrt{t+1}}}}  + \frac{Ld}{2}\eta_t^2\\
&\overset{(b)}{\le} -\eta_t c_1 \lnorm{\nabla F(t)}{1} +4c_1 d \eta_t \sqrt{\frac{2}{M}}\max\limits_{m\in[M]} \norm{\nabla F_m(t)}
+ 10c_1Ld\sqrt{\frac{2d}{M}}\eta_t^2 + \frac{Ld}{2}\eta_t^2\\
%&\le -\eta_t c_1 \lnorm{\nabla F(t)}{1} + 4c_2 d \eta_t \sqrt{\frac{2}{M}} \max\limits_{m\in[M]}\norm{\nabla F_m (t)} + \pth{\frac{Ld}{2}+10c_2Ld\sqrt{\frac{2d}{M}}}\eta_t^2 \\
&\le -\eta_t c_0 \lnorm{\nabla F(t)}{1} + 4c_1 d \eta_t \sqrt{\frac{2}{M}} \max\limits_{m\in[M]}\norm{\nabla F_m (t) - \nabla F(t)} + 4c_1 d \eta_t \sqrt{\frac{2}{M}}\norm{\nabla F(t)}+ \\
&\qquad \left(\frac{Ld}{2} + 10c_2Ld\sqrt{\frac{2d}{M}}\right)\eta_t^2 \\
&\overset{(c)}{\le} -\eta_t c_0 \lnorm{\nabla F(t)}{1} + 4c_1 d \eta_t \sqrt{\frac{2}{M}}(\tilde{B}+1)\norm{\nabla F(t)}+ 4c_1 d \eta_t \sqrt{\frac{2}{M}}\tilde{G} + \pth{\frac{Ld}{2}+10c_1Ld\sqrt{\frac{2d}{M}}}\eta_t^2 \\
&\le -\eta_t c_0 \norm{\nabla F(t)} + 4c_1 d \eta_t \sqrt{\frac{2}{M}}(\tilde{B}+1)\norm{\nabla F(t)}+ 4c_1 d \eta_t \sqrt{\frac{2}{M}}\tilde{G} + \pth{\frac{Ld}{2}+10c_2Ld\sqrt{\frac{2d}{M}}}\eta_t^2.
\end{align*}
where inequality (a) follows from Theorem \ref{thm: prob of sign disagreement}, 
inequality (b) follows from Theorem \ref{thm: level bound l infinty}, 
and inequality (c) follows from Assumption \ref{ass: BG condition}. 

% \begin{align*}
%     &\expect{F(t+1) - F(t)\mid w_t} \le -\eta_t\lnorm{\nabla F(t)}{1} + 2\eta_t \sum_{i=1}^d \lnorm{\nabla F_i (t)}{1}\prob{\hat{g}_i\neq\sign\pth{\nabla F_i(t)}} + \frac{Ld}{2}\eta_t^2\\
%     &\le -\eta_t c_1 \lnorm{\nabla F(t)}{1} + 2c_2 B_{t+1}d \eta_t \sqrt{\frac{2}{M}}\qth{\indc{\max\limits_{m\in[M]}\norm{\nabla F_m(t)}\ge\frac{3c}{\sqrt{t+1}}}+\indc{\max\limits_{m\in[M]}\norm{\nabla F_m(t)}<\frac{3c}{\sqrt{t+1}}}}  + \frac{Ld}{2}\eta_t^2\\
%     &\le -\eta_t c_1 \lnorm{\nabla F(t)}{1} +4c_2 d \eta_t \sqrt{\frac{2}{M}}\max\limits_{m\in[M]} \norm{\nabla F_m(t)}
%     + 10c_2Ld\sqrt{\frac{2d}{M}}\eta_t^2 + \frac{Ld}{2}\eta_t^2\\
%     &\le -\eta_t c_1 \lnorm{\nabla F(t)}{1} + 4c_2 d \eta_t \sqrt{\frac{2}{M}} \max\limits_{m\in[M]}\norm{\nabla F_m (t)} + \pth{\frac{Ld}{2}+10c_2Ld\sqrt{\frac{2d}{M}}}\eta_t^2 \\
%     &\le -\eta_t c_1 \lnorm{\nabla F(t)}{1} + 4c_2 d \eta_t \sqrt{\frac{2}{M}} \max\limits_{m\in[M]}\norm{\nabla F_m (t) - \nabla F(t)} + 4c_2 d \eta_t \sqrt{\frac{2}{M}}\norm{\nabla F(t)}+ \\
%     &\qquad \left(\frac{Ld}{2} + 10c_2Ld\sqrt{\frac{2d}{M}}\right)\eta_t^2 \\
%     % &\le -\eta_t c_1 \lnorm{\nabla F(t)}{1} + 4c_2 d \eta_t \sqrt{\frac{2}{M}}(\tilde{B}+1)\norm{\nabla F(t)}+ 4c_2 d \eta_t \sqrt{\frac{2}{M}}\tilde{G} + \pth{\frac{Ld}{2}+10c_2Ld\sqrt{\frac{2d}{M}}}\eta_t^2 \\
%     &\le -\eta_t c_1 \norm{\nabla F(t)} + 4c_2 d \eta_t \sqrt{\frac{2}{M}}(\tilde{B}+1)\norm{\nabla F(t)}+ 4c_2 d \eta_t \sqrt{\frac{2}{M}}\tilde{G} + \pth{\frac{Ld}{2}+10c_2Ld\sqrt{\frac{2d}{M}}}\eta_t^2.
% \end{align*}

It follows that
\begin{align*}
    \expect{F(t+1) - F(t)}&\le -\eta_t c \expect{\norm{\nabla F(t)}} + 4c_1 d \eta_t \sqrt{\frac{2}{M}}(\tilde{B}+1)\expect{\norm{\nabla F(t)}}+ 4c_1 d \eta_t \sqrt{\frac{2}{M}}\tilde{G} + \\
    &\quad \left(\frac{Ld}{2} + 6c_1Ld\sqrt{\frac{2d}{M}}\right)\eta_t^2 \\
\end{align*}
Throughout the trajectory,
\begin{align*}
    \qth{c_0-4c_1d\sqrt{\frac{2}{M}}\pth{\tilde{B}+1}}\sum_{t=0}^{T-1}\eta_t \expect{\norm{\nabla F(t)}} \le F(0) - F^* + 4c_2 d \sum_{t=0}^{T-1}\eta_t \sqrt{\frac{2}{M}}\tilde{G} +  \left(\frac{Ld}{2} + 6c_2Ld\sqrt{\frac{2d}{M}}\right)\sum_{t=0}^{T-1}\eta_t^2.
\end{align*}
Given 
$$c_0-4\sqrt{\ln\pth{\frac{2}{1-c_0}}}d\sqrt{\frac{2}{M}}\pth{\tilde{B}+1}=c_0-4c_1d\sqrt{\frac{2}{M}}\pth{\tilde{B}+1}\ge\frac{1}{2},$$ we have
\begin{align*}
    \sum_{t=0}^{T-1}\eta_t \expect{\norm{\nabla F(t)}} \le 2F(0) - 2F^* + 8c_1 d \eta_t \sqrt{\frac{2}{M}}\tilde{G} +  \left(Ld + 12c_1Ld\sqrt{\frac{2d}{M}}\right)\sum_{t=0}^{T-1}\eta_t^2,
\end{align*}
which yields
\begin{small}
\begin{flalign*}
   & \expect{\norm{\nabla F(R)}} \le O \pth{\frac{{2{\Delta}\sqrt{d}}}{L\sqrt{T}} + 8d \sqrt{\frac{2}{M}\log\pth{\frac{2}{1-c_0}}}\tilde{G} + \left(1 + 12\sqrt{\frac{2d}{M}\log\pth{\frac{2}{1-c_0}}}\right)\frac{\sqrt{d}\log T}{\sqrt{T}}}.&
\end{flalign*}
\end{small}
This follows because $\sum_{t=0}^{T-1}\eta_t=\Theta(\sqrt{T}),$ $\sum_{t=0}^{T-1}\eta^2_t = \Theta(\log T).$
\end{proof}

\subsubsection{Privacy Preservation in Algorithm \ref{alg: alg 2}}

\begin{theorem}\cite[Corollary 3.15]{dwork2014algorithmic}
\label{thm: composition}
Let $\calM_i:\reals^d \to \{\pm 1\}^d$ be an $\epsilon_i$-differentially private algorithm for $i\in [k]$. Then  
$\calM_{[k]}(x) := \pth{\calM_1(x), \cdots, \calM_k(x)}$ is $\sum_{i=1}^k \epsilon_i$-differentially private.
\end{theorem}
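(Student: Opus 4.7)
The statement is the standard basic composition theorem for $\epsilon$-differential privacy, so my plan is a direct verification from Definition \ref{def: d.p. definition}, leveraging the independence of the random coins used by the component mechanisms $\calM_1,\ldots,\calM_k$. First I would fix neighboring inputs $x, y \in \reals^d$ with $\|x - y\|_1 \le 1$ and an arbitrary target set $\calS \subseteq \{\pm 1\}^{kd}$, and decompose
\begin{equation*}
\prob{\calM_{[k]}(x) \in \calS} \;=\; \sum_{(s_1, \ldots, s_k) \in \calS} \prob{\calM_{[k]}(x) = (s_1, \ldots, s_k)},
\end{equation*}
which is valid because the range $\{\pm 1\}^{kd}$ is discrete and finite.

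Next, the implicit independence of the random coins driving the $\calM_i$'s factorizes each joint probability as $\prob{\calM_{[k]}(x) = (s_1, \ldots, s_k)} = \prod_{i=1}^k \prob{\calM_i(x) = s_i}$. Applying the $\epsilon_i$-DP guarantee of $\calM_i$ to the singleton $\{s_i\} \subseteq \{\pm 1\}^d$ yields $\prob{\calM_i(x) = s_i} \le \exp(\epsilon_i)\,\prob{\calM_i(y) = s_i}$. Multiplying over $i \in [k]$ contributes a uniform $\exp(\sum_i \epsilon_i)$ prefactor that pulls outside the outer sum, leaving exactly $\exp(\sum_i \epsilon_i)\,\prob{\calM_{[k]}(y) \in \calS}$ after re-bundling the sum via the same factorization applied to $y$. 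This is the desired inequality.

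The one conceptual obstacle is making precise the independence of the sub-mechanisms; without it, the product factorization fails and one would need a coupling argument or would obtain only a weaker bound. In the present setting, independence is natural because each $\calM_i$ is applied by a distinct, physically separated client with its own fresh local randomness. A minor technical point (not binding here since the output space is discrete and finite) is that for general continuous ranges one would replace the sum over atoms by an integral of the product of densities against a reference measure, and the inequality would propagate in the same way.
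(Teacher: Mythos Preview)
Your argument is the standard proof of basic composition and is correct. Note, however, that the paper does not supply its own proof of this statement: Theorem~\ref{thm: composition} is simply quoted from \cite[Corollary~3.15]{dwork2014algorithmic} and used as a black box in the proofs of Theorems~\ref{thm: sto sign dp necessity of beta} and~\ref{thm: sto sign dp}. So there is no paper proof to compare against; your write-up just fills in what the paper cites.
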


\begin{proof}[\bf Proof of Theorem \ref{thm: sto sign dp necessity of beta} (Necessity of $\beta$)]
We first consider the setting when $\beta=0$. 
Let 
$$
\calG = \{\bm{g}\in \reals^d: ~ \exists i ~ s.t. \min\{|\bm{g}_i-B|,  |\bm{g}_i+B|\}\le 1 \}.
$$
Let $\bm{g}\in \calG$.  Without loss of generality, let us assume that $|\bm{g}_1-B|\le 1$, where $\bm{g}_1$ is the first entry of $\bm{g}$. 
If $\bm{g}_1\ge B$, then there exists $\bm{g}^{\prime}\in \reals^d$ such that $\bm{g}^{\prime}\not=\bm{g}$, $\bm{g}_1^{\prime}\in (-B, B)$, and $\|\bm{g} -\bm{g}^{\prime}\|_1 \le 1$. Let $\hat{\bm{g}}_1$ and $\hat{\bm{g}^{\prime}}_1$ be the compressed values of $\bm{g}_1$ and $\bm{g}_1^{\prime}$ under our compressor $\beta$-StoSign. It holds that 
\begin{align*}
\frac{\prob{\hat{\bm{g}^{\prime}_1}=-1}}{\prob{\hat{\bm{g}}_{1}=-1}}
& =\frac{\frac{B-\clip\sth{\bm{g}_1^{\prime}, B}}{2B}}{\frac{B-\clip\sth{\bm{g}_1, B}}{2B}} = \frac{B-\clip\sth{\bm{g}_1^{\prime}, B}}{B-\clip\sth{\bm{g}_1, B}} =\frac{B-\clip\sth{\bm{g}_1^{\prime}, B}}{B- B} = \infty.
\end{align*}
If $\bm{g}_1\in (-B, B)$, then there exists $\bm{g}^{\prime}\in \reals^d$ such that $\bm{g}^{\prime}\not=\bm{g}$, $\bm{g}_1^{\prime}\ge  B$, and $\|\bm{g} -\bm{g}^{\prime}\|_1 \le 1$. We have \begin{align*}
\frac{\prob{\hat{\bm{g}_1}=-1}}{\prob{\hat{\bm{g}}^{\prime}_{1}=-1}}
& =\frac{\frac{B-\clip\sth{\bm{g}_1, B}}{2B}}{\frac{B-\clip\sth{\bm{g}_1^{\prime}, B}}{2B}} = \frac{B-\clip\sth{\bm{g}_1, B}}{B-\clip\sth{\bm{g}_1^{\prime}, B}} =\frac{B-\clip\sth{g_1, B}}{B- B} = \infty.
\end{align*}
Since a finite differential privacy quantification does not hold for any pair of gradients $\bm{g}$ and $\bm{g}^\prime$, no differential privacy implies as per Definition \ref{def: d.p. definition}, proving the first part of the theorem. 

% \mx{Proposed statement to add: } .\mx{Comments: the conclusion holds, however there are special cases. For example, assume we have $g_1=B$, $g_1^\prime=1.5B$, which gives us $\clip(g_1,B)=\clip(g_1^\prime,B)=B$, their signs are both deterministically $1$, making it impossible for an observer to tell the difference. (for the negative sign, 0/0=1?) In other words, the differential privacy quantification is a constant $\epsilon=\log1=0$.}

When $\beta>0$, for any $\bm{g}, \bm{g}^{\prime}\in \reals^d$ such that $\bm{g}^{\prime}\not=\bm{g}$ and $\|\bm{g} -\bm{g}^{\prime}\|_1 \le 1$, and for each coordinate $i\in [d]$, it holds that 
% 
% \begin{align*}
% \frac{\prob{\hat{g^{\prime}_i}=-1}}{\prob{\hat{g}_{i}=-1}} 
% & = \frac{\frac{B+\beta-\clip\sth{g_1^{\prime}, B}}{2B}}{\frac{B+\beta-\clip\sth{g_1, B}}{2B}} 
% = \frac{B+\beta-\clip\sth{g_1^{\prime}, B}}{B+\beta-\clip\sth{g_1, B}} \le \frac{2B+\beta}{\beta} 
% \end{align*}
\begin{align*}
\frac{\prob{\hat{\bm{g}}^{\prime}_i=-1}}{\prob{\hat{\bm{g}}_{i}=-1}} 
& = \frac{\frac{B+\beta-\clip\sth{\bm{g}_1^{\prime}, B}}{2B+2\beta}}{\frac{B+\beta-\clip\sth{\bm{g}_1, B}}{2B+2\beta}} 
= \frac{B+\beta-\clip\sth{\bm{g}_1^{\prime}, B}}{B+\beta-\clip\sth{\bm{g}_1, B}} \le \frac{2B+\beta}{\beta}. 
\end{align*}

Similarly, we can show the same upper bound for $\prob{\hat{\bm{g}}^{\prime}_i=1}/\prob{\hat{\bm{g}}_{i}=1}$. That is, for the $i$-th coordinate,  the compressor $\beta$-StoSign is coordinate-wise $\log\pth{\frac{2B+\beta}{\beta}}$- differentially private. 
By Theorem \ref{thm: composition}, we conclude that the compressor $\beta$-StoSign is $d\cdot\log\pth{\frac{2B+\beta}{\beta}}$- differentially private for the entire gradient.
\end{proof}

% \begin{propn}[\ref{prop: dp-flip mechanism composition} (Equivalent as a Composition)]
% For any $\beta>0$, it holds that 
% \begin{align*}
%     \calM_{\beta}\overset{d.}{=}\calM_{\text{flip}}\circ\calM_{0},
% \end{align*}
% where $\overset{d.}{=}$ denotes ``equal in distribution''. 
% \end{propn}

%

% \begin{thmn}[\ref{thm: sto sign dp} (Smaller Collection of Gradients)]
% Let $\beta>0$ and $\calG\subseteq \reals^d$ be a set of gradients. The compressor defined as per Eq.\,\eqref{eq: gradient compression} is 
% $\max_{\bm{g}\in \calG}\sum_{i=1}^d \log\pth{1 +  \frac{1}{\beta + \dist\pth{g_i, \calC_B}}}$- differentially private on $\bm{g}\in \calG$. 
% \end{thmn}
\begin{proof}[\bf Proof of Theorem \ref{thm: sto sign dp} (Smaller Collection of Gradients)]
Let $\bm{g}, \bm{g}^{\prime}\in \reals^d$ be an arbitrary pair of gradient inputs such that $\bm{g}^{\prime}\not=\bm{g}$ and $\|\bm{g} -\bm{g}^{\prime}\|_1 \le 1$.
For each coordinate $i\in [d]$, it holds that 
\begin{align}
\frac{\prob{\hat{\bm{g}}^{\prime}_i=-1}}{\prob{\hat{\bm{g}}_{i}=-1}}
& = \frac{\frac{B+\beta-\clip\sth{\bm{g}_i^{\prime}, B}}{2B+2\beta}}{\frac{B+\beta-\clip\sth{\bm{g}_i, B}}{2B+2\beta}} = \frac{B+\beta-\clip\sth{\bm{g}_i^{\prime}, B} }{B+\beta- \clip\sth{\bm{g}_i, B}} \notag \\
& = \frac{B+\beta-\clip\sth{\bm{g}_i, B}+ \clip\sth{\bm{g}_i, B}-\clip\sth{\bm{g}_i^{\prime}, B} }{B+\beta- \clip\sth{\bm{g}_i, B}} \notag\\
& \le 1+ \frac{\abth{\bm{g}_i-\bm{g}_i^{\prime}}}{B+\beta- \clip\sth{\bm{g}_i, B}}\label{eq: collection}\\
& \le 1 + \frac{1}{B+\beta- \clip\sth{\bm{g}_i, B}} \notag\\
& \le 1 +  \frac{1}{\beta + \dist\pth{\bm{g}_i, \calC_B}}. \notag
\end{align}
By Theorem \ref{thm: composition}, we conclude that the compressor $\beta$-StoSign is $\max_{\bm{g}\in \calG}\sum_{i=1}^d \log\pth{1 +  \frac{ 1}{\beta + \dist\pth{\bm{g}_i, \calC_B}}}$- differentially private for all gradients $\bm{g}\in \calG$. 
\end{proof}
{
\begin{proof}[\bf Proof of Corollary \ref{corollary: constant clipping with d.p.} (Bounded DP with Bounded Sensitivity)]
By Theorem \ref{thm: sto sign dp}, we conclude that the compressor $\calM_{B,\beta}$ is $\max_{\bm{g}\in \calG}\sum_{i=1}^d \log\pth{1 +  \frac{ 1}{\beta + \dist\pth{\bm{g}_i, \calC_B}}}$- differentially private for all gradients $\bm{g}\in \calG$. It turns out that this bound can be relaxed, and we start the derivation from Eq. \eqref{eq: collection}:
\begin{align*}
    \eqref{eq: collection}&\le 1+ \frac{\abth{\bm{g}_i-\bm{g}_i^{\prime}}}{\beta}.
\end{align*}

Now consider the coordinate collection of the gradient pair, by Theorem \ref{thm: composition}, it remains to bound
\begin{align*}
    \sum_{i=1}^d\log\pth{1+ \frac{\abth{\bm{g}_i-\bm{g}_i^{\prime}}}{\beta}}&\le d\log\qth{\frac{1}{d}\sum_{i=1}^d\pth{1+ \frac{\abth{\bm{g}_i-\bm{g}_i^{\prime}}}{\beta}}}~~~[\text{Jensen's inequality}]\\
    &\le d\log\pth{1+ \frac{1}{d\beta}}\\
    &\le \frac{1}{\beta}~~~[\text{follows from} \log(1+x)<x\text{ when }x>0.]
\end{align*}
\end{proof}
}

\iffalse
\begin{proof}[\bf Proof of Proposition \ref{prop: dp-flip mechanism composition} (Equivalent as a Composition)]
% 
Let $\bm{g}\in \reals^d$ be an arbitrary gradient. To show this proposition, it is enough to show $\prob{[\calM_{B,\beta}]_i(\bm{g}) =1} = \prob{[\calM_{B,\text{flip}}\circ\calM_{B,0}]_i(\bm{g})=1}$ holds for any $i\in [d]$. 

To see this, 
\begin{align*}
\prob{[\calM_{B,\text{flip}}\circ\calM_{B,0}]_i(\bm{g})=1} 
&= \prob{[\calM_{B,0}]_i(\bm{g})=1\, \&\, \calM_{B,\text{flip}}(1) = 1} \\
& \qquad + \prob{[\calM_{B,0}]_i(\bm{g})=-1\, \&\, \calM_{B,\text{flip}}(-1) = -1} \\
& = \frac{B+ \clip\sth{\bm{g}_i,B}}{2B}\frac{2B+\beta}{2(B+\beta)}+\frac{B- \clip\sth{\bm{g}_i,B}}{2B}\frac{\beta}{2(B+\beta)}\\
& = \frac{B+\beta+\clip\sth{\bm{g}_i,B}}{2(B+\beta)}\\
& = \prob{[\calM_{B,\beta}]_i(\bm{g})=1}. 
\end{align*}
\end{proof}
\fi
\subsubsection{Convergence Results}
For ease of presentation, we define $c_0(n,p) = \frac{2(B+\beta)}{p}\sqrt{2\log\frac{6}{3-5c}}+2\sigma\sqrt{\frac{2\log\frac{6}{c}}{{n}}}$ throughout this section.
% \begin{propn}[\ref{prop: clipped variance upper bound} (Bounded Random Variable Variance Bound)]
% Given a random variable $X$ and a clipping threshold $B>0$, if $\mu=\expect{X}\in[-B,B]$, then $\var\pth{\clip\pth{X,B}}\le\var\pth{X}=\sigma^2$.
% \end{propn}
\begin{proposition}[Bounded Random Variable Variance Bound]
\label{prop: clipped variance upper bound}
Given a random variable $X$ and a clipping threshold $B>0$, if $\mu=\expect{X}\in[-B,B]$, then $\var\pth{\clip\pth{X,B}}\le\var\pth{X}=\sigma^2$.
\end{proposition}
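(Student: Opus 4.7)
The plan is to exploit the fact that the clipping map $\clip(\cdot,B)$ is a $1$-Lipschitz contraction of $\reals$ onto $[-B,B]$, together with the hypothesis that the mean $\mu$ lies in $[-B,B]$ (so $\mu$ is a fixed point of $\clip(\cdot,B)$).

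First I would record the elementary contraction property: for every pair $x,y\in\reals$,
\begin{equation*}
\abth{\clip(x,B)-\clip(y,B)} \le \abth{x-y}.
\end{equation*}
This follows directly from $\clip(\cdot,B)=\max\{-B,\min\{B,\cdot\}\}$ by case analysis on whether each argument lies below $-B$, inside $[-B,B]$, or above $B$. Next I would instantiate this inequality with $y=\mu$. Because $\mu\in[-B,B]$ by assumption, we have $\clip(\mu,B)=\mu$, which gives the pointwise bound
\begin{equation*}
\pth{\clip(X,B)-\mu}^2 \le (X-\mu)^2.
\end{equation*}

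Taking expectations on both sides yields $\expect{(\clip(X,B)-\mu)^2}\le\var(X)=\sigma^2$. Finally, I would close the argument by invoking the variational characterization of variance: for any random variable $Y$ with finite second moment, $\var(Y)=\min_{c\in\reals}\expect{(Y-c)^2}$. Applying this with $Y=\clip(X,B)$ and the particular choice $c=\mu$ (which is generally not equal to $\expect{\clip(X,B)}$) gives
\begin{equation*}
\var(\clip(X,B)) \le \expect{(\clip(X,B)-\mu)^2} \le \sigma^2,
\end{equation*}
which is the desired conclusion.

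There is no real obstacle here; the only subtle point is that $\mu$ need not equal $\expect{\clip(X,B)}$, so one must not confuse the two centering constants. Using the ``variance is the minimum mean-squared deviation'' identity sidesteps this cleanly, and the hypothesis $\mu\in[-B,B]$ is exactly what makes $\mu$ a fixed point of the clipping map, which is the single place the assumption is used.
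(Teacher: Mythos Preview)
Your proof is correct and follows essentially the same two-step structure as the paper: first use the variational characterization of variance (the paper writes this as the bias--variance identity $\var(Y)=\expect{(Y-\mu)^2}-(\expect{Y}-\mu)^2$) to reduce to bounding $\expect{(\clip(X,B)-\mu)^2}$, then establish the pointwise inequality $(\clip(X,B)-\mu)^2\le(X-\mu)^2$ using $\mu\in[-B,B]$. Your phrasing via the $1$-Lipschitz contraction property is a bit cleaner than the paper's explicit three-region integration, and in particular it avoids the paper's ``for ease of exposition'' assumption that $X$ admits a density.
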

\begin{proof}[\bf Proof of Proposition \ref{prop: clipped variance upper bound}]
% \ls{TBD. move to appendix }

\begin{align}
    \var\pth{\clip\pth{X,B}}:=&\expect{\pth{\clip(X,B)-\expect{\clip(X,B)}}^2}\notag\\
    =&\expect{\pth{\clip(X,B)-\expect{X}}^2}-\pth{\expect{\clip(X,B)-X}}^2\notag\\
    \le&\expect{\pth{\clip(X,B)-\expect{X}}^2}.\label{ineq: variance gap 1}
\end{align}
For ease of exposition, we assume $X$ admits a probability density function $f(x)$. General distributions of $X$ can be shown analogously. 
%Let $f(x)$ be the probability density function of $X$.
It follows that
\begin{align}
    &\expect{\pth{\clip(X,B)-\expect{X}}^2}\notag\\
    &=\int_B^\infty (B-\mu)^2f(x)\mathd x+\int_{-B}^B(x-\mu)^2f(x)\mathd x+\int_{-\infty}^{-B}(-B-\mu)^2f(x)\mathd x\notag\\
    &\le \int_B^\infty (x-\mu)^2f(x)\mathd x+\int_{-B}^B(x-\mu)^2f(x)\mathd x+\int_{-\infty}^{-B}(x-\mu)^2f(x)\mathd x\notag\\
    &=\var\pth{X}=\sigma^2.\label{ineq: variance gap 2}
\end{align}

Combining \eqref{ineq: variance gap 1} and \eqref{ineq: variance gap 2}, we conclude  $\var\pth{\clip\pth{X,B}}\le\var\pth{X}=\sigma^2$. 
%which completes the proof.
\end{proof}
\subsubsection{Sub-Gaussian { and Heavy-tailed} Distributions}
% \begin{thmn}[\ref{thm: sub-gaussian} (Sub-Gaussian Tail Sign Error)]
% Suppose that Assumptions \ref{ass: 4 Bounded gradient} and \ref{ass: sub gaussianity1} hold. 
% Choose $B=(1+\epsilon_0)B_0$ for $\epsilon_0>\sigma/B_0$. 
% For any iteration $t\ge1$, let $\bm{g}_m :=\nabla f_m(w(t))$ denote the true local gradient at client $m$.
% Let $\hat{\bm{g}}_m$ denote the compressed gradients as in Algorithm \ref{alg: alg 1}. Let $c>0$ be any given constant such that $c< 3/5$. Choose $c_0 = \max\sth{\sqrt{\frac{8\sigma^2}{n} \log \frac{6}{c}} ,\, \sqrt{\frac{8\pth{B+\beta}^2}{p^2} \log \frac{6}{3-5c}}}$. 

% When the system adversary is adaptive or the system adversary is static but with $\tau(t)\le \frac{2}{p^2}\log\frac{6}{c}$, if $\abth{\sum_{m=1}^M \bm{g}_{mi}} \ge \frac{2(B+\beta)}{p}\tau(t)+ 2(B+\beta)M\exp\pth{-n/2}+ c_0\sqrt{M}$, then 
% \begin{align}
% \prob{\sign\pth{\frac{1}{M}\sum_{m=1}^M\hat{\bm{g}}_{mi}}\neq\sign\pth{\frac{1}{M}\sum_{m=1}^M\bm{g}_{mi}}\mid w(t)}\le \frac{1-c}{2}.\tag{\ref{eq: error bound}}
% \end{align}
% %Otherwise, $\prob{\sign\pth{\frac{1}{M}\sum_{m=1}^M\hat{\bm{g}}_{mi}}\neq\sign\pth{\frac{1}{M}\sum_{m=1}^M\bm{g}_{mi}}\mid w(t)}\le 1$. 

% When the system adversary is static with $\tau(t) >\frac{2}{p^2}\log\frac{6}{c}$, if $\abth{\sum_{m=1}^M \bm{g}_{mi}} \ge 3(B+\beta)\tau(t)+ 2(B+\beta)M\exp\pth{-n/2}+ c_0\sqrt{M}$, then Eq.\,\eqref{eq: error bound} holds. 
% %Otherwise, $\prob{\sign\pth{\frac{1}{M}\sum_{m=1}^M\hat{\bm{g}}_{mi}}\neq\sign\pth{\frac{1}{M}\sum_{m=1}^M\bm{g}_{mi}}\mid w(t)}\le 1$. 
% \end{thmn}
%
\begin{proof}[\bf Proof of Theorem \ref{thm: sub-gaussian} (Light and Heavy-tailed Sign Error)]
Recall that 
\begin{align*}
\hat{\bm{g}}_{mi}(t)=\begin{cases}
\qth{\calM_{B,\beta}}_i\pth{\frac{1}{n}\sum_{j=1}^n\bm{g}_{mi}^j(t)}~~~ &\text{if ~}m\in\calN(t);\\
\ast~~ &\text{if ~}m\in\calB(t),
\end{cases}
\end{align*}
where $\ast$ is an arbitrary value in \{-1,1\}. 
For any client $m\in [M]$ and any coordinate $i\in [d]$, let
\begin{align*}
X_{mi}&=\indc{m\in\calS(t)}\indc{\hat{\bm{g}}_{mi}\neq\sign\pth{\frac{1}{M}\sum_{m=1}^M\bm{g}_{mi}}}, \\
\text{and} \qquad \tilde{X}_{mi}&=\indc{m\in\calS(t)}\indc{\qth{\calM_\beta}_i\pth{\frac{1}{n}\sum_{j=1}^n\bm{g}_{mi}^j(t)}\neq\sign\pth{\frac{1}{M}\sum_{m=1}^M\bm{g}_{mi}}}.
\end{align*}
Notably, if $m\in \calB(t)$, then it is possible that $X_{mi} \not= \tilde{X}_{mi}$; otherwise,  $X_{mi} = \tilde{X}_{mi}$. 

Without loss of generality, we assume the true aggregation is negative, i.e., $\sign\pth{{ \nabla F_i}\pth{w(t)}}=-1.$
% \begin{align*}
%     \sign\pth{\frac{1}{M}\sum_{m=1}^M\bm{g}_{mi}}=-1.
% \end{align*}
The case when $\sign\pth{{ \nabla F_i}\pth{w(t)}}=1$ can be shown analogously. 
%One can easily check the results hold when the aggregation is positive.

For ease of exposition, we drop a condition of $w(t)$ in the conditional probability expressions unless otherwise noted.  It holds that
\begin{align}
\prob{\sign\pth{\frac{1}{M}\sum_{m=1}^M\hat{\bm{g}}_{mi}}\neq -1}&\le\prob{\sum_{m=1}^M X_{mi}\ge\frac{|\calS(t)|}{2}}\notag\\
    &=\prob{\sum_{m\in\calN(t)}\tilde{X}_{mi}+\sum_{m\in\calB(t)}X_{mi}\ge\frac{|\calS(t)|}{2}}\notag\\
    & = \prob{\sum_{m\in \calN(t)} \tilde{X}_{mi}\ge\frac{|\calS(t)|}{2}-\sum_{m\in\calB(t)}X_{mi}}\notag\\
    &\le\prob{\sum_{m=1}^M \tilde{X}_{mi}\ge\frac{|\calS(t)|}{2}-\sum_{m\in\calB(t)}X_{mi}}. 
    %&\le\prob{\sum_{m=1}^M \tilde{X}_{mi}\ge\frac{|\calS(t)|}{2}-\tau(t)}.
    \label{main ineq subgaussian}
\end{align}
Next, we bound $\sum_{m=1}^M \tilde{X}_{mi}$ and $\sum_{m\in\calB(t)}X_{mi}$ separately.

\underline{When the system adversary is static}, i.e., the system adversary does not know $\calS(t)$,  it corrupts clients independently of $\calS(t)$. Hence, 
\begin{align}
\label{eq: each round: static adversary}
\sum_{m\in\calB(t)}X_{mi} \le \sum_{m\in\calB(t)} \indc{m\in\calS(t)}. 
\end{align}
We know that if $\tau(t)\le  \frac{2}{p^2}\log \frac{6}{c}$, then $\sum_{m\in\calB(t)} \indc{m\in\calS(t)}\le \frac{2}{p^2}\log \frac{6}{c}$. Otherwise, with probability at least $1-\frac{c}{6}$, it is true that $\sum_{m\in\calB(t)} \indc{m\in\calS(t)}\le \frac{3}{2}p\tau(t)$. 

On the other hand, \underline{when the system adversary is adaptive}, it chooses $\calB(t)$ based on $\calS(t)$. 
In particular, if $\abth{\calS(t)}\le \tau(t)$, then the adversary chooses $\calB(t) = \calS(t)$. 
Otherwise, i.e., $\abth{\calS(t)}> \tau(t)$, the adversary chooses an arbitrary subset of $\calS(t)$. In both cases, it holds that % such that \mx{$\calB(t)\subseteq\calS(t)$?}$\abth{\calB(t)} = \tau(t)$
\begin{align}
\label{eq: each round: adaptive adversary}
\sum_{m\in\calB(t)}X_{mi} \le \sum_{m\in\calB(t)} \indc{m\in\calS(t)}\le \min \{\tau(t), |\calS(t)|\} \le \tau(t).
\end{align}

For ease of exposition, we first focus on adaptive adversary and will visit the static adversary towards the end of this proof. 
Observe that $|\calS(t)|=\sum_{m=1}^M\indc{m\in\calS(t)}$. Let $\tilde{Y}_{mi}=\tilde{X}_{mi}-\frac{\indc{m\in\calS(t)}}{2}.$ 
% \begin{align*}
%     \tilde{Y}_{mi}=\tilde{X}_{mi}-\frac{\indc{m\in\calS(t)}}{2}.
% \end{align*}
% Then Eq.\,\eqref{main ineq subgaussian} becomes
% \begin{align}
%     \eqref{main ineq subgaussian}=\prob{\sum_{m=1}^M \tilde{Y}_{mi}\ge-\tau(t)}.\label{main ineq subgaussian on Y}
% \end{align}
Conditioning on the mini-batch stochastic gradients $\bm{g}_{mi}^1, \cdots, \bm{g}_{mi}^n$, we have 
\begin{align*}
    \expect{\tilde{Y}_{mi}\, \mid\, \bm{g}_{mi}^1, \cdots, \bm{g}_{mi}^n}=\expect{\tilde{X}_{mi}\, \mid\, \bm{g}_{mi}^1, \cdots, \bm{g}_{mi}^n}-\frac{p}{2}=\frac{p}{2B+2\beta}\clip\pth{\frac{1}{n}\sum_{j=1}^n\bm{g}_{mi}^j,B}. 
\end{align*}
Taking expectation over $\bm{g}_{mi}^1, \cdots, \bm{g}_{mi}^n$, we get 
\begin{align}
\label{ineq: double expectation subgaussian}
    \expect{\expect{\tilde{Y}_{mi}\, \mid\, \bm{g}_{mi}^1, \cdots, \bm{g}_{mi}^n}}&=\expect{\expect{\tilde{Y}_{mi}\, \mid\, \bm{g}_{mi}^1, \cdots, \bm{g}_{mi}^n}-p\frac{\frac{1}{n}\sum_{j=1}^n\bm{g}_{mi}^j}{2B+2\beta}}+\frac{p\bm{g}_{mi}}{2B+2\beta}\notag\\
    &=\expect{\expect{\tilde{Y}_{mi}\, \mid\, \bm{g}_{mi}^1, \cdots, \bm{g}_{mi}^n}-p\frac{\frac{1}{n}\sum_{j=1}^n\bm{g}_{mi}^j}{2B+2\beta}}+\frac{p\bm{g}_{mi}}{2B+2\beta}.
\end{align}
It turns out that $\expect{\expect{\tilde{Y}_{mi}\, \mid\, \bm{g}_{mi}^1, \cdots, \bm{g}_{mi}^n}-p\frac{\frac{1}{n}\sum_{j=1}^n\bm{g}_{mi}^j}{2B+2\beta}}$ is small: 
% For the first term,
\begin{flalign*}
\begin{aligned}
     &\frac{1}{p}\expect{\expect{\tilde{Y}_{mi}\, \mid\, \bm{g}_{mi}^1, \cdots, \bm{g}_{mi}^n}-p\frac{\frac{1}{n}\sum_{j=1}^n\bm{g}_{mi}^j}{2B+2\beta}}\\
    =&\underbrace{\frac{B\prob{\frac{1}{n}\sum_{j=1}^n\bm{g}_{mi}^j\ge B}-B\prob{\frac{1}{n}\sum_{j=1}^n\bm{g}_{mi}^j\le -B}}{2B+2\beta}}_{(\mathrm{A})}+\underbrace{\frac{\expect{-\frac{1}{n}\sum_{j=1}^n\bm{g}_{mi}^j\indc{\abth{\frac{1}{n}\sum_{j=1}^n\bm{g}_{mi}^j}\ge B}}}{2B+2\beta}}_{\pth{\mathrm{B}}}.
\end{aligned}
\end{flalign*}

{ We bound $\pth{\mathrm{A}}$ and $\pth{\mathrm{B}}$ for sub-Gaussian and heavy-tailed noise separately.

First, for sub-Gaussian distributions with Assumption \ref{ass: sub gaussianity1}, we have
}

%It remains to bound $\text{loss}_1$ and $\text{loss}_2$ separately.
\begin{align*}
&
\begin{aligned}
\pth{\mathrm{A}}\le&\frac{B}{2B+2\beta}\prob{\frac{1}{n}\sum_{j=1}^n\bm{g}_{mi}^j-\expect{\frac{1}{n}\sum_{j=1}^n\bm{g}_{mi}^j}\ge B-\expect{\frac{1}{n}\sum_{j=1}^n\bm{g}_{mi}^j}}\\
\le& \frac{B}{2B+2\beta}\exp\pth{-\frac{n\pth{B-\bm{g}_{mi}}^2}{2\sigma_{mi}^2}}\\
\le&\frac{B}{2B+2\beta}\exp\pth{-\frac{n\epsilon_0^2 B_0^2}{2\sigma^2_{mi}}}\\
\le & \frac{1}{2}\exp\pth{-\frac{n}{2}} ~~~[\text{since }\epsilon_0> \frac{\sigma}{B_0}], 
\end{aligned}
\end{align*}
and
\begin{align*}
&
\begin{aligned}
\pth{\mathrm{B}}=&\frac{\expect{-\frac{1}{n}\sum_{j=1}^n\bm{g}_{mi}^j\indc{\abth{\frac{1}{n}\sum_{j=1}^n\bm{g}_{mi}^j}\ge B}}}{2B+2\beta}\\
=&\frac{\int_{-\infty}^{-B}\prob{\frac{1}{n}\sum_{j=1}^n\bm{g}_{mi}^j<t}\mathrm{d}t-\int_{B}^{+\infty}\prob{\frac{1}{n}\sum_{j=1}^n\bm{g}_{mi}^j>t}\mathrm{d}t}{2B+2\beta}\\
\le&\frac{\int_{-\infty}^{-B}\prob{\frac{1}{n}\sum_{j=1}^n\bm{g}_{mi}^j-\expect{\frac{1}{n}\sum_{j=1}^n\bm{g}_{mi}^j}<t-\expect{\frac{1}{n}\sum_{j=1}^n\bm{g}_{mi}^j}}\mathrm{d}t}{2B+2\beta}\\
\le&\frac{\int_{-\infty}^{-B}\exp\pth{-\frac{\pth{t-\bm{g}_{mi}}^2}{2\sigma_{mi}^2/n}}\mathrm{d}t}{2B+2\beta}~~~[\text{Mill's ratio  \cite{gordon1941values}}]\\
% = & \frac{1}{2B+2\beta} \int_{-\infty}^{-B} \qth{ -\frac{2\sigma_{mi}^2/n}{2\pth{t-\bm{g}_{mi}}}} \qth{ -\frac{2\pth{t-\bm{g}_{mi}}}{2\sigma_{mi}^2/n}}   \exp\qth{-\frac{\pth{t-\bm{g}_{mi}}^2}{2\sigma_{mi}^2/n}}\mathrm{d}t\\
\le & \frac{\sigma_{mi}^2/n}{\pth{2B+2\beta} \pth{B+\bm{g}_{mi}}}  \int_{-\infty}^{-B}  \qth{ -\frac{2\pth{t-\bm{g}_{mi}}}{2\sigma_{mi}^2/n}}   \exp\pth{-\frac{\pth{t-\bm{g}_{mi}}^2}{2\sigma_{mi}^2/n}}\mathrm{d}t\\
\le&\frac{\sigma_{mi}^2}{n\epsilon_0B_0(2B+2\beta)}\exp\pth{-\frac{n\epsilon_0^2 B_0^2}{2\sigma^2_{mi}}}\\
% \le&\frac{\sigma_{mi}^2}{2n\epsilon_0^2B_0^2}\exp\pth{-\frac{n\epsilon_0^2 B_0^2}{2\sigma^2_{mi}}} ~~~[\beta>0\text{ and }B:=(1+\epsilon_0)B_0>\epsilon_0B_0]\\
\le& \frac{1}{2n} \exp\pth{-\frac{n}{2}}, 
\end{aligned}
\end{align*}
where the last inequality follows from the choice of $\epsilon_0> \frac{\sigma}{B_0}$. 
Combining the bounds of $\pth{\mathrm{A}}$ and $\pth{\mathrm{B}}$, we get 
$\expect{\expect{\tilde{Y}_{mi}\, \mid\, \bm{g}_{mi}^1, \cdots, \bm{g}_{mi}^n}-p\frac{\frac{1}{n}\sum_{j=1}^n\bm{g}_{mi}^j}{2B+2\beta}} \le p\exp\pth{-\frac{n}{2}}$. 
Hence, 
\begin{align}
\label{eq: expectation bound on Y}
\expect{\tilde{Y}_{mi}} \le p\exp\pth{-\frac{n}{2}} + \frac{p\bm{g}_{mi}}{2B+2\beta}.
\end{align}
{
Second, for heavy-tailed distributions with Assumption \ref{ass: heavy tailed}, we have 
%It remains to bound $\text{loss}_1$ and $\text{loss}_2$ separately.
\begin{align*}
&
\begin{aligned}
\pth{\mathrm{A}}\le&\frac{B}{2B+2\beta}\prob{\frac{1}{n}\sum_{j=1}^n\bm{g}_{mi}^j-\expect{\frac{1}{n}\sum_{j=1}^n\bm{g}_{mi}^j}\ge B-\expect{\frac{1}{n}\sum_{j=1}^n\bm{g}_{mi}^j}}\\
% \le& \frac{B}{2B+2\beta}\prob{\abth{\sum_{j=1}^n\bm{g}_{mi}^j-\expect{\sum_{j=1}^n\bm{g}_{mi}^j}}^{p^\prime}\ge n^{p^\prime}\abth{B-\bm{g}_{mi}}^{p^\prime}}\\
\le&\frac{B}{2B+2\beta}\frac{\expect{\abth{\sum_{j=1}^n\bm{g}_{mi}^j-\expect{\sum_{j=1}^n\bm{g}_{mi}^j}}^{p^\prime}}}{n^{p^\prime}\abth{B-\bm{g}_{mi}}^{p^\prime}}~~~[\text{Markov's inequality}]\\
\le &\underbrace{ \frac{B\sum_{j=1}^n\expect{\abth{\bm{g}_{mi}^j-\expect{\bm{g}_{mi}^j}}^{p^\prime}}+B\pth{\sum_{j=1}^n\expect{\abth{\bm{g}_{mi}^j-\expect{\bm{g}_{mi}^j}}^2}}^{\frac{{p^\prime}}{2}}}{(2B+2\beta)n^{p^\prime}\abth{B-\bm{g}_{mi}}^{p^\prime}}}_{\text{Rosenthal-type inequality \cite{merlevede2013rosenthal}}}\\
\le & \frac{1}{2}\frac{nM_{p^\prime}+n^{\frac{{p^\prime}}{2}}M_{p^\prime}}{n^{p^\prime}\abth{B-\bm{g}_{mi}}^{p^\prime}}~~~[M_2^{\frac{1}{2}}\le M_{p^\prime}^{\frac{1}{{p^\prime}}}\text{ for }{p^\prime}\ge4]\\
\le & \frac{M_{p^\prime}}{n^{\frac{{p^\prime}}{2}}\epsilon_0^{p^\prime}B_0^{p^\prime}}\le \frac{1}{n^{\frac{{p^\prime}}{2}}}
\end{aligned}\\
\end{align*}
and
\begin{flalign*}
\begin{aligned}
\pth{\mathrm{B}}=&\frac{\expect{-\frac{1}{n}\sum_{j=1}^n\bm{g}_{mi}^j\indc{\abth{\frac{1}{n}\sum_{j=1}^n\bm{g}_{mi}^j}\ge B}}}{2B+2\beta}\\
=&\frac{\int_{-\infty}^{-B}\prob{\frac{1}{n}\sum_{j=1}^n\bm{g}_{mi}^j<t}\mathrm{d}t-\int_{B}^{+\infty}\prob{\frac{1}{n}\sum_{j=1}^n\bm{g}_{mi}^j>t}\mathrm{d}t}{2B+2\beta}\\
\le&\frac{\int_{-\infty}^{-B}\prob{\frac{1}{n}\sum_{j=1}^n\bm{g}_{mi}^j-\expect{\frac{1}{n}\sum_{j=1}^n\bm{g}_{mi}^j}<t-\expect{\frac{1}{n}\sum_{j=1}^n\bm{g}_{mi}^j}}\mathrm{d}t}{2B+2\beta}\\
\le&\frac{1}{2B+2\beta}\int_{-\infty}^{-B}\frac{2M_{p^\prime}}{n^{\frac{{p^\prime}}{2}}\abth{t-\bm{g}_{mi}}^{p^\prime}}\mathrm{d}t~~~[\text{similar argument as in $(\mathrm{A})$}]\\
\le&\frac{1}{2B+2\beta}\frac{1}{\epsilon_0^{{p^\prime}-1}B_0^{{p^\prime}-1}({p^\prime}-1)n^{\frac{{p^\prime}}{2}}} \le \frac{1}{({p^\prime}-1)n^{\frac{{p^\prime}}{2}}}\le  \frac{1}{n^{\frac{{p^\prime}}{2}}}, 
\end{aligned}
\end{flalign*}
where the last inequality follows from the choice of $\epsilon_0> \frac{M_{p^\prime}^{\frac{1}{{p^\prime}}}}{B_0}$. 
Combining the bounds of $\pth{\mathrm{A}}$ and $\pth{\mathrm{B}}$, we get 
$\expect{\expect{\tilde{Y}_{mi}\, \mid\, \bm{g}_{mi}^1, \cdots, \bm{g}_{mi}^n}-p\frac{\frac{1}{n}\sum_{j=1}^n\bm{g}_{mi}^j}{2B+2\beta}} \le \frac{2p}{n^{\frac{{p^\prime}}{2}}}$. 
Hence, 
\begin{align}
\label{eq: expectation bound on Y heavy tailed}
\expect{\tilde{Y}_{mi}} \le \frac{2p}{n^{\frac{{p^\prime}}{2}}} + \frac{p\bm{g}_{mi}}{2B+2\beta}.
\end{align}
}

Let us consider two mutually complement events $\calE_1$ and $\calE_2$:
\begin{small}
\begin{align*}
     &\calE_1: = \sth{\frac{1}{2(B+\beta)}\sum_{m=1}^M\clip\pth{\frac{1}{n}\sum_{j=1}^n\bm{g}_{mi}^j,B} -\expect{\frac{1}{2(B+\beta)}  \sum_{m=1}^M \clip\pth{\frac{1}{n}\sum_{j=1}^n\bm{g}_{mi}^j,B}}\le \frac{{ c_0(n,p)}}{4(B+\beta)} \sqrt{M}}, \\
    &\calE_2: = \sth{\frac{1}{2(B+\beta)}\sum_{m=1}^M\clip\pth{\frac{1}{n}\sum_{j=1}^n\bm{g}_{mi}^j,B} -\expect{\frac{1}{2(B+\beta)}  \sum_{m=1}^M \clip\pth{\frac{1}{n}\sum_{j=1}^n\bm{g}_{mi}^j,B}}> \frac{{ c_0(n,p)}}{4(B+\beta)} \sqrt{M}}.
\end{align*}
\end{small}
We have
\begin{align}
   \prob{\sum_{m=1}^M \tilde{X}_{mi}\ge\frac{|\calS(t)|}{2}-\tau(t)}
   %&=\prob{\sum_{m=1}^M\tilde{Y}_{mi}\ge-\tau(t)~\mid~\calE_1}\prob{\calE_1}+\prob{\sum_{m=1}^M\tilde{Y}_{mi}\ge-\tau(t)~\mid~\calE_2}\prob{\calE_2}\notag\\
    &\le\prob{\sum_{m=1}^M\tilde{Y}_{mi}\ge-\tau(t)~\mid~\calE_1}
    %\label{ineq: Hoeffding byzantine clip subgaussian}\\
    %&~
    +\prob{\calE_2}.\label{ineq: Berstein byzantine clip subgaussian}
\end{align}

By Proposition \ref{prop: clipped variance upper bound}, we know that 
\begin{align*}
\var\pth{\clip\pth{\frac{1}{n}\sum_{j=1}^n\bm{g}_{mi}^j,B}}  
\le \var\pth{\frac{1}{n}\sum_{j=1}^n\bm{g}_{mi}^j} 
\le \frac{1}{n} \var\pth{\bm{g}_{mi}^1} = \frac{1}{n} \sigma^2_{mi} \le \frac{1}{n}\sigma^2. 
\end{align*}
In addition, $\clip\pth{\frac{1}{n}\sum_{j=1}^n\bm{g}_{mi}^j,B}$ is { bounded and thus} sub-Gaussian. { Hence}, we have %since $\bm{g}_{mi}^j$ is Sub-Gaussian, 
\begin{align*}
\prob{\calE_2} 
 %=&\prob{\sum_{m=1}^M\clip\pth{\frac{1}{n}\sum_{j=1}^n\bm{g}_{mi}^j,B} -\expect{ \sum_{m=1}^M \clip\pth{\frac{1}{n}\sum_{j=1}^n\bm{g}_{mi}^j,B}}\ge \frac{c_0}{2} \sqrt{M}}\\
\le& \exp\pth{-\frac{\frac{{ c_0^2(n,p)}M}{4}}{\frac{2M\sigma^2}{n}}}. 
%\exp\pth{-\frac{\pth{\frac{c_0}{2}\sqrt{M}}^2/2}{\sum_{m=1}^M\frac{\sigma_{mi}^2}{n} + \frac{(2+\epsilon_0)B_0 \frac{c_0}{2}\sqrt{M}}{3}}}. 
\end{align*}
Since ${ c_0(n,p)}\ge \sqrt{\frac{8\sigma^2}{n} \log \frac{6}{c}}$, we have $\prob{\calE_2} \le \frac{c}{6}$. 

For the first term in the right-hand side of Eq.\,\eqref{ineq: Berstein byzantine clip subgaussian}, we have 
\begin{align*}
&\prob{\sum_{m=1}^M\tilde{Y}_{mi}\ge-\tau(t)~\mid~\calE_1}\\
=&\prob{\sum_{m=1}^M\tilde{Y}_{mi}-\expect{\sum_{m=1}^M\tilde{Y}_{mi} \, \mid\, \bm{g}_{mi}^1, \cdots, \bm{g}_{mi}^n}\ge\underbrace{-\tau(t)-\expect{\sum_{m=1}^M\tilde{Y}_{mi}\, \mid\, \bm{g}_{mi}^1, \cdots, \bm{g}_{mi}^n}}_{\pth{\mathrm{C}}}~\mid~\calE_1}
\end{align*}
Recall that $\expect{\tilde{Y}_{mi}\, \mid\, \bm{g}_{mi}^1, \cdots, \bm{g}_{mi}^n}=\frac{p}{2B+2\beta}\clip\pth{\frac{1}{n}\sum_{j=1}^n\bm{g}_{mi}^j,B}. $
We have
\begin{align*}
    \pth{\mathrm{C}} \mid \calE_1 &= -\tau(t) - \frac{p}{2B+2\beta}\sum_{m=1}^M\clip\pth{\frac{1}{n}\sum_{j=1}^n\bm{g}_{mi}^j,B} \mid \calE_1 \\
    & \ge  -\tau(t) - \expect{\frac{p}{2B+2\beta}\sum_{m=1}^M\clip\pth{\frac{1}{n}\sum_{j=1}^n\bm{g}_{mi}^j,B}} -\frac{p{ c_0(n,p)}}{4(B+\beta)}\sqrt{M}\\
    & = -\tau(t) - \sum_{m=1}^M\expect{\tilde{Y}_{mi}} -\frac{p{ c_0(n,p)}}{4(B+\beta)}\sqrt{M}\\
    &{ \begin{cases}
        &\ge -\tau(t) - Mp\exp\pth{-\frac{n}{2}} - \frac{pM}{2(B+\beta)}{ \nabla F_i}(w(t))-\frac{p{ c_0(n,p)}}{4(B+\beta)}\sqrt{M}~~~[\text{Sub-Gaussian Noise}]\\
        &\ge -\tau(t) - \frac{2Mp}{n^{\frac{{p^\prime}}{2}}}  - \frac{pM}{2(B+\beta)}{ \nabla F_i}(w(t))-\frac{p{ c_0(n,p)}}{4(B+\beta)}\sqrt{M}~~~[\text{Heavy-tailed Noise}]
    \end{cases} }
\end{align*}
Recall that ${ \nabla F_i}(w(t))<0$. 

\noindent When $\frac{pM}{2(B+\beta)}\abth{{ \nabla F_i}(w(t))} \ge \tau(t)+ Mp\exp\pth{-\frac{n}{2}}+ \frac{p{ c_0(n,p)}}{2(B+\beta)}\sqrt{M}$ { (sub-Gaussian noise) or when  $\frac{pM}{2(B+\beta)}\abth{{ \nabla F_i}(w(t))} \ge \tau(t)+ \frac{2Mp}{n^{\frac{{p^\prime}}{2}}}+ \frac{p{ c_0(n,p)}}{2(B+\beta)}\sqrt{M}$ (heavy-tailed noise)}, we get
\begin{align*}
\prob{\sum_{m=1}^M\tilde{Y}_{mi}\ge-\tau(t)~\mid~\calE_1} \le& \prob{\sum_{m=1}^M\tilde{Y}_{mi}-\expect{\sum_{m=1}^M\tilde{Y}_{mi} \, \mid\, \bm{g}_{mi}^1, \cdots, \bm{g}_{mi}^n}\ge \frac{p{ c_0(n,p)}}{4(B+\beta)}\sqrt{M} ~\mid~\calE_1}\\
\le& \exp\pth{-\frac{p^2{ c_0^2(n,p)}}{8(B+\beta)^2}} \\
\le& \frac{3-5c}{6}, 
\end{align*}
where the last inequality holds because  ${ c_0(n,p)}\ge \sqrt{\frac{8\pth{B+\beta}^2}{p^2} \log \frac{6}{3-5c}}$. 

% Therefore,  for adaptive system adversary, choosing ${ c_0(n,p)} = \max\sth{\sqrt{\frac{8\sigma^2}{n} \log \frac{6}{c}} ,\, \sqrt{\frac{8\pth{B+\beta}^2}{p^2} \log \frac{6}{3-5c}}}$, we conclude that  if $\frac{pM}{2(B+\beta)}\abth{{ \nabla F_i}(w(t))} \ge \tau(t)+ Mp\exp\pth{-\frac{n}{2}}+ \frac{p{ c_0(n,p)}}{2(B+\beta)}\sqrt{M}$ { (sub-Gaussian Noise) or if $\frac{pM}{2(B+\beta)}\abth{{ \nabla F_i}(w(t))} \ge \tau(t)+ \frac{2Mp}{n^{\frac{{p^\prime}}{2}}}+ \frac{p{ c_0(n,p)}}{2(B+\beta)}\sqrt{M}$ (heavy-tailed noise)}, then 
% $$
% \prob{\sign\pth{\frac{1}{M}\sum_{m=1}^M\hat{\bm{g}}_{mi}}\neq\sign\pth{{ \nabla F_i}(w(t))}\mid w(t)}\le \frac{1-c}{2}.
% $$
% Otherwise, $\prob{\sign\pth{\frac{1}{M}\sum_{m=1}^M\hat{\bm{g}}_{mi}}\neq\sign\pth{{ \nabla F_i}(w(t))}\mid w(t)}\le 1$. 

It remains to show the case for static adversary. When $\tau(t) \le  \frac{2}{p^2}\log \frac{6}{c}$, we bound Eq.\,\eqref{main ineq subgaussian} as 
\begin{align*}
\prob{\sum_{m=1}^M \tilde{X}_{mi}\ge\frac{|\calS(t)|}{2}-\sum_{m\in\calB(t)}X_{mi}}
\le & \prob{\sum_{m=1}^M \tilde{X}_{mi}\ge\frac{|\calS(t)|}{2}-\tau(t)}.
\end{align*}
When $\tau(t)> \frac{2}{p^2}\log \frac{6}{c}$, we bound Eq.\,\eqref{main ineq subgaussian} as 
\begin{align*}
\prob{\sum_{m=1}^M \tilde{X}_{mi}\ge\frac{|\calS(t)|}{2}-\sum_{m\in\calB(t)}X_{mi}}
\le & \prob{\sum_{m=1}^M \tilde{X}_{mi}\ge\frac{|\calS(t)|}{2}-\frac{3p}{2}\tau(t)} + \frac{c}{6}.
\end{align*}
The remaining proof follows the above argument for adaptive adversary. 
\end{proof}

% \begin{thmn}[\ref{thm: convergence subgaussian} (Sub-Gaussian Convergence Rate)]
% Suppose Assumptions \ref{ass: 1 lower bound}, \ref{ass: 2 smmothness}, \ref{ass: 4 Bounded gradient}, and \ref{ass: sub gaussianity1} hold, the number of Byzantine clients $\tau(t)\ge0$, and the learning rate is set as $\eta=\frac{1}{\sqrt{dT}}$. Then, by running Algorithm\ref{alg: alg 1} with $\hat{\bm{g}}_{mi}(t)$ for $T$ iterations, when the system adversary is adaptive or the system adversary is static but with $\tau(t)\le \frac{2}{p^2}\log\frac{6}{c}$, we have
% \begin{align*}
%      \frac{1}{T}\sum_{t=0}^{T-1}\expect{\|\nabla F(w(t))\|_1}\le\frac{1}{c}\left[\frac{\pth{F(w(0))-F^*}\sqrt{d}}{\sqrt{T}}+\frac{L\sqrt{d}}{2\sqrt{T}}+2 d\frac{c_0}{\sqrt{M}}+4d(B+\beta)\exp\pth{-\frac{n}{2}}\right.\\\left.+4d\frac{(B+\beta)\sum_{t=0}^{T-1}\tau(t)}{pTM}\right].
% \end{align*}
% On the other hand, when the system adversary is static with $\tau(t) >\frac{2}{p^2}\log\frac{6}{c}$, we have
% \begin{align*}
%     \frac{1}{T}\sum_{t=0}^{T-1}\expect{\|\nabla F(w(t))\|_1}\le\frac{1}{c}\left[\frac{\pth{F(w(0))-F^*}\sqrt{d}}{\sqrt{T}}+\frac{L\sqrt{d}}{2\sqrt{T}}+2 d\frac{c_0}{\sqrt{M}}+4d(B+\beta)\exp\pth{-\frac{n}{2}}\right.\\\left.+6d\frac{(B+\beta)\sum_{t=0}^{T-1}\tau(t)}{TM}\right].
% \end{align*}
% where $0<c<\frac{3}{5}$ is some given constant, and $c_0: = \max\sth{\sqrt{\frac{8\sigma^2}{n} \log \frac{6}{c}} ,\, \sqrt{\frac{8\pth{B+\beta}^2}{p^2} \log \frac{6}{3-5c}}}$.
% \end{thmn}

\begin{proof}[\bf Proof of Theorem \ref{thm: convergence subgaussian} (Sub-Gaussian { and Heavy-tailed} Convergence Rate)]
% Let $\tilde{\bm{g}}_i=\sign\pth{\frac{1}{M}\sum_{m=1}^M\hat{\bm{g}}_{mi}}$.

By Assumption \ref{ass: 2 smmothness}, we have
\begin{align*}
    F\pth{w(t+1)}-F\pth{w(t)}&\le\langle\nabla F(w(t)),w(t+1)-w(t)\rangle+\frac{L}{2}\|w(t+1)-w(t)\|^2\\
    % &=\langle\nabla F(w(t)),-\eta\tilde{\bm{g}}(t)\rangle+\frac{L}{2}\|\eta\tilde{\bm{g}}(t)\|^2\\
    % &=-\eta\langle\nabla F(w(t)),\tilde{\bm{g}}(t)\rangle+\frac{Ld}{2}\eta^2\\
    % &=-\eta\sum_{i=1}^d\abth{\nabla F(w(t))_i}\indc{\tilde{\bm{g}}_i=\sign{ \pth{\nabla F(w(t))_i}}}\\
    % &~~~+\eta\sum_{i=1}^d\abth{\nabla F(w(t))_i}\indc{\tilde{\bm{g}}_i\neq \sign{ \pth{\nabla F(w(t))_i}}}+\frac{Ld}{2}\eta^2\\
    %&=-\eta\sum_{i=1}^d\abth{\nabla F(w(t))_i}+2\eta\sum_{i=1}^d\abth{\nabla F(w(t))_i}\indc{\hat{\bm{g}}_i\neq \sign{ \pth{\nabla_i F(w(t))}}}+\frac{Ld}{2}\eta^2\\
    &=-\eta_t\|\nabla F(w(t))\|_1+2\eta\sum_{i=1}^d\abth{\nabla F(w(t))_i}\indc{\tilde{\bm{g}}_i\neq \sign{ \pth{\nabla F(w(t))_i}}}+\frac{Ld}{2}\eta_t^2,%Expectation
\end{align*}
where $\nabla F(w(t))_i$ is the $i$-th coordinate of $\nabla F(w(t))$. 
Then, by conditioning on parameter $w(t)$, we get
\begin{align*}
    &\expect{F\pth{w(t+1)}-F\pth{w(t)}\big |w(t)}\\
    &\le \expect{-\eta_t\|\nabla F(w(t))\|_1+2\eta_t\sum_{i=1}^d\abth{\nabla F(w(t))_i}\indc{\tilde{\bm{g}}_i\neq \sign\pth{\nabla F(w(t))_i}}+\frac{Ld}{2}\eta_t^2}\\
    %&=-\eta\|\nabla F(w(t))\|_1+\frac{Ld}{2}\eta^2+2\eta\expect{\sum_{i=1}^d\abth{\nabla F(w(t))_i}\indc{\sign\pth{\tilde{\bm{g}}_i(t)}\neq \sign\pth{\nabla F(w(t))_i}}}\\
    &=-\eta_t\|\nabla F(w(t))\|_1+\frac{Ld}{2}\eta_t^2+2\eta_t\sum_{i=1}^d\abth{\nabla F(w(t))_i}\prob{\tilde{\bm{g}}_i\neq \sign\pth{\nabla F(w(t))_i}}.
\end{align*}

\noindent { Recall that $\Xi_1(n)= 2(B+\beta)\exp\pth{-\frac{n}{2}}$, and $\Xi_2(n)=\frac{4(B+\beta)}{n^{\frac{p^\prime}{2}}}$. Define 
\begin{align*}
    \begin{cases}
        &\mathrm{A}_1=\sth{\abth{{ \nabla F}(w(t))_i} \ge \frac{2(B+\beta)}{pM}\tau(t) + 2(B+\beta)\exp\pth{-\frac{n}{2}} + \frac{2(B+\beta)}{p}\sqrt{\frac{2}{M}\log\frac{6}{3-5c}}+2\sigma\sqrt{\frac{2\log\frac{6}{c}}{{Mn}}}};\\
        &\mathrm{A}_2= \sth{\abth{{ \nabla F}(w(t))_i} \ge \frac{2(B+\beta)}{pM}\tau(t) + \frac{4(B+\beta)}{n^{\frac{p^\prime}{2}}} + \frac{2(B+\beta)}{p}\sqrt{\frac{2}{M}\log\frac{6}{3-5c}}+2\sigma\sqrt{\frac{2\log\frac{6}{c}}{{Mn}}}}.
    \end{cases}
\end{align*}
In the following proof, we denote $\mathrm{A}=\mathrm{A}_1$, $\Xi(n)=\Xi_1(n)$ for sub-Gaussian noise and $\mathrm{A}=\mathrm{A}_2$, $\Xi(n)=\Xi_2(n)$ for heavy-tailed noise.
}

We now have two cases:

\noindent\underline{First}, when the system adversary is adaptive or the system adversary is static but with $\tau(t)\le \frac{2}{p^2}\log\frac{6}{c}$, then 
{
\begin{align}
    &\expect{F\pth{w(t+1)}-F\pth{w(t)}\big |w(t)}\notag\\
    % &=-\eta\|\nabla F(w(t))\|_1+\frac{Ld}{2}\eta^2+2\eta\sum_{i=1}^d\abth{\nabla F(w(t))_i}\prob{\tilde{\bm{g}}_i\neq \sign{ \pth{\nabla F(w(t))_i}}}\indc{\mathrm{A}}\notag\\&~~~~+2\eta\sum_{i=1}^d\abth{\nabla F(w(t))_i}\prob{\tilde{\bm{g}}_i\neq \sign{ \pth{\nabla F(w(t))_i}}}\indc{\mathrm{A}^\complement}\notag\\
    &\le-\eta_t\|\nabla F(w(t))\|_1+\frac{Ld}{2}\eta_t^2\notag\\
    &~~~~+2\eta_t\sum_{i=1}^d\abth{\nabla F(w(t))_i}\frac{1-c}{2}\indc{\mathrm{A}}\notag\\
    &~~~~+2\eta_t\sum_{i=1}^d\qth{\frac{2(B+\beta)\tau(t)}{pM}+\frac{2(B+\beta)}{p}\sqrt{\frac{2}{M}\log\frac{6}{3-5c}}+2\sigma\sqrt{\frac{2\log\frac{6}{c}}{{Mn}}}+\Xi(n)}\indc{\mathrm{A}^\complement}\notag\\
    &\le-\eta_t c\|\nabla F(w(t))\|_1+\frac{Ld}{2}\eta^2+\frac{4\eta_t d(B+\beta)}{p}\sqrt{\frac{2}{M}\log\frac{6}{3-5c}}+4\sigma\eta_t d\sqrt{\frac{2\log\frac{6}{c}}{{Mn}}}\notag\\
    &\quad+4\eta_t d\frac{(B+\beta)\tau(t)}{pM}+2\eta_t d \Xi(n).\notag
\end{align}
}
Therefore, by Assumption \ref{ass: 1 lower bound}, we have
{ 
\begin{align*}
    &F^*-F(w(0))\\
    &\le\expect{F\pth{w(T)}-F\pth{w(0)}}\\
    &\le- c\sum_{t=0}^{T-1}\eta_t\expect{\|\nabla F(w(t))\|_1}+\frac{Ld\sum_{t=0}^{T-1}\eta^2}{2}
    +\frac{4 d(B+\beta)\sum_{t=0}^{T-1}\eta_t}{p}\sqrt{\frac{2}{M}\log\frac{6}{3-5c}}+4\sigma d\sum_{t=0}^{T-1}\eta_t \sqrt{\frac{2\log\frac{6}{c}}{{Mn}}}\\
    &\quad+2 d \Xi(n)\sum_{t=0}^{T-1}+4d\frac{(B+\beta)\sum_{t=0}^{T-1}\eta_t\tau(t)}{pM}.
\end{align*}
}

Rearrange the inequality, we get
\begin{align*}
    % &F^*-F(w(0))\\
    % &\le\expect{F\pth{w(T)}-F\pth{w(0)}}\\
    % &\le- c\sum_{t=0}^{T-1}\eta_t\expect{\|\nabla F(w(t))\|_1}+\frac{Ld\sum_{t=0}^{T-1}\eta^2}{2}
    % +\frac{4 d(B+\beta)\sum_{t=0}^{T-1}\eta_t}{p}\sqrt{\frac{2}{M}\log\frac{6}{3-5c}}+4\sigma d\sum_{t=0}^{T-1}\eta_t \sqrt{\frac{2\log\frac{6}{c}}{{Mn}}}\\
    % &\quad+2 d \Xi(n)\sum_{t=0}^{T-1}+4d\frac{(B+\beta)\sum_{t=0}^{T-1}\eta_t\tau(t)}{pM}.
    \sum_{t=0}^{T-1}\eta_t\expect{\|\nabla F(w(t))\|_1}\le \frac{1}{c}\left[
    F(w(0)) - F^* + \frac{Ld\sum_{t=0}^{T-1}\eta^2}{2} + \frac{4 d(B+\beta)\sum_{t=0}^{T-1}\eta_t}{p}\sqrt{\frac{2}{M}\log\frac{6}{3-5c}}\right.&\\
    \left.+4\sigma d\sum_{t=0}^{T-1}\eta_t \sqrt{\frac{2\log\frac{6}{c}}{{Mn}}} + 2 d \Xi(n)\sum_{t=0}^{T-1}\eta_t+4d\frac{(B+\beta)\sum_{t=0}^{T-1}\eta_t\tau(t)}{pM}\right]&
\end{align*}

% Let $R$ be the random time with a probability mass function
% \begin{align*}
%     \prob{R=k} = \frac{\eta_k}{\sum_{t=0}^{T-1}\eta_t},\quad k=0,\ldots,T-1.
% \end{align*}

It follows that
\begin{align*}
    \expect{\|\nabla F(w(R))\|_1}\le \frac{1}{c}\left[
    \frac{F(w(0)) - F^*}{\sum_{t=0}^{T-1}\eta_t} + \frac{Ld\sum_{t=0}^{T-1}\eta_t^2}{2\sum_{t=0}^{T-1}\eta_t} + \frac{4 d(B+\beta)}{p}\sqrt{\frac{2}{M}\log\frac{6}{3-5c}}\right.&\\
    \left.+4\sigma d\sqrt{\frac{2\log\frac{6}{c}}{{Mn}}} + 2 d \Xi(n)+4d\frac{(B+\beta)\sum_{t=0}^{T-1}\eta_t\tau(t)}{pM\sum_{t=0}^{T-1}\eta_t}\right]&
\end{align*}
\iffalse
Rearrange the inequality and plug in $\eta=\frac{1}{\sqrt{dT}}$, we get
{
\begin{align*}
    \eta c\sum_{t=0}^{T-1}\expect{\|\nabla F(w(t))\|_1}&\le F(w(0))-F^*+\frac{\eta^2LdT}{2}+2\eta dT\frac{{ c_0(n,p)}}{\sqrt{M}}+2\eta d T\Xi(n)+4\eta d\frac{(B+\beta)\sum_{t=0}^{T-1}\tau(t)}{pM}\\
    \frac{1}{T}\sum_{t=0}^{T-1}\expect{\|\nabla F(w(t))\|_1}&\le\frac{1}{c}\qth{\frac{\pth{F(w(0))-F^*}\sqrt{d}}{\sqrt{T}}+\frac{L\sqrt{d}}{2\sqrt{T}}+2 d\frac{{ c_0(n,p)}}{\sqrt{M}}+4d\frac{(B+\beta)\sum_{t=0}^{T-1}\tau(t)}{pTM}+2d\Xi(n)}.
\end{align*}
}
\fi
\noindent\underline{Second}, when the system adversary is static with $\tau(t) >\frac{2}{p^2}\log\frac{6}{c}$, { follow a similar proof as above, we get

\begin{align*}
    \expect{\|\nabla F(w(R))\|_1}\le \frac{1}{c}\left[
    \frac{F(w(0)) - F^*}{\sum_{t=0}^{T-1}\eta_t} + \frac{Ld\sum_{t=0}^{T-1}\eta_t^2}{2\sum_{t=0}^{T-1}\eta_t} + \frac{4 d(B+\beta)}{p}\sqrt{\frac{2}{M}\log\frac{6}{3-5c}}\right.&\\
    \left.+4\sigma d\sqrt{\frac{2\log\frac{6}{c}}{{Mn}}} + 2 d \Xi(n)+6d\frac{(B+\beta)\sum_{t=0}^{T-1}\eta_t\tau(t)}{M\sum_{t=0}^{T-1}\eta_t}\right]&
\end{align*}
% \iffalse
% \begin{align*}
%     \frac{1}{T}\sum_{t=0}^{T-1}\expect{\|\nabla F(w(t))\|_1}\le\frac{1}{c}\qth{\frac{\pth{F(w(0))-F^*}\sqrt{d}}{\sqrt{T}}+\frac{L\sqrt{d}}{2\sqrt{T}}+2 d\frac{{ c_0(n,p)}}{\sqrt{M}}+6d\frac{(B+\beta)\sum_{t=0}^{T-1}\tau(t)}{TM}+2d\Xi(n)}.
% \end{align*}
% \fi
}
\end{proof}
\subsubsection{Gaussian Distribution}
% \begin{crln}[\ref{corollary: gaussian} (Gaussian Tail Sign Errors)]
% Suppose that Assumptions \ref{ass: 4 Bounded gradient} and \ref{ass: gaussianity} hold. 
% Choose $B=(1+\epsilon_0)B_0$ for $\epsilon_0>\sigma/B_0$. 
% For any iteration $t\ge1$, let $\bm{g}_m :=\nabla f_m(w(t))$ denote the true local gradient at client $m$.
% Let $\hat{\bm{g}}_m$ denote the compressed gradients as in Algorithm \ref{alg: alg 1}. Let $c>0$ be any given constant such that $c< 3/5$. Choose $c_0 = \max\sth{\sqrt{\frac{8\sigma^2}{n} \log \frac{6}{c}} ,\, \sqrt{\frac{8\pth{B+\beta}^2}{p^2} \log \frac{6}{3-5c}}}$. 

% When the system adversary is adaptive or the system adversary is static but with $\tau(t)\le \frac{2}{p^2}\log\frac{6}{c}$, if $\abth{\sum_{m=1}^M \bm{g}_{mi}} \ge \frac{2(B+\beta)}{p}\tau(t)+ \frac{M(B+\beta)}{2\sqrt{2\pi}}\exp\pth{-\frac{n}{2}}+ c_0\sqrt{M}$, then Eq. \,\eqref{eq: error bound} holds. 

% When the system adversary is static with $\tau(t) >\frac{2}{p^2}\log\frac{6}{c}$, if $\abth{\sum_{m=1}^M \bm{g}_{mi}} \ge 3(B+\beta)\tau(t)+ \frac{M(B+\beta)}{2\sqrt{2\pi}}\exp\pth{-n/2}+ c_0\sqrt{M}$, then Eq.\,\eqref{eq: error bound} holds. 
% \end{crln}
\begin{proof}[\bf Proof of Corollary \ref{corollary: gaussian} (Gaussian Tail Sign Errors)]
Most of the proofs are the same with Theorem \ref{thm: sub-gaussian}.
We start from Eq. \,\ref{ineq: double expectation subgaussian} and define $c_0(n,p) = \frac{2(B+\beta)}{p}\sqrt{2\log\frac{6}{3-5c}}+2\sigma\sqrt{\frac{2\log\frac{6}{c}}{{n}}}.$
.

It turns out that $\expect{\expect{\tilde{Y}_{mi}\, \mid\, \bm{g}_{mi}^1, \cdots, \bm{g}_{mi}^n}-p\frac{\frac{1}{n}\sum_{j=1}^n\bm{g}_{mi}^j}{2B+2\beta}}$ is small: 
% \begin{small}
\begin{flalign}
\label{eq: combing loss gaussian}
    & \frac{1}{p}\expect{\expect{\tilde{Y}_{mi}\, \mid\, \bm{g}_{mi}^1, \cdots, \bm{g}_{mi}^n}-p\frac{\frac{1}{n}\sum_{j=1}^n\bm{g}_{mi}^j}{2B+2\beta}}\notag&\\
    &=\underbrace{\frac{\pth{B-\bm{g}_{mi}}\prob{\frac{1}{n}\sum_{j=1}^n\bm{g}_{mi}^j\ge B}}{2B+2\beta}}_{\pth{\mathrm{A}}}-\underbrace{\frac{\pth{B+\bm{g}_{mi}}\prob{\frac{1}{n}\sum_{j=1}^n\bm{g}_{mi}^j\le -B}}{2B+2\beta}}_{\pth{\mathrm{B}}}\notag&\\
  &+\underbrace{\frac{\expect{\pth{-\frac{1}{n}\sum_{j=1}^n\bm{g}_{mi}^j+\bm{g}_{mi}}\indc{\abth{\frac{1}{n}\sum_{j=1}^n\bm{g}_{mi}^j}\ge B}}}{2B+2\beta}}_{\pth{\mathrm{C}}}.&
\end{flalign}

We have,
\begin{flalign*}
 &(2B+2\beta)\pth{\mathrm{A}}&\\
 &\le\pth{B-\bm{g}_{mi}}\cdot\frac{\sigma_{mi}/\sqrt{n}}{B-\bm{g}_{mi}}\cdot\frac{1}{\sqrt{2\pi}}\cdot\exp\pth{-\frac{(B-\bm{g}_{mi})^2}{2\pth{\sigma_{mi}/\sqrt{n}}^2}}=\frac{\sigma_{mi}/\sqrt{n}}{\sqrt{2\pi}}\exp\pth{-\frac{(B-\bm{g}_{mi})^2}{2\pth{\sigma_{mi}/\sqrt{n}}^2}}&;
 \end{flalign*}
 \begin{flalign*}
&(2B+2\beta)\pth{\mathrm{B}}&\\
&\ge\pth{B+\bm{g}_{mi}}\cdot\frac{\frac{B+\bm{g}_{mi}}{\sigma_{mi}/\sqrt{n}}}{\pth{\frac{B+\bm{g}_{mi}}{\sigma_{mi}/\sqrt{n}}}^2+1}\cdot\frac{1}{\sqrt{2\pi}}\cdot\exp\pth{-\frac{(B+\bm{g}_{mi})^2}{2\pth{\sigma_{mi}/\sqrt{n}}^2}}&\\
 &=\left[1-\frac{\pth{\sigma_{mi}/\sqrt{n}}^2}{(B+\bm{g}_{mi})^2+\pth{\sigma_{mi}/\sqrt{n}}^2}\right]\frac{\sigma_{mi}/\sqrt{n}}{\sqrt{2\pi}}\exp\pth{-\frac{(B+\bm{g}_{mi})^2}{2\pth{\sigma_{mi}/\sqrt{n}}^2}};&
\end{flalign*}

\begin{flalign*}
&(2B+2\beta)\pth{\mathrm{C}}&\\
&=-\int_{B}^\infty\frac{x-\bm{g}_{mi}}{\sqrt{2\pi}\sigma_{mi}/\sqrt{n}}\exp\pth{-\frac{(x-\bm{g}_{mi})^2}{2\pth{\sigma_{mi}/\sqrt{n}}^2}}\mathd x-\int_{-\infty}^{-B}\frac{x-\bm{g}_{mi}}{\sqrt{2\pi}\sigma_{mi}/\sqrt{n}}\exp\pth{-\frac{(x-\bm{g}_{mi})^2}{2\pth{\sigma_{mi}/\sqrt{n}}^2}}\mathd x;&\\
&=\frac{\sigma_{mi}/\sqrt{n}}{\sqrt{2\pi}}\left[\exp\pth{-\frac{(B+\bm{g}_{mi})^2}{2\pth{\sigma_{mi}/\sqrt{n}}^2}}-\exp\pth{-\frac{(B-\bm{g}_{mi})^2}{2\pth{\sigma_{mi}/\sqrt{n}}^2}}\right],&
\end{flalign*}

where $(\mathrm{A})$ and $(\mathrm{B})$ follow because of Mill's ratio  \cite{gordon1941values}.

Combining $\pth{\mathrm{A}}$, $\pth{\mathrm{B}}$, and $\pth{\mathrm{C}}$, we get
\begin{align*}
    \eqref{eq: combing loss gaussian}\le&\frac{p\pth{\sigma_{mi}/\sqrt{n}}^3}{\sqrt{2\pi}\pth{2B+2\beta}\qth{\pth{B+\bm{g}_{mi}}^2+\pth{\sigma_{mi}/\sqrt{n}}^2}}\exp\pth{-\frac{(B+\bm{g}_{mi})^2}{2\pth{\sigma_{mi}/\sqrt{n}}^2}}+\frac{p\bm{g}_{mi}}{2B+2\beta}\\
    \le&\frac{p\pth{\sigma_{mi}/\sqrt{n}}^3}{\sqrt{2\pi}\pth{2B+2\beta}\qth{\epsilon_0^2B_0^2+\pth{\sigma_{mi}/\sqrt{n}}^2}}\exp\pth{-\frac{\epsilon_0^2B_0^2}{2\pth{\sigma_{mi}/\sqrt{n}}^2}}+\frac{p\bm{g}_{mi}}{2B+2\beta}\\
    \le&\frac{p}{4\sqrt{2\pi}}\exp\pth{-\frac{n}{2}}+\frac{p\bm{g}_{mi}}{2B+2\beta},
\end{align*}
where the last inequality follows because $\epsilon_0>\frac{\sigma}{B_0}$ and $B:=B_0+\epsilon_0B_0>\epsilon_0B_0.$

For the first term in the right hand side of Eq.\,\eqref{ineq: Berstein byzantine clip subgaussian}, we have 
\begin{align*}
&\prob{\sum_{m=1}^M\tilde{Y}_{mi}\ge-\tau(t)~\mid~\calE_1}\\
=&\prob{\sum_{m=1}^M\tilde{Y}_{mi}-\expect{\sum_{m=1}^M\tilde{Y}_{mi} \, \mid\, \bm{g}_{mi}^1, \cdots, \bm{g}_{mi}^n}\ge\underbrace{-\tau(t)-\expect{\sum_{m=1}^M\tilde{Y}_{mi}\, \mid\, \bm{g}_{mi}^1, \cdots, \bm{g}_{mi}^n}}_{\pth{\mathrm{D}}}~\mid~\calE_1}
\end{align*}
Recall that $\expect{\tilde{Y}_{mi}\, \mid\, \bm{g}_{mi}^1, \cdots, \bm{g}_{mi}^n}=\frac{p}{2B+2\beta}\clip\pth{\frac{1}{n}\sum_{j=1}^n\bm{g}_{mi}^j,B}. $
We have
\begin{align*}
    \pth{\mathrm{D}} \mid \calE_1 &= -\tau(t) - \frac{p}{2B+2\beta}\sum_{m=1}^M\clip\pth{\frac{1}{n}\sum_{j=1}^n\bm{g}_{mi}^j,B} \mid \calE_1 \\
    & \ge  -\tau(t) - \expect{\frac{p}{2B+2\beta}\sum_{m=1}^M\clip\pth{\frac{1}{n}\sum_{j=1}^n\bm{g}_{mi}^j,B}} -\frac{p{ c_0(n,p)}}{4(B+\beta)}\sqrt{M}\\
    & = -\tau(t) - \sum_{m=1}^M\expect{\tilde{Y}_{mi}} -\frac{p{ c_0(n,p)}}{4(B+\beta)}\sqrt{M}\\
    & \ge -\tau(t) - \frac{Mp}{4\sqrt{2\pi}}\exp\pth{-\frac{n}{2}} - \frac{p}{2(B+\beta)}\sum_{m=1}^M\bm{g}_{mi}-\frac{p{ c_0(n,p)}}{4(B+\beta)}\sqrt{M}
\end{align*}
Recall that ${ \nabla F_i}(w(t))<0$. 
When $\frac{Mp}{2(B+\beta)}\abth{{ \nabla F_i}(w(t))} \ge \tau(t)+ \frac{Mp}{4\sqrt{2\pi}}\exp\pth{-\frac{n}{2}}+ \frac{p{ c_0(n,p)}}{2(B+\beta)}\sqrt{M}$, we get
\begin{align*}
\prob{\sum_{m=1}^M\tilde{Y}_{mi}\ge-\tau(t)~\mid~\calE_1} \le& \prob{\sum_{m=1}^M\tilde{Y}_{mi}-\expect{\sum_{m=1}^M\tilde{Y}_{mi} \, \mid\, \bm{g}_{mi}^1, \cdots, \bm{g}_{mi}^n}\ge \frac{p{ c_0(n,p)}}{4(B+\beta)}\sqrt{M} ~\mid~\calE_1}\\
\le& \exp\pth{-\frac{p^2{ c_0^2(n,p)}}{8(B+\beta)^2}} \\
\le& \frac{3-5c}{6}, 
\end{align*}
where the last inequality holds because  ${ c_0(n,p)}\ge \sqrt{\frac{8\pth{B+\beta}^2}{p^2} \log \frac{6}{3-5c}}$. 

The remaining proof follows the arguments in the proof of Theorem \ref{thm: sub-gaussian}. 
\end{proof}

% \begin{crln}[\ref{corollary: convergence gaussian} (Gaussian Tail Convergence Rate)]
% Suppose Assumptions \ref{ass: 1 lower bound}, \ref{ass: 2 smmothness}, \ref{ass: 4 Bounded gradient}, and \ref{ass: gaussianity} hold, the number of Byzantine clients $\tau(t)\ge0$, and the learning rate is set as $\eta=\frac{1}{\sqrt{dT}}$. Then, by running Algorithm\ref{alg: alg 1} with $\hat{\bm{g}}_{mi}(t)$ for $T$ iterations, when the system adversary is adaptive or the system adversary is static but with $\tau(t)\le \frac{2}{p^2}\log\frac{6}{c}$, we have
% \begin{align*}
%      \frac{1}{T}\sum_{t=0}^{T-1}\expect{\|\nabla F(w(t))\|_1}\le\frac{1}{c}\left[\frac{\pth{F(w(0))-F^*}\sqrt{d}}{\sqrt{T}}+\frac{L\sqrt{d}}{2\sqrt{T}}+2 d\frac{c_0}{\sqrt{M}}+\frac{d}{\sqrt{2\pi}}(B+\beta)\exp\pth{-\frac{n}{2}}\right.\\\left.+4d\frac{(B+\beta)\sum_{t=0}^{T-1}\tau(t)}{pTM}\right].
% \end{align*}

% On the other hand, when the system adversary is static with $\tau(t) >\frac{2}{p^2}\log\frac{6}{c}$, we have
% \begin{align*}
%     \frac{1}{T}\sum_{t=0}^{T-1}\expect{\|\nabla F(w(t))\|_1}\le\frac{1}{c}\left[\frac{\pth{F(w(0))-F^*}\sqrt{d}}{\sqrt{T}}+\frac{L\sqrt{d}}{2\sqrt{T}}+2 d\frac{c_0}{\sqrt{M}}+\frac{d}{\sqrt{2\pi}}(B+\beta)\exp\pth{-\frac{n}{2}}\right.\\\left.+6d\frac{(B+\beta)\sum_{t=0}^{T-1}\tau(t)}{TM}\right].
% \end{align*}
% where $0<c<\frac{3}{5}$ is some given constant, and $c_0: = \max\sth{\sqrt{\frac{8\sigma^2}{n} \log \frac{6}{c}} ,\, \sqrt{\frac{8\pth{B+\beta}^2}{p^2} \log \frac{6}{3-5c}}}$.
% \end{crln}

\begin{proof}[\bf Proof of Corollary \ref{corollary: convergence gaussian} (Gaussian Tail Convergence Rate)]
This proof follows from Theorem \ref{thm: convergence subgaussian}. 

% Let $R$ be the random time with a probability mass function
% \begin{align*}
%     \prob{R=k} = \frac{\eta_k}{\sum_{t=0}^{T-1}\eta_t},\quad k=0,\ldots,T-1.
% \end{align*}

We also consider two cases here.

\noindent\underline{First}, when the system adversary is adaptive or the system adversary is static but with $\tau(t)\le \frac{2}{p^2}\log\frac{6}{c}$, plug in $\abth{{ \nabla F_i}(w(t))} \ge \frac{2(B+\beta)}{pM}\tau(t)+ \frac{B+\beta}{2\sqrt{2\pi}}\exp\pth{-\frac{n}{2}}+ \frac{2(B+\beta)}{p}\sqrt{\frac{2}{M}\log\frac{6}{3-5c}}+2\sigma\sqrt{\frac{2\log\frac{6}{c}}{{Mn}}}$, we get
\begin{align*}
    \expect{\|\nabla F(w(R))\|_1}\le \frac{1}{c}\left[
    \frac{F(w(0)) - F^*}{\sum_{t=0}^{T-1}\eta_t} + \frac{Ld\sum_{t=0}^{T-1}\eta_t^2}{2\sum_{t=0}^{T-1}\eta_t} + \frac{4 d(B+\beta)}{p}\sqrt{\frac{2}{M}\log\frac{6}{3-5c}}\right.&\\
    \left.+4\sigma d\sqrt{\frac{2\log\frac{6}{c}}{{Mn}}} + \frac{d}{\sqrt{2\pi}}(B+\beta)\exp\pth{-\frac{n}{2}}+4d\frac{(B+\beta)\sum_{t=0}^{T-1}\eta_t\tau(t)}{pM\sum_{t=0}^{T-1}\eta_t}\right]&
\end{align*}
% \begin{align*}
%      \frac{1}{T}\sum_{t=0}^{T-1}\expect{\|\nabla F(w(t))\|_1}\le\frac{1}{c}\left[\frac{\pth{F(w(0))-F^*}\sqrt{d}}{\sqrt{T}}+\frac{L\sqrt{d}}{2\sqrt{T}}+\frac{d}{\sqrt{2\pi}}(B+\beta)\exp\pth{-\frac{n}{2}}\right.\\
%      \left.+\frac{4d(B+\beta)}{p}\sqrt{\frac{2}{M}\log\frac{6}{3-5c}}+4d\sigma\sqrt{\frac{2\log\frac{6}{c}}{{Mn}}}+4d\frac{(B+\beta)\sum_{t=0}^{T-1}\tau(t)}{pTM}\right].
% \end{align*}

\noindent\underline{Second}, when the system adversary is static with $\tau(t) >\frac{2}{p^2}\log\frac{6}{c}$, plug in $\abth{{ \nabla F_i}(w(t))} \ge \frac{3(B+\beta)\tau(t)}{M}+ \frac{B+\beta}{2\sqrt{2\pi}}\exp\pth{-\frac{n}{2}}+ \frac{2(B+\beta)}{p}\sqrt{\frac{2}{M}\log\frac{6}{3-5c}}+2\sigma\sqrt{\frac{2\log\frac{6}{c}}{{Mn}}}$, we get
\begin{align*}
    \expect{\|\nabla F(w(R))\|_1}\le \frac{1}{c}\left[
    \frac{F(w(0)) - F^*}{\sum_{t=0}^{T-1}\eta_t} + \frac{Ld\sum_{t=0}^{T-1}\eta_t^2}{2\sum_{t=0}^{T-1}\eta_t} + \frac{4 d(B+\beta)}{p}\sqrt{\frac{2}{M}\log\frac{6}{3-5c}}\right.&\\
    \left.+4\sigma d\sqrt{\frac{2\log\frac{6}{c}}{{Mn}}} + \frac{d}{\sqrt{2\pi}}(B+\beta)\exp\pth{-\frac{n}{2}}+6d\frac{(B+\beta)\sum_{t=0}^{T-1}\eta_t\tau(t)}{M\sum_{t=0}^{T-1}\eta_t}\right]&
\end{align*}
% \begin{align*}
%     \frac{1}{T}\sum_{t=0}^{T-1}\expect{\|\nabla F(w(t))\|_1}\le\frac{1}{c}\left[\frac{\pth{F(w(0))-F^*}\sqrt{d}}{\sqrt{T}}+\frac{L\sqrt{d}}{2\sqrt{T}}+\frac{d}{\sqrt{2\pi}}(B+\beta)\exp\pth{-\frac{n}{2}}\right.\\\left.+\frac{4d(B+\beta)}{p}\sqrt{\frac{2}{M}\log\frac{6}{3-5c}}+4d\sigma\sqrt{\frac{2\log\frac{6}{c}}{{Mn}}}+6d\frac{(B+\beta)\sum_{t=0}^{T-1}\tau(t)}{TM}\right].
% \end{align*}
\end{proof}

\subsection{Bounded Stochastic Gradients}

% \begin{crln}[\ref{corollary: sampling probability with all byzantine} (Bounded Gradient Sign Errors)]
% Suppose that Assumption \ref{ass: Bounded stochastic gradient} holds. 
% Choose $B=\tilde{B}$. 
% For any iteration $t\ge1$, let $\bm{g}_m :=\nabla f_m(w(t))$ denote the true local gradient at client $m$.
% Let $\hat{\bm{g}}_m$ denote the compressed gradients as in Algorithm \ref{alg: alg 1}. Let $c>0$ be any given constant such that $c< 3/5$. Choose $c_0 = \max\sth{\sqrt{\frac{8\sigma^2}{n} \log \frac{6}{c}} ,\, \sqrt{\frac{8\pth{B+\beta}^2}{p^2} \log \frac{6}{3-5c}}}$. 

% When the system adversary is adaptive or the system adversary is static but with $\tau(t)\le \frac{2}{p^2}\log\frac{6}{c}$, if $\abth{{ \nabla_i F}(w(t))} \ge \frac{2(B+\beta)}{Mp}\tau(t)+  \frac{c_0}{\sqrt{M}}$, then Eq.\,\eqref{eq: error bound} holds.

% When the system adversary is static with $\tau(t) >\frac{2}{p^2}\log\frac{6}{c}$, if $\abth{{ \nabla_i F}(w(t))} \ge \frac{3(B+\beta)\tau(t)}{M}+  \frac{c_0}{\sqrt{M}}$, then Eq.\,\eqref{eq: error bound} holds. 
% \end{crln}
\begin{proof}[\bf Proof of Corollary \ref{corollary: sampling probability with all byzantine}  (Bounded Gradient Sign Errors)]
This proof follows from Theorem \ref{thm: sub-gaussian}. Notably, if we choose $B=\tilde{B}$, $\clip\pth{\frac{1}{n}\sum_{j=1}^n\bm{g}_{mi}^j,B}=\frac{1}{n}\sum_{j=1}^n\bm{g}_{mi}^j$ by Assumption \ref{ass: Bounded stochastic gradient}. Thus, the bias introduced by the tail bound will be gone. 

For the first term in the RHS of Eq.\,\eqref{ineq: Berstein byzantine clip subgaussian}, we have 
\begin{align*}
&\prob{\sum_{m=1}^M\tilde{Y}_{mi}\ge-\tau(t)~\mid~\calE_1}\\
=&\prob{\sum_{m=1}^M\tilde{Y}_{mi}-\expect{\sum_{m=1}^M\tilde{Y}_{mi} \, \mid\, \bm{g}_{mi}^1, \cdots, \bm{g}_{mi}^n}\ge\underbrace{-\tau(t)-\expect{\sum_{m=1}^M\tilde{Y}_{mi}\, \mid\, \bm{g}_{mi}^1, \cdots, \bm{g}_{mi}^n}}_{\pth{\mathrm{A}}}~\mid~\calE_1}
\end{align*}
Recall that $\expect{\tilde{Y}_{mi}\, \mid\, \bm{g}_{mi}^1, \cdots, \bm{g}_{mi}^n}=\frac{p}{2B+2\beta}\frac{1}{n}\sum_{j=1}^n\bm{g}_{mi}^j. $
We have
\begin{align*}
    \pth{\mathrm{A}} \mid \calE_1 &= -\tau(t) - \frac{p}{2B+2\beta}\sum_{m=1}^M\frac{1}{n}\sum_{j=1}^n\bm{g}_{mi}^j \mid \calE_1 \\
    & \ge  -\tau(t) - \expect{\frac{p}{2B+2\beta}\sum_{m=1}^M\frac{1}{n}\sum_{j=1}^n\bm{g}_{mi}^j} -\frac{p{ c_0(n,p)}}{4(B+\beta)}\sqrt{M}\\
    & = -\tau(t) - \sum_{m=1}^M\expect{\tilde{Y}_{mi}} -\frac{p{ c_0(n,p)}}{4(B+\beta)}\sqrt{M}\\
    & \ge -\tau(t) - \frac{p}{2(B+\beta)}\sum_{m=1}^M\bm{g}_{mi}-\frac{p{ c_0(n,p)}}{4(B+\beta)}\sqrt{M}
\end{align*}
Recall that ${ \nabla F_i}(w(t))<0$. 
When $\abth{{ \nabla F_i}(w(t))} \ge \frac{2(B+\beta)\tau(t)}{Mp}+ \frac{{ c_0(n,p)}}{\sqrt{M}}$, we get
\begin{align*}
\prob{\sum_{m=1}^M\tilde{Y}_{mi}\ge-\tau(t)~\mid~\calE_1} \le& \prob{\sum_{m=1}^M\tilde{Y}_{mi}-\expect{\sum_{m=1}^M\tilde{Y}_{mi} \, \mid\, \bm{g}_{mi}^1, \cdots, \bm{g}_{mi}^n}\ge \frac{p{ c_0(n,p)}}{4(B+\beta)}\sqrt{M} ~\mid~\calE_1}\\
\le& \exp\pth{-\frac{p^2{ c_0^2(n,p)}}{8(B+\beta)^2}} \\
\le& \frac{3-5c}{6}, 
\end{align*}
The remaining proof also follows the arguments in the proof of Theorem \ref{thm: sub-gaussian}. 
\end{proof}
% \begin{crln}[\ref{corollary: convergence all Byzantine} (Bounded Gradient Convergence Rate)]
% Suppose Assumptions \ref{ass: 1 lower bound}, \ref{ass: 2 smmothness}, and \ref{ass: Bounded stochastic gradient} hold, the number of Byzantine clients $\tau(t)\ge0$, and the learning rate is set as $\eta=\frac{1}{\sqrt{dT}}$. Then, by running Algorithm\ref{alg: alg 1} with $\hat{\bm{g}}_{mi}(t)$ for $T$ iterations, when the system adversary is adaptive or the system adversary is static but with $\tau(t)\le \frac{2}{p^2}\log\frac{6}{c}$, we have
% \begin{align*}
%      \frac{1}{T}\sum_{t=0}^{T-1}\expect{\|\nabla F(w(t))\|_1}\le\frac{1}{c}\left[\frac{\pth{F(w(0))-F^*}\sqrt{d}}{\sqrt{T}}+\frac{L\sqrt{d}}{2\sqrt{T}}+2 d\frac{c_0}{\sqrt{M}}+4d\frac{(B+\beta)\sum_{t=0}^{T-1}\tau(t)}{pTM}\right].
% \end{align*}

% On the other hand, when the system adversary is static with $\tau(t) >\frac{2}{p^2}\log\frac{6}{c}$, we have
% \begin{align*}
%     \frac{1}{T}\sum_{t=0}^{T-1}\expect{\|\nabla F(w(t))\|_1}\le\frac{1}{c}\left[\frac{\pth{F(w(0))-F^*}\sqrt{d}}{\sqrt{T}}+\frac{L\sqrt{d}}{2\sqrt{T}}+2 d\frac{c_0}{\sqrt{M}}+6d\frac{(B+\beta)\sum_{t=0}^{T-1}\tau(t)}{TM}\right].
% \end{align*}
% where $0<c<\frac{3}{5}$ is some given constant, and $c_0: = \max\sth{\sqrt{\frac{8\sigma^2}{n} \log \frac{6}{c}} ,\, \sqrt{\frac{8\pth{B+\beta}^2}{p^2} \log \frac{6}{3-5c}}}$.
% \end{crln}
\begin{proof}[\bf Proof of Corollary \ref{corollary: convergence all Byzantine} (Bounded Gradient Convergence Rate)]
This proof follows from Theorem \ref{thm: convergence subgaussian}. We also consider two cases here.
% Let $R$ be the random time with a probability mass function
% \begin{align*}
%     \prob{R=k} = \frac{\eta_k}{\sum_{t=0}^{T-1}\eta_t},\quad k=0,\ldots,T-1.
% \end{align*}
\noindent\underline{First}, when the system adversary is adaptive or the system adversary is static but with $\tau(t)\le \frac{2}{p^2}\log\frac{6}{c}$, plug in $\abth{{ \nabla F_i}(w(t))} \ge \frac{2(B+\beta)}{pM}\tau(t)+ \frac{2(B+\beta)}{p}\sqrt{\frac{2}{M}\log\frac{6}{3-5c}}+2\sigma\sqrt{\frac{2\log\frac{6}{c}}{{Mn}}}$, we get
\begin{align*}
    \expect{\|\nabla F(w(R))\|_1}\le \frac{1}{c}\left[
    \frac{F(w(0)) - F^*}{\sum_{t=0}^{T-1}\eta_t} + \frac{Ld\sum_{t=0}^{T-1}\eta_t^2}{2\sum_{t=0}^{T-1}\eta_t} + \frac{4 d(B+\beta)}{p}\sqrt{\frac{2}{M}\log\frac{6}{3-5c}}\right.&\\
    \left.+4\sigma d\sqrt{\frac{2\log\frac{6}{c}}{{Mn}}} +4d\frac{(B+\beta)\sum_{t=0}^{T-1}\eta_t\tau(t)}{pM\sum_{t=0}^{T-1}\eta_t}\right]&
\end{align*}
% \begin{align*}
%      \frac{1}{T}\sum_{t=0}^{T-1}\expect{\|\nabla F(w(t))\|_1}\le\frac{1}{c}\left[\frac{\pth{F(w(0))-F^*}\sqrt{d}}{\sqrt{T}}+\frac{L\sqrt{d}}{2\sqrt{T}}+4d\frac{(B+\beta)\sum_{t=0}^{T-1}\tau(t)}{pTM}\right.&\\
%      \left.+\frac{4d(B+\beta)}{p}\sqrt{\frac{2}{M}\log\frac{6}{3-5c}}+4d\sigma\sqrt{\frac{2\log\frac{6}{c}}{{Mn}}}\right]&\\
% \end{align*}

\noindent\underline{Second}, when the system adversary is static with $\tau(t) >\frac{2}{p^2}\log\frac{6}{c}$, plug in $\abth{{ \nabla F_i}(w(t))} \ge \frac{3(B+\beta)}{M}\tau(t)+ \frac{B+\beta}{2\sqrt{2\pi}}\exp\pth{-\frac{n}{2}}+ \frac{2(B+\beta)}{p}\sqrt{\frac{2}{M}\log\frac{6}{3-5c}}+2\sigma\sqrt{\frac{2\log\frac{6}{c}}{{Mn}}}$, we get
\begin{align*}
    \expect{\|\nabla F(w(R))\|_1}\le \frac{1}{c}\left[
    \frac{F(w(0)) - F^*}{\sum_{t=0}^{T-1}\eta_t} + \frac{Ld\sum_{t=0}^{T-1}\eta_t^2}{2\sum_{t=0}^{T-1}\eta_t} + \frac{4 d(B+\beta)}{p}\sqrt{\frac{2}{M}\log\frac{6}{3-5c}}\right.&\\
    \left.+4\sigma d\sqrt{\frac{2\log\frac{6}{c}}{{Mn}}} +4d\frac{(B+\beta)\sum_{t=0}^{T-1}\eta_t\tau(t)}{pM\sum_{t=0}^{T-1}\eta_t}\right]&
\end{align*}
% \begin{align*}
%     \frac{1}{T}\sum_{t=0}^{T-1}\expect{\|\nabla F(w(t))\|_1}\le\frac{1}{c}\left[\frac{\pth{F(w(0))-F^*}\sqrt{d}}{\sqrt{T}}+\frac{L\sqrt{d}}{2\sqrt{T}}+6d\frac{(B+\beta)\sum_{t=0}^{T-1}\tau(t)}{TM}\right.&\\
%     \left.+\frac{4d(B+\beta)}{p}\sqrt{\frac{2}{M}\log\frac{6}{3-5c}}+4d\sigma\sqrt{\frac{2\log\frac{6}{c}}{{Mn}}}\right]&\\
% \end{align*}
\end{proof}
\begin{proof}[\bf Proof of Corollary \ref{corollary: main text learning rate}, \ref{corollary: gaussian learning rate}, and \ref{corollary: bounded learning rate}]
    For a constant learning rate $\eta_t=\frac{1}{\sqrt{dT}},$ plug it back in the corresponding inequalities, and we get the results.

    On the other hand, for a decaying learning rate $\eta_t=\frac{1}{\sqrt{d(t+1)}},$ we know that $\sum_{t=0}^{T-1}=\Theta\pth{\sqrt{T}}$ and $\sum_{t=0}^{T-1} \eta_t^2=\Theta\pth{\log T},$ plug them back in the corresponding inequalities, and we get the results in asymptotic.
\end{proof}
\section{Implementation Details and Additional Experiments}
\label{app: experiments general}
\subsection{Implementation Details}
\label{app:experiments}
% \vspace{0.5cm}
\subsubsection{Datasets and preprocessing}
\label{sec: preprocessing}
\begin{itemize}
    \item {\bf MNIST \cite{lecun2009mnist}.} MNIST contains $60,000$ training images and $10,000$ testing images of $10$ classes.
    \item {\bf CIFAR-10 \cite{krizhevsky2009learning}.}  CIFAR-10 contains $50,000$ training images and $10,000$ testing images of $10$ classes.
\end{itemize}
\noindent{\bf Implementation.} We build our codes upon PyTorch \cite{paszke2019pytorch}. We run all the experiments with 8 GPUs of RTX A5000.
\subsubsection{Parameters}
\noindent{\bf Communication rounds}: $1500$ for both datasets in the section of client sampling, respectively, unless otherwise noted.

\noindent{\bf Dataset partition:} Clients' local datasets are evenly partitioned into balanced subsets. 
In Section \ref{sec: experiments alg 1} and \ref{sec: alg 2 learning}, we let each client own images from only two classes, which create a high non identical distribution. 
To characterize a different kind of non-IID distribution, we let clients' local data follow distribution with a concentration $\alpha=1$ in Section \ref{sec: alg 2 Byzantine}. Fig.\ \,\ref{fig: dirichlet} visualizes the impacts of different concentration parameter $\alpha$ on data distributions.
As $\alpha$ decreases, the local datasets become more and more non-IID across different clients.

\begin{figure}[!htb]
    \centering
    \includegraphics[width=\linewidth]{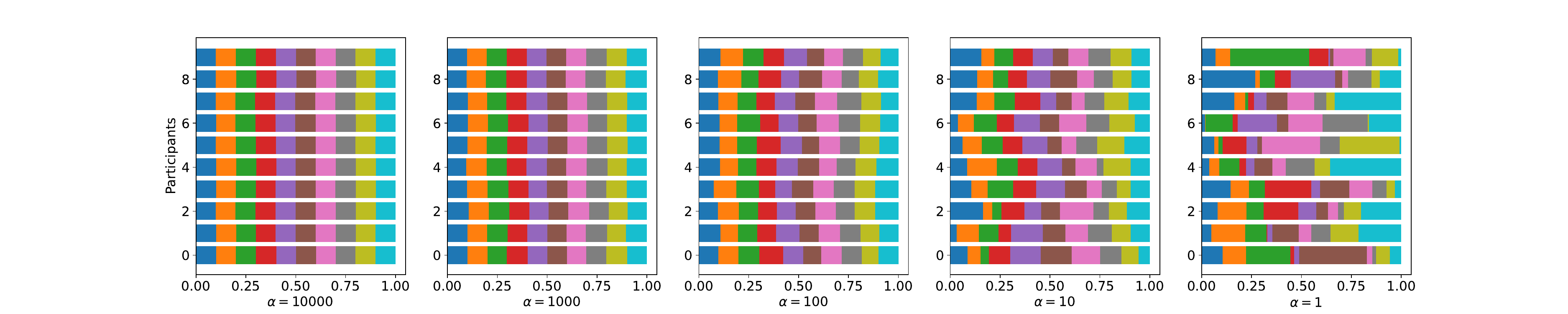}
    \caption{Dirichlet distribution with different concentration $\alpha$s}
    \label{fig: dirichlet}
\end{figure}

\noindent {\bf Mini-batch size.} We compare the peak performances of $\beta$-stochastic sign SGD under different mini-batch sizes through MLP. It is observed in Table \ref{table: batch size} that the Algorithm \ref{alg: alg 2} is not sensitive to mini-batch size $n$. This meets Remark \ref{rmk: convergence results}. 

\begin{table}
\centering
\begin{tabular}{ccc}
\hline
 Mini-batch Size & MNIST  & CIFAR-10 \\\hline
32      & 89.2\% & 46.03\%  \\
64      & 88.6\% & 46.68\%  \\
128     & 89.8\% & 46.78\%  \\
256     & 91.8\% & 46.54\% \\ \hline
\end{tabular}
\caption{Testing results on two datasets with different mini-batch sizes .}
\label{table: batch size}
\end{table}

\noindent{\bf Hyper parameters}: Mini-batch size is set as $n=32$ for both datasets. We consider a decaying learning rate of type $\eta_t=\frac{\eta_0}{\sqrt{t+1}}$ in Section \ref{sec: experiments alg 1} and \ref{sec: alg 2 learning}, while a constant learning rate $\eta_t=\eta_0$ in Section \ref{sec: alg 2 Byzantine} , and the initial choices are tuned through grid search. Specifically, $\eta_0\in\sth{0.0001,0.001, 0.006, 0.01, 0.03, 0.1}$, $B\in\sth{0.001,0.01,0.1,1}$ for $\beta$-Stochastic Sign SGD. 

\subsection{Test accuracy in Section \ref{sec: alg 2 learning}}
The observations in Section \ref{sec: alg 2 Byzantine} are consistent with Fig.\,\ref{fig: CIFAR learning acc}: 1) signSGD attains the worst performance; 2) an accuracy drop with the increase of privacy protection $\beta;$ 3) comparable accuracy between FedSGD and $\beta$-StoSign when privacy free.
% The observation also rings true in Appendix \ref{app: mnist exp}.
% \begin{figure}[htbp]
% \floatconts
% {fig: CIFAR learning acc}% label for whole figure
% {\caption{Test accuracy comparisons on CIFAR-10 data set under non-IID data}}% caption for whole figure
% {%
% \subfigure[$\beta=0$]{%
% \label{fig: CIFAR beta 0 acc}% label for this sub-figure
% \includegraphics[width=\linewidth]{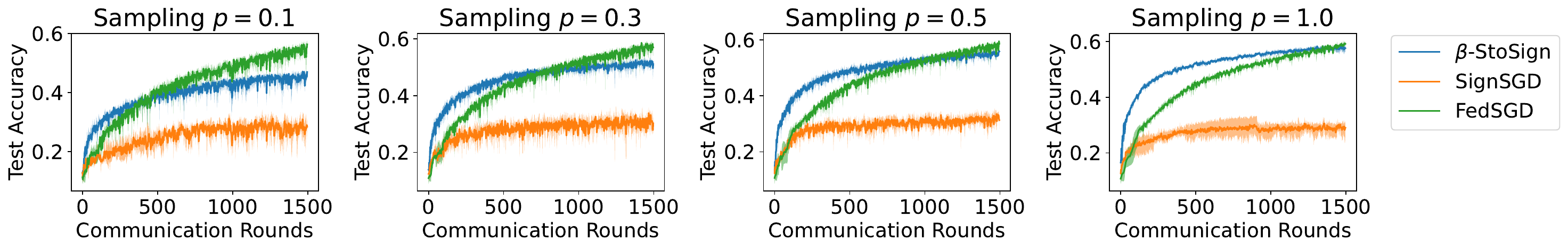}
% }
% \qquad % space out the images a bit
% \subfigure[$\beta=B$]{%
% \label{fig: CIFAR beta B acc}% label for this sub-figure
% \includegraphics[width=\linewidth]{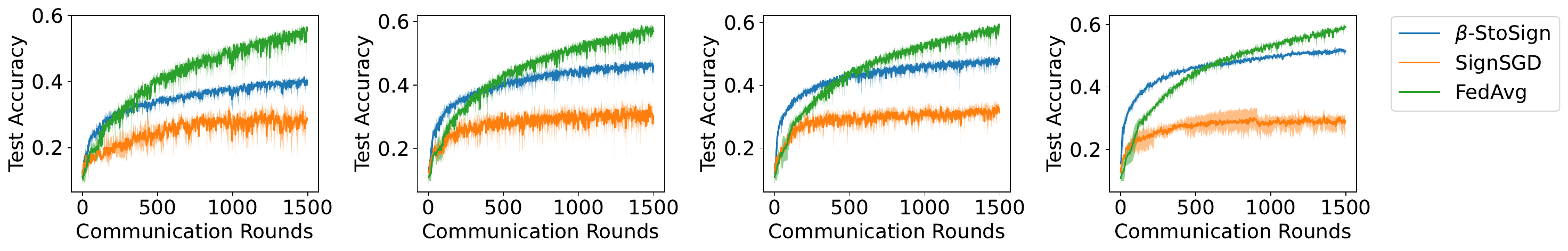}
% }
% \subfigure[$\beta=10B$]{%
% \label{fig: CIFAR beta 10B acc}% label for this sub-figure
% \includegraphics[width=\linewidth]{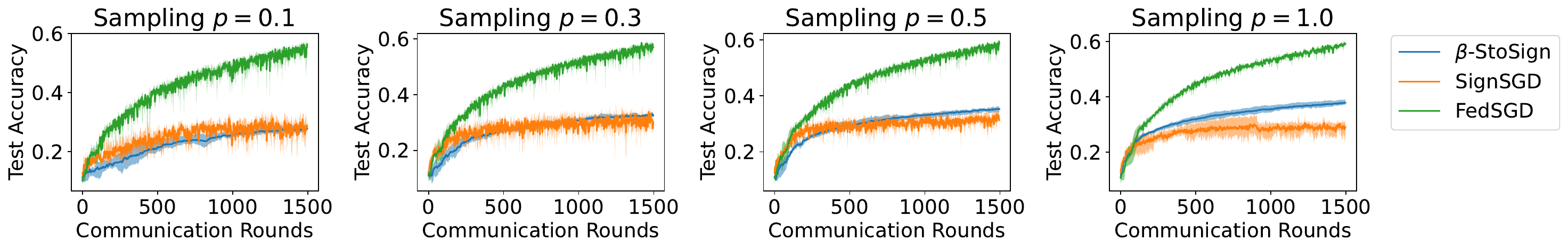}
% }
% }
% \end{figure}
%
%
% \begin{figure}[htbp]
% \floatconts
% {fig: MNIST learning acc}% label for whole figure
% {\caption{Test accuracy comparisons on MNIST data set under non-IID data}}% caption for whole figure
% {%
% \subfigure[$\beta=0$]{%
% \label{fig: MNIST beta 0 acc}% label for this sub-figure
% \includegraphics[width=\linewidth]{}
% }
% \qquad % space out the images a bit
% \subfigure[$\beta=B$]{%
% \label{fig: MNIST beta B acc}% label for this sub-figure
% \includegraphics[width=\linewidth]{}
% }
% \subfigure[$\beta=10B$]{%
% \label{fig: MNIST beta 10B acc}% label for this sub-figure
% \includegraphics[width=\linewidth]{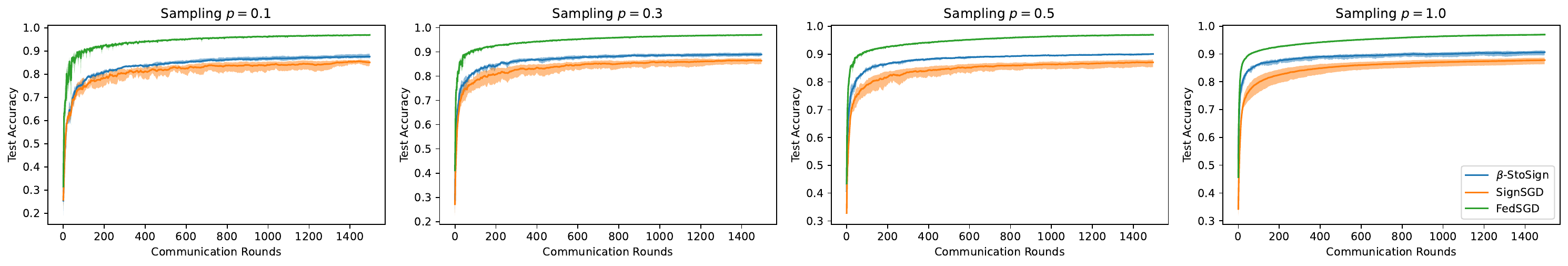}
% }
% }
% \end{figure}

\subsection{Byzantine adversary descriptions}
\label{app: byzantine}
In this section, we describe the Byzantine adversaries. We use the MLP network, Dirichlet distribution with concentration $\alpha=1$, and the same parameter settings as in Section \ref{sec: alg 2 learning} on MNIST dataset. The aggregation-rule-specific parameters are illustrated in the following part. All the experiment results are collected with $5$ repetitions.
\begin{itemize}
    \item {\bf Label flipping}: Suppose original label is $x$, the adversary will replace it with $9-x$;
    \item {\bf Inner Product Manipulation}: The adversaries send $-\frac{\gamma}{|\calN|}\sum_{i\in\calN}\nabla f(\bm{w}_i)$, instead of honest messages, to mislead the parameter server, where $\epsilon$ is the strength of the adversary. Let $\gamma=0.1$.
    \item {\bf A Little is Enough}: The adversaries estimate the benign clients' mean $\mu_\calN$ and standard deviation $\sigma_{\calN}$. Then, they will construct new messages as $\mu_{\calN}+z\sigma_{\calN}$ and upload to the parameter server, where $z$ is the strength of the adversary. We choose $z$ according to \cite{baruch2019little}:
    \begin{align*}
        z=\max_z\pth{\Phi(z)<\frac{M-s}{M}},
    \end{align*}
\end{itemize}
where $z=\lfloor\frac{M}{2}+1\rfloor-\abth{\calB(t)}$, and $\Phi$ is the cumulative distribution function of standard normal distribution. For us, $z\approx 0.5$.

\begin{figure}[!htb]


    \centering
    \begin{subfigure}[b]{\textwidth}
        \caption{$\beta=0$}
        \label{fig: CIFAR beta 0 acc}
        \includegraphics[width=\linewidth]{fig/Learning_on_CIFAR10_beta_0_acc.pdf}
    \end{subfigure}
    \begin{subfigure}[b]{\textwidth}
        \caption{$\beta=B$}
        \label{fig: CIFAR beta B acc}
        \includegraphics[width=\linewidth]{fig/Learning_on_CIFAR10_beta_1_acc.pdf}
    \end{subfigure}
    \begin{subfigure}[b]{\textwidth}
        \caption{$\beta=10B$}
        \label{fig: CIFAR beta 10B acc}
        \includegraphics[width=\linewidth]{fig/Learning_on_CIFAR10_beta_10_acc.pdf}
    \end{subfigure}
    \caption{Test accuracy comparisons on CIFAR-10 data set under non-IID data}
    \label{fig: CIFAR learning acc}
\end{figure}
\begin{figure}[!htb]


    \centering
    \begin{subfigure}[b]{\textwidth}
    \caption{$\beta=0$}
        \includegraphics[width=\linewidth]{fig/Learning_on_MNIST_beta_10_acc.pdf}
    \end{subfigure}
    \begin{subfigure}[b]{\textwidth}
    \caption{$\beta=B$}
        \includegraphics[width=\linewidth]{fig/Learning_on_MNIST_beta_10_acc.pdf}
    \end{subfigure}
    \begin{subfigure}[b]{\textwidth}
    \caption{$\beta=10B$}
        \includegraphics[width=\linewidth]{fig/Learning_on_MNIST_beta_10_acc.pdf}
    \end{subfigure}
    \caption{Test accuracy comparisons on MNIST data set under non-IID data}
    \label{fig: MNIST learning acc}
\end{figure}

% In Section \ref{sec: empirical verification}, we present the performance of our compressor under flipping sign attacks. For sign-bit messages, this is the worst-case scenario as adversaries' messages cannot escape a binary value. Otherwise, it will be detected by PS and filtered out. 

% We consider a milder condition than the sign flipping attacks for a fair of competition.

\end{document}